\newcommand{\nc}{\newcommand}
\theoremstyle{definition}
\newtheorem{theorem}{Theorem}[section]
\newtheorem{proposition}[theorem]{Proposition}
\newtheorem{definition}[theorem]{Definition}
\theoremstyle{remark}
\newtheorem{example}{Example}[section]
\newtheorem{remark}{Remark}[section]
\nc\remove[1]{}
\nc\bfa{{\boldsymbol a}}\nc\bfA{{\mathbf A}}\nc\cA{{\mathcal A}}
\nc\bfb{{\boldsymbol b}}\nc\bfB{{\mathbf B}}\nc\cB{{\mathcal B}}
\nc\bfc{{\boldsymbol c}}\nc\bfC{{\mathbf C}}\nc\cC{{\mathcal C}}
\nc\bfd{{\boldsymbol d}}\nc\bfD{{\mathbf D}}\nc\cD{{\mathcal D}}\nc\sD{{\mathscr D}}
\nc\bfe{{\boldsymbol e}}\nc\bfE{{\mathbf E}}\nc\cE{{\EuScript E}}
\nc\bff{{\boldsymbol f}}\nc\bfF{{\mathbf F}}\nc\cF{{\mathcal F}}
\nc\bfg{{\boldsymbol g}}\nc\bfG{{\mathbf G}}\nc\cG{{\mathcal G}}
\nc\bfh{{\boldsymbol h}}\nc\bfH{{\mathbf H}}\nc\cH{{\mathcal H}}
\nc\bfi{{\boldsymbol i}}\nc\bfI{{\mathbf I}}\nc\cI{{\mathcal I}}
\nc\bfj{{\boldsymbol j}}\nc\bfJ{{\mathbf J}}\nc\cJ{{\mathcal J}}
\nc\bfk{{\boldsymbol k}}\nc\bfK{{\mathbf K}}\nc\cK{{\mathcal K}}
\nc\bfl{{\boldsymbol l}}\nc\bfL{{\mathbf L}}\nc\cL{{\mathcal L}}\nc\sL{{\mathscr L}}
\nc\bfm{{\boldsymbol m}}\nc\bfM{{\mathbf M}}\nc\cM{{\mathcal M}}
\nc\bfn{{\boldsymbol n}}\nc\bfN{{\mathbf N}}\nc\cN{{\mathcal N}}
\nc\bfo{{\boldsymbol o}}\nc\bfO{{\mathbf O}}\nc\cO{{\mathcal O}}
\nc\bfp{{\boldsymbol p}}\nc\bfP{{\mathbf P}}\nc\cP{{\mathcal P}}
\nc\bfq{{\boldsymbol q}}\nc\bfQ{{\mathbf Q}}\nc\cQ{{\mathcal Q}}\nc\sQ{{\mathscr Q}}
\nc\bfr{{\boldsymbol r}}\nc\bfR{{\mathbf R}}\nc\cR{{\mathcal R}}
\nc\bfs{{\boldsymbol s}}\nc\bfS{{\mathbf S}}\nc\cS{{\mathcal S}}
\nc\bft{{\boldsymbol t}}\nc\bfT{{\mathbf T}}\nc\cT{{\mathcal T}}\nc\sT{{\mathscr T}}
\nc\bfu{{\boldsymbol u}}\nc\bfU{{\mathbf U}}\nc\cU{{\mathcal U}}
\nc\bfv{{\boldsymbol v}}\nc\bfV{{\mathbf V}}\nc\cV{{\mathcal V}}
\nc\bfw{{\boldsymbol w}}\nc\bfW{{\mathbf W}}\nc\cW{{\mathcal W}}\nc\sW{{\mathscr W}}
\nc\bfx{{\boldsymbol x}}\nc\bfX{{\mathbf Z}}\nc\cX{{\EuScript X}}
\nc\bfy{{\boldsymbol y}}\nc\bfY{{\mathbf Y}}\nc\cY{{\EuScript Y}}\nc\sY{{\mathscr Y}}
\nc\bfz{{\boldsymbol z}}\nc\bfZ{{\mathbf Z}}\nc\cZ{{\mathcal Z}}\nc\sZ{{\mathscr Z}}
\def\h_q{\qopname\relax{no}{h_q}}
\def\h{\qopname\relax{no}{h}}
\def\prob{{\mathbb P}}
\title{Inhomogeneous Hypergraph Clustering with Applications}
\author{
  Pan Li \\
  Department ECE\\
  UIUC \\
  \texttt{panli2@illinois.edu} \\
  \And
    Olgica Milenkovic \\
  Department ECE\\
  UIUC \\
  \texttt{milenkov@illinois.edu} \\
}
\begin{document}

\maketitle

\begin{abstract}
Hypergraph partitioning is an important problem in machine learning, computer vision and network analytics. A widely used method for hypergraph partitioning relies on minimizing a normalized sum of the costs of partitioning hyperedges across clusters. Algorithmic solutions based on this approach assume that different partitions of a hyperedge incur the same cost. However, this assumption fails to leverage the fact that different subsets of vertices within the same hyperedge may have different structural importance. We hence propose a new hypergraph clustering technique, termed inhomogeneous hypergraph partitioning, which assigns different costs to different hyperedge cuts. We prove that inhomogeneous partitioning produces a quadratic approximation to the optimal solution if the inhomogeneous costs satisfy submodularity constraints. Moreover, we demonstrate that inhomogenous partitioning offers significant performance improvements in applications such as structure learning of rankings, subspace segmentation and motif clustering.
\end{abstract}
\vspace{-0.1in}
\section{Introduction}
\vspace{-0.1in}
Graph partitioning or clustering is a ubiquitous learning task that has found many applications in statistics, data mining, social science and signal processing~\cite{jain1999data,ng2001spectral}. In most settings, clustering is formally cast as an optimization problem that involves entities with different pairwise similarities and aims to maximize the total ``similarity'' of elements within clusters~\cite{bulo2009game,leordeanu2012efficient,liu2010robust}, or simultaneously maximize the total similarity within cluster and dissimilarity between clusters~\cite{bansal2002correlation,ailon2008aggregating,li2016motif}. Graph partitioning may be performed in an agnostic setting, where part of the optimization problem is to automatically learn the number of clusters~\cite{bansal2002correlation,ailon2008aggregating}. 

Although similarity among entities in a class may be captured via pairwise relations, in many real-world problems it is necessary to capture joint, higher-order relations between subsets of objects. From a graph-theoretic point of view, these higher-order relations may be described via hypergraphs, where objects correspond to vertices and higher-order relations among objects correspond to hyperedges. The vertex clustering problem aims to minimize the similarity across clusters and is referred to as hypergraph partitioning. Hypergraph clustering has found a wide range of applications in network motif clustering, semi-supervised learning, subspace clustering and image segmentation.~\cite{li2016motif, benson2016higher,yin2017local,zhou2006learning,hein2013total,zhang2017re,agarwal2005beyond,kim2011higher}. 

Classical hypergraph partitioning approaches share the same setup: A nonnegative weight is assigned to every hyperedge and if the vertices in the hyperedge are placed across clusters, a cost proportional to the weight is charged to the objective function~\cite{benson2016higher,zhou2006learning}. We refer to this clustering procedure as \emph{homogenous hyperedge clustering} and refer to the corresponding partition as a \emph{homogeneous partition (H-partition)}. Clearly, this type of approach prohibits the use of information regarding how different vertices or subsets of vertices belonging to a hyperedge contribute to the higher-order relation. A more appropriate formulation entails charging different costs to different cuts of the hyperedges, thereby endowing hyperedges with vector weights capturing these costs. To illustrate the point, consider the example of metabolic networks~\cite{jeong2000large}. In these networks, vertices describe metabolites while edges describe transformative, catalytic or binding relations. Metabolic reactions are usually described via equations that involve more than two metabolites, such as $M_1+ M_2 \to M_3$. Here, both metabolites $M_1$ and $M_2$ need to be present in order to complete the reaction that leads to the creation of the product $M_3$. The three metabolites play different roles: $M_1,M_2$ are reactants, while $M_3$ is the product metabolite. A synthetic metabolic network involving reactions with three reagents as described above is depicted in Figure~\ref{ileg}, along with three different partitions induced by a homogeneous, inhomogeneous and classical graph cut. As may be seen, the hypergraph cuts differ in terms of how they split or group pairs of reagents. The inhomogeneous clustering preserves all but one pairing, while the homogenous clustering splits two pairings. The graph partition captures only pairwise relations between reactants and hence, the optimal normalized cut over the graph splits six reaction triples. The differences between inhomogenous, homogenous, and pairwise-relation based cuts are even more evident for large graphs and they may lead to significantly different partitioning performance in a number of important partitioning applications.
\begin{figure}[t]
\centering
\includegraphics[trim={1.1cm 10cm 24cm 3.5cm},clip,width=.25\textwidth]{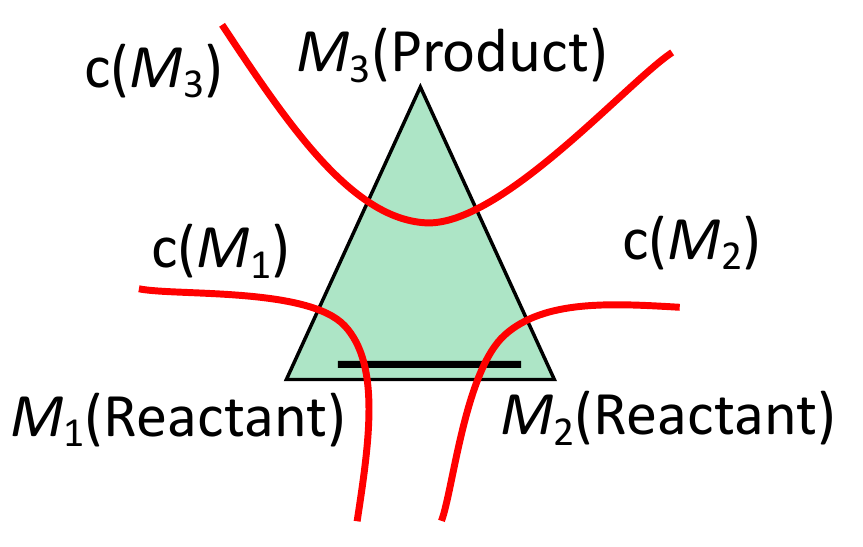}
\includegraphics[trim={2.5cm 5cm 14cm 4.2cm},clip, width=.35\textwidth]{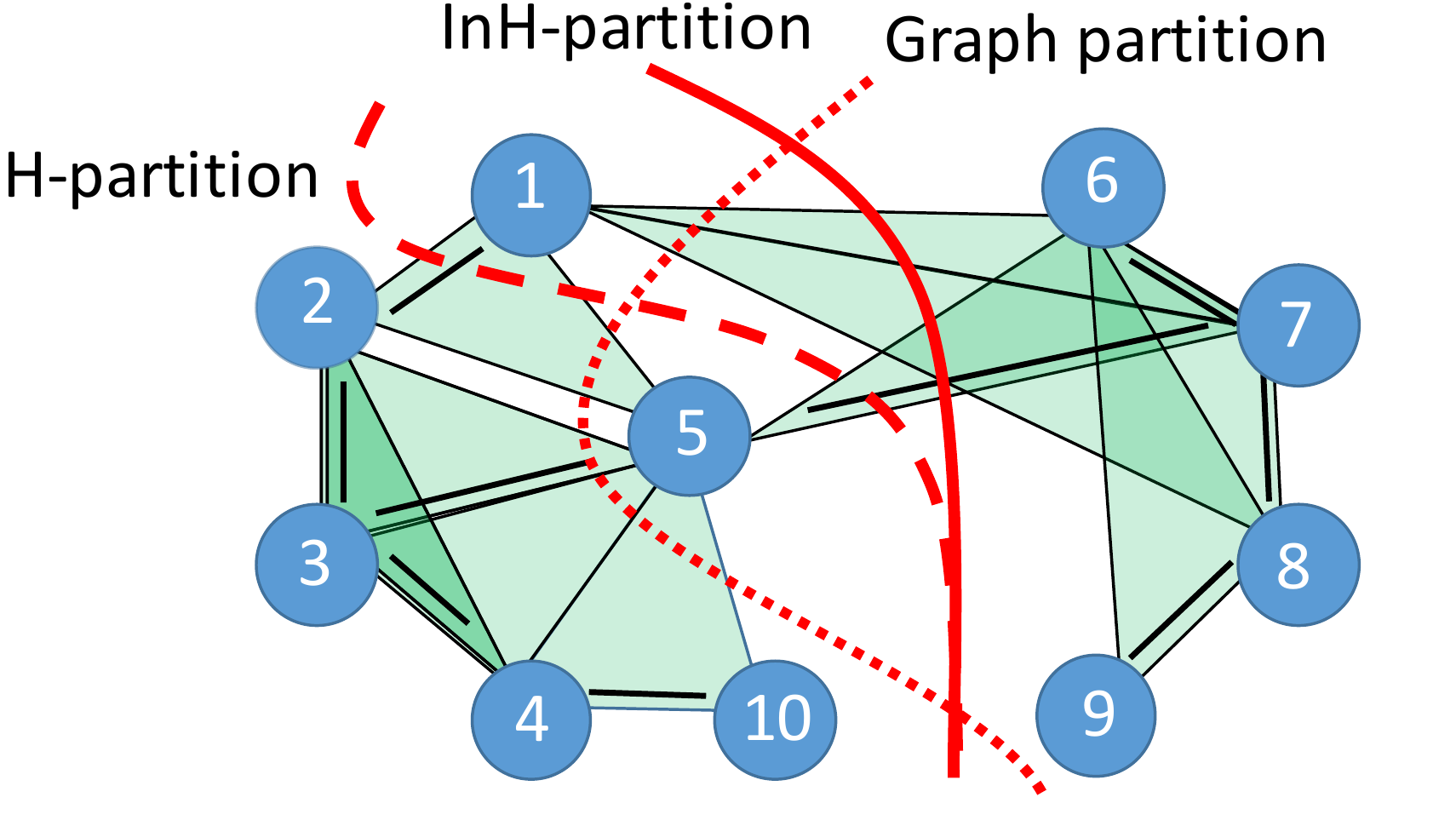}
\caption{Clusters obtained using homogenous and inhomogeneous hypergraph partitioning and graph partitioning (based on pairwise relations). Left: Each reaction is represented by a hyperedge. Three different cuts of a hyperedge are denoted by $c(M_3),c(M_1),$ and $c(M_2)$, based on which vertex is ``isolated'' by the cut. The graph partition only takes into account pairwise relations between reactants, corresponding to $w(c(M_3))=0$. The homogenous partition enforces the three cuts to have the same weight, $w(c(M_3))=w(c(M_1))=w(c(M_2)),$ while an inhomogenous partition is not required to satisfy this constraint. Right: Three different clustering results based on optimally normalized cuts for a graph partition, a homogenous partition (H-partition) and an inhomogenous partition (InH-partition) with $0.01\,w(c(M_1))\leq w(c(M_3))\leq 0.44\, w(c(M_1))$.}
\label{ileg}
\vspace{-0.25in}
\end{figure}

The problem of inhomogeneous hypergraph clustering has not been previously studied in the literature. The main results of the paper are efficient algorithms for inhomogenous hypergraph partitioning with theoretical performance guarantees and extensive testing of inhomogeneous partitioning in applications such as hierarchical biological network studies, structure learning of rankings and subspace clustering\footnote{The code for experiments can be found at https://github.com/lipan00123/InHclustering.} (All proofs and discussions of some applications are relegated to the Supplementary Material). The algorithmic methods are based on transforming hypergraphs into graphs and subsequently performing spectral clustering based on the normalized Laplacian of the derived graph. A similar approach for homogenous clustering has been used under the name of Clique Expansion~\cite{agarwal2005beyond}. However, the projection procedure, which is the key step of Clique Expansion, differs significantly from the projection procedure used in our work, as the inhomogenous clustering algorithm allows non-uniform expansion of one hyperedge while Clique Expansion only allows for uniform expansions. 
A straightforward analysis reveals that the normalized hypergraph cut problem~\cite{zhou2006learning} and the normalized Laplacian homogeneous hypergraph clustering algorithms~\cite{benson2016higher,zhou2006learning} are special cases of our proposed algorithm, where the costs assigned to the hyperedges take a very special form. Furthermore, we show that when the costs of the proposed inhomogeneous hyperedge clustering are \emph{submodular}, the projection procedure is guaranteed to find a constant-approximation solution for several graph-cut related entities. Hence, the inhomogeneous clustering procedure has the same quadratic approximation properties as spectral graph clustering~\cite{chung2007four}.
\vspace{-0.12in}
\section{Preliminaries and Problem Formulation} \label{sec:preliminaries}
\vspace{-0.12in}
A hypergraph $\mathcal{H}=(V,E)$ is described in terms of a vertex set $V=\{v_1,v_2,...,v_n\}$ and a set of hyperedges $E$. A hyperedge $e\in E$ is a subset of vertices in $V$. For an arbitrary set $S$, we let $|S|$ stand for the cardinality of the set, and use $\delta(e)=|e|$ to denote the size of a hyperedge. If for all $e\in E$, $\delta(e)$ equals a constant $\Delta$, the hypergraph is called a $\Delta$-uniform hypergraph. 

Let $2^e$ denote the power set of $e$. An inhomogeneous hyperedge (InH-hyperedge) is a hyperedge with an associated weight function $w_e: 2^{e}\rightarrow \mathbb{R}_{\geq 0}$. The weight $w_e(S)$ indicates the cost of cutting/partitioning the hyperedge $e$ into two subsets, $S$ and $e/S$. A consistent weight $w_e(S)$ satisfies the following properties: $w_e(\emptyset)=0$ and $w_e(S)=w_e(e/S)$. The definition also allows $w_e(\cdot)$ to be enforced only for a subset of $2^e$. However, for singleton sets $S=\{{v\}}\in e$, $w_e(\{v\})$ has to be specified. The degree of a vertex $v$ is defined as
$d_v = \sum_{e:\, v\in e} w_e(\{v\}),$
while the volume of a subset of vertices $S\subseteq V$ is defined as
\begin{align}\label{volumn}
\text{vol}_{\mathcal{H}}(S) = \sum_{v\in S} d_v.
\end{align} 
Let $(S,\,\bar{S})$ be a partition of the vertices $V$.
Define the hyperedge boundary of $S$ as $\partial S = \{e\in E | e\cap S\neq \emptyset,   e\cap \bar{S} \neq \emptyset\}$ and the corresponding set volume as
\begin{align}\label{cut} 
\text{vol}_{\mathcal{H}}(\partial S) = \sum_{e \in \partial S} w_{e}(e\cap S) = \sum_{e\in E} w_{e}(e\cap S),
\end{align}
where the second equality holds since $w_{e}(\emptyset) = w_{e}(e) = 0$. The task of interest is to minimize the \emph{normalized cut} NCut of the hypergraph with InH-hyperedges, i.e., to solve the following optimization problem
\begin{align}\label{ncut}
\arg\min_{S}\text{NCut}_{\mathcal{H}}(S) = \arg\min_{S} \text{vol}_{\mathcal{H}}(\partial S)\left(\frac{1}{\text{vol}_{\mathcal{H}}(S)}+\frac{1}{\text{vol}_{\mathcal{H}}(\bar{S})}\right).
\end{align}
One may also extend the notion of InH hypergraph partitioning to $k$-way InH-partition. For this purpose, we let $(S_1,S_2,...,S_k)$ be a $k$-way partition of the vertices $V$, and define the $k$-way normalized cut for inH-partition according to  
\begin{align}
\text{NCut}_{\mathcal{H}}(S_1,S_2,...,S_k) =  \sum_{i=1}^k \frac{ \text{vol}_{\mathcal{H}}(\partial S_i)}{\text{vol}_{\mathcal{H}}(S_i)}. 
\end{align}
Similarly, the goal of a $k$-way inH-partition is to minimize $\text{NCut}_{\mathcal{H}}(S_1,S_2,...,S_k)$. 
Note that if $\delta(e) = 2$ for all $e\in E$, the above definitions are consistent with those used for graphs~\cite{shi2000normalized}. 

\section{Inhomogeneous Hypergraph Clustering Algorithms} \label{sec:algs}
Motivated by the homogeneous clustering approach of~\cite{agarwal2005beyond}, we propose an inhomogeneous clustering algorithm that uses three steps: 1) Projecting each InH-hyperedge onto a subgraph; 2) Merging the subgraphs into a graph; 3) Performing classical spectral clustering based on the normalized Laplacian (described in the Supplementary Material, along with the complexity of all algorithmic steps). The novelty of our approach is in introducing the inhomogenous clustering constraints via the projection step, and stating an optimization problem that provides the provably best weight splitting for projections. All our theoretical results are stated for the NCut problem, but the proposed methods may be used as heuristics for $k$-way NCuts.

Suppose that we are given a hypergraph with inhomogeneous hyperedge weights, $\mathcal{H}=(V,E,\mathbf{w})$. For each InH-hyperedge $(e,w_e)$, we aim to find a complete subgraph $G_e = (V^{(e)}, E^{(e)},w^{(e)})$ that ``best'' represents this InH-hyperedge; here, $V^{(e)}=e$, $E^{(e)}=\{\{v,\tilde{v}\}| v,\tilde{v}\in e, v\neq\tilde{v}\}$, and $w^{(e)}: E^{(e)} \rightarrow R$ denotes the hyperedge weight vector. The goal is to find the graph edge weights that provide the best approximation to the split hyperedge weight according to:
\begin{align} \label{P1}
\min_{w^{(e)}, \beta^{(e)}}  \beta^{(e)}\;\text{s.t.}\; w_e(S)\leq \sum_{v\in S, \tilde{v}\in e/S}w_{v\tilde{v}}^{(e)} \leq \, \beta^{(e)} \; w_{e}(S),\;\text{for all $S\in 2^e$ s.t. $w_e(S)$ is defined.} 
\end{align} 
Upon solving for the weights $w^{(e)}$, we construct a graph $\mathcal{G}=(V, E_o, w),$ where $V$ are the vertices of the hypergraph, $E_o$ is the complete set of edges, and where the weights $w_{v\tilde{v}},$ are computed via
\begin{align}\label{combination}
w_{v\tilde{v}}\triangleq\sum_{e\in E}w_{v\tilde{v}}^{(e)},  \quad \forall \{v,\tilde{v}\}\in E_o.
\end{align}
This step represents the projection weight merging procedure, which simply reduces to the sum of weights of all hyperedge 
projections on a pair of vertices. Due to the linearity of the volumes~\eqref{volumn} and boundaries~\eqref{cut} of sets $S$ of vertices, for any $S\subset V$, we have
\begin{align}\label{approximation}
\text{Vol}_{\mathcal{H}}(\partial S)\leq \text{Vol}_\mathcal{G}(\partial S)\leq \beta^* \text{Vol}_{\mathcal{H}}(\partial S),\;\;
\text{Vol}_{\mathcal{H}}(S) \leq \text{Vol}_\mathcal{G}(S)\leq \beta^* \text{Vol}_{\mathcal{H}}(S),
\end{align}
where $\beta^*=\max_{e\in E}\beta^{(e)}$. Applying spectral clustering on $\mathcal{G}=(V, E_o, w)$ produces the desired partition $(S^*,\bar{S^*})$. The next result is a consequence of combining the bounds of~\eqref{approximation} with the approximation guarantees of spectral graph clustering (Theorem 1~\cite{chung2007four}).

\begin{theorem}\label{theorem1}
If the optimization problem~\eqref{P1} is feasible for all InH-hyperedges and the weights $w_{v\tilde{v}}$ obtained from~\eqref{combination} are nonnegative for all $\{v,\tilde{v}\}\in E_o$, then $\alpha^*=\text{NCut}_{\mathcal{H}}(S^*)$ satisfies
\begin{align}
(\beta^*)^3\alpha_\mathcal{H} \geq \frac{(\alpha^*)^2}{8} \geq \frac{\alpha_\mathcal{H}^2}{8}.
\end{align}
where $\alpha_\mathcal{H}$ is the optimal value of normalized cut of the hypergraph $\mathcal{H}$.
\end{theorem}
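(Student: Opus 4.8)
The plan is to transfer the two-sided volume estimates of~\eqref{approximation} through the definition of the normalized cut, obtaining a multiplicative comparison between $\text{NCut}_{\mathcal{H}}$ and the graph normalized cut on $\mathcal{G}$, and then to feed this comparison into the spectral (Cheeger-type) guarantee for $\mathcal{G}$ cited as Theorem~1 of~\cite{chung2007four}. The hypothesis that every $w_{v\tilde{v}}$ is nonnegative is exactly what makes $\mathcal{G}$ a bona fide nonnegatively weighted graph, so that its normalized Laplacian is well defined and Cheeger's inequality applies; feasibility of~\eqref{P1} for every InH-hyperedge is what guarantees $\beta^{*}=\max_{e}\beta^{(e)}<\infty$ and hence that~\eqref{approximation} is non-vacuous.

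First I would record the pointwise comparison of cuts. Fix any bipartition $(S,\bar{S})$ of $V$. Using in~\eqref{approximation} the upper bound on $\text{Vol}_{\mathcal{G}}(\partial S)$ together with $\text{Vol}_{\mathcal{G}}(S)\ge \text{Vol}_{\mathcal{H}}(S)$ and $\text{Vol}_{\mathcal{G}}(\bar{S})\ge \text{Vol}_{\mathcal{H}}(\bar{S})$ in the denominators (and the reverse choices for the other direction) yields
\[
\tfrac{1}{\beta^{*}}\,\text{NCut}_{\mathcal{H}}(S)\ \le\ \text{NCut}_{\mathcal{G}}(S)\ \le\ \beta^{*}\,\text{NCut}_{\mathcal{H}}(S).
\]
Evaluating the right-hand inequality at a minimizer of $\text{NCut}_{\mathcal{H}}$ gives $\alpha_{\mathcal{G}}\le \beta^{*}\alpha_{\mathcal{H}}$, where $\alpha_{\mathcal{G}}$ denotes the optimal normalized cut of $\mathcal{G}$.

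Next I would invoke the spectral guarantee on $\mathcal{G}$. Let $\lambda$ be the second smallest eigenvalue of the normalized Laplacian of $\mathcal{G}$. The Shi--Malik Rayleigh-quotient relaxation~\cite{shi2000normalized} gives $\lambda\le \text{NCut}_{\mathcal{G}}(S)$ for every $S$, hence $\lambda\le\alpha_{\mathcal{G}}$; and the sweep step of spectral clustering, via the Cheeger bound of~\cite{chung2007four}, returns the partition $(S^{*},\bar{S^{*}})$ with $\text{NCut}_{\mathcal{G}}(S^{*})\le 2\sqrt{2\lambda}$, i.e. $\text{NCut}_{\mathcal{G}}(S^{*})^{2}\le 8\lambda$. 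Chaining these estimates with the left-hand inequality of the display above evaluated at $S^{*}$,
\[
(\alpha^{*})^{2}=\text{NCut}_{\mathcal{H}}(S^{*})^{2}\le (\beta^{*})^{2}\,\text{NCut}_{\mathcal{G}}(S^{*})^{2}\le 8(\beta^{*})^{2}\lambda\le 8(\beta^{*})^{2}\alpha_{\mathcal{G}}\le 8(\beta^{*})^{3}\alpha_{\mathcal{H}},
\]
which is the first asserted inequality. The second, $(\alpha^{*})^{2}/8\ge \alpha_{\mathcal{H}}^{2}/8$, is immediate since $\alpha^{*}=\text{NCut}_{\mathcal{H}}(S^{*})\ge \min_{S}\text{NCut}_{\mathcal{H}}(S)=\alpha_{\mathcal{H}}$.

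The bulk of the argument is routine bookkeeping: deriving the first display is just a matter of tracking which side of each bound in~\eqref{approximation} to apply in the numerator versus the denominator. The one point that needs care is importing the Cheeger constant in exactly the normalization used here — confirming that the version of Theorem~1 of~\cite{chung2007four} being quoted delivers precisely the factor $8$ (equivalently $\text{NCut}_{\mathcal{G}}(S^{*})\le 2\sqrt{2\lambda}$) for the cut extracted from the spectral embedding, and that this $S^{*}$ is a proper nonempty bipartition so that $\text{NCut}_{\mathcal{H}}(S^{*})$ is well defined. I expect that to be the only genuine obstacle, and it amounts to citing the spectral clustering guarantee in the precise form matching the normalized cut~\eqref{ncut}.
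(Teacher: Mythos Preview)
Your proposal is correct and follows essentially the same route as the paper's proof: the paper simply invokes the two-sided volume comparison~\eqref{approximation} together with its auxiliary Theorem~\ref{spectral} (the NCut-formulated Cheeger inequality $\alpha_{\mathcal{G}}\ge\lambda_{\mathcal{G}}\ge\alpha^{2}/8$, derived from~\cite{chung2007four} via the conductance bound $\psi_{\mathcal{G}}(S^{*})\ge\tfrac{1}{2}\text{NCut}_{\mathcal{G}}(S^{*})$) and then ``combines'' them without spelling out the chain of inequalities. Your version makes that chain explicit, and your caveat about verifying the precise factor~$8$ is exactly the content of the paper's proof of Theorem~\ref{spectral}.
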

There are no guarantees that the $w_{v\tilde{v}}$ will be nonnegative: The optimization problem~\eqref{P1} may result in solutions $w^{(e)}$ that are negative. The performance of spectral methods in the presence of negative edge weights is not well understood~\cite{kunegis2010spectral,knyazev2017signed}; hence, it would be desirable to have the weights $w_{v\tilde{v}}$ generated from~\eqref{combination} be nonnegative. Unfortunately, imposing nonngativity constraints in the optimization problem may render it infeasible. In practice, one may use $(w_{v\tilde{v}})_+=\max\{w_{v\tilde{v}},0\}$ to remove negative weights (other choices, such as $(w_{v\tilde{v}})_+=\sum_{e}(w_{v\tilde{v}}^{(e)})_+$ do not appear to perform well). This change invalidates the theoretical result of Theorem~\ref{theorem1}, but provides solutions with very good empirical performance. The issues discussed are illustrated by the next example.
\begin{example} \label{example1}
Let $e=\{1,2,3\},\,(w_e(\{1\}),w_e(\{2\}),w_{e}(\{3\}))=(0,0,1)$. The solution to the weight optimization problem is $(\beta^{(e)},w_{12}^{(e)},w_{13}^{(e)},w_{23}^{(e)})=(1,-1/2, 1/2, 1/2)$. If all components $w^{(e)}$ are constrained to be nonnegative, the optimization problem is infeasible. Nevertheless, the above choice of weights is very unlikely to be encountered in practice, as $w_e(\{1\}),w_e(\{2\})=0$ indicates that vertices $1$ and $2$ have no relevant connections within the given hyperedge $e$, while $w_e(\{3\})=1$ indicates that vertex $3$ is strongly connected to $1$ and $2$, which is a contradiction. Let us assume next that the negative weight is set to zero. Then, we adjust the weights $((w_{12}^{(e)})_+,w_{13}^{(e)},w_{23}^{(e)}) = (0,1/2,1/2),$ which produce clusterings ((1,3)(2)) or ((2,3)(1)); both have zero costs based on $w_e$. 
\end{example}

Another problem is that arbitrary choices for $w_e$ may cause the optimization problem to be infeasible~\eqref{P1} even if negative weights of $w^{(e)}$ are allowed, as illustrated by the following example.  
\begin{example} \label{example2}
Let $e=\{1,2,3,4\}$, with $w_{e}(\{1,4\})=w_{e}(\{2,3\}) =1$ and $w_{e}(S)=0$ for all other choices of sets $S$. To force the weights to zero, we require $w_{v\tilde{v}}^{(e)}=0$  for all pairs $v\tilde{v}$, which fails to work for $w_{e}(\{1,4\}),w_{e}(\{2,3\})$. For a hyperedge $e$, the degrees of freedom for $w_e$ are $2^{\delta(e)-1}-1$, as two values of $w_e$ are fixed, while the other values are paired up by symmetry. When $\delta(e)>3$, we have ${\delta(e)\choose 2}< 2^{\delta(e)-1}-1$, which indicates that the problem is overdetermined/infeasible.
\end{example} 

In what follows, we provide sufficient conditions for the optimization problem to have a feasible solution with nonnegative values of the weights $w^{(e)}$. 
Also, we provide conditions for the weights $w_e$ that result in a small constant $\beta^*$ and hence allow for quadratic approximations of the optimum solution. Our results depend on the availability of information about the weights $w_e$: In practice, the weights have to be inferred from observable data, which may not suffice to determine more than the weight of singletons or pairs of elements. 

\textbf{Only the values of $w_e(\{v\})$ are known.}\label{basicver}
In this setting, we are only given information about how much each node contributes to a higher-order relation, i.e., we are only given the values of $w_e(\{v\})$, $v \in V$. Hence, we have  $\delta(e)$ costs (equations) and $\delta(e)\geq 3$ variables, which makes the problem underdetermined and easy to solve. The optimal $\beta^e =1$ is attained by setting for all edges $\{v,\tilde{v}\}$
\begin{align}\label{basicform}
w_{v\tilde{v}}^{(e)} = \frac{1}{\delta(e)-2}\left[w_{e}(\{v\})+w_{e}(\{\tilde{v}\})\right]-\frac{1}{(\delta(e)-1)(\delta(e)-2)}\sum_{v'\in e}w_{e}(\{v'\}).
\end{align}
The components of $w_e(\cdot)$ with positive coefficients in~\eqref{basicver} are precisely those associated with the endpoints of edges $v\tilde{v}$. Using simple algebraic manipulations, one can derive the conditions under which the values $w_{v\tilde{v}}^{(e)}$ are nonnegative, and these are presented in the Supplementary Material. 

The solution to~\eqref{basicform} produces a perfect projection with $\beta^{(e)}=1$. Unfortunately, one cannot guarantee that the solution is nonnegative. Hence, the question of interest is to determine for what types of cuts can one can deviate from a perfect projection but ensure that the weights are nonnegative. The proposed approach is to set the unspecified values of $w_e(\cdot)$ so that the weight function becomes submodular, which guarantees nonnegative weights $w_{v\tilde{v}}^e$ that can constantly approximate $w_e(\cdot)$, although with a larger approximation constant $\beta$.

\textbf{Submodular weights $w_e(S)$.} 
As previously discussed, when $\delta(e)>3$, the optimization problem~\eqref{P1} may not have any feasible solutions for arbitrary choices of weights. However, we show next that if the weights $w_e$ are \emph{submodular}, then~\eqref{P1} always has a nonnegative solution. We start by recalling the definition of a submodular function.
\begin{definition}
A function $w_e: 2^e\rightarrow \mathbb{R}_{\geq 0}$ that satisfies 
 \begin{align*}
w_e(S_1)+w_e(S_2)\geq  w_e(S_1\cap S_2)+ w_e(S_1\cup S_2)\quad \text{for all $S_1,S_2\in 2^e$},
 \end{align*}
is termed submodular.
\end{definition}

\begin{theorem} \label{solvability}
If $w_e$ is submodular, then 
\begin{align} \label{soluP2}
&w_{v\tilde{v}}^{*(e)}=\sum_{S\in 2^e/\{\emptyset, e\}}\left[\frac{w_e(S)}{2|S|(\delta(e)-|S|)}1_{|\{v,\tilde{v}\}\cap S|=1} \right.  \\
 -&  \left.\frac{w_e(S)}{2(|S|+1)(\delta(e)-|S|-1)}1_{|\{v,\tilde{v}\}\cap S|=0}-\frac{w_e(S)}{2(|S|-1)(\delta(e)-|S|+1)}1_{|\{v,\tilde{v}\}\cap S|=2}\right] \nonumber
\end{align}
is nonnegative. For $2\leq \delta(e)\leq 7$, the function above is a feasible solution for the optimization problem~\eqref{P1} with parameters $\beta^{(e)}$ listed in Table~\ref{optbeta}. 
\begin{table}[h]
\centering
\caption{Feasible values of $\beta^{(e)}$ for $\delta^{(e)}$} \vspace{0.2cm}
\label{optbeta}
\begin{tabular} {c|c|c|c|c|c|c}
$|\delta(e)|$  & 2 & 3 & 4 & 5 & 6 & 7 \\
\hline 
$\beta$ & 1 & 1 & $3/2$ & 2 & 4 & 6 
\end{tabular}
\end{table}
\end{theorem}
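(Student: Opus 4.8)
The plan is to prove the nonnegativity claim by a direct grouping argument, and the feasibility claim by reducing it, via linearity, to a finite check on the extreme rays of the cone of symmetric submodular functions.

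\textbf{Nonnegativity of~\eqref{soluP2}.} Fix a pair $\{v,\tilde v\}\subseteq e$. Extend the sum in~\eqref{soluP2} to all $S\subseteq e$ by setting $w_e(\emptyset)=w_e(e)=0$ (this changes nothing) and group the terms into blocks $\{S',\,S'\cup\{v\},\,S'\cup\{\tilde v\},\,S'\cup\{v,\tilde v\}\}$ indexed by $S'\subseteq e/\{v,\tilde v\}$. Inspection of~\eqref{soluP2} shows that inside a block the four coefficients of $w_e(\cdot)$ are $-\mu,+\mu,+\mu,-\mu$ with $\mu=\frac{1}{2(|S'|+1)(\delta(e)-|S'|-1)}>0$, so the block contributes $\mu\,\Gamma_{S'}(v,\tilde v)$, where $\Gamma_{S'}(v,\tilde v):=w_e(S'\cup\{v\})+w_e(S'\cup\{\tilde v\})-w_e(S')-w_e(S'\cup\{v,\tilde v\})$ is exactly the submodularity gap for $S_1=S'\cup\{v\}$, $S_2=S'\cup\{\tilde v\}$ and hence nonnegative. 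Summing over $S'$ both proves $w_{v\tilde v}^{*(e)}\ge0$ and yields the representation
\begin{align}\label{nablarep}
w_{v\tilde v}^{*(e)}=\sum_{S'\subseteq e/\{v,\tilde v\}}\frac{\Gamma_{S'}(v,\tilde v)}{2(|S'|+1)\,(\delta(e)-|S'|-1)},
\end{align}
which exhibits $w_{v\tilde v}^{*(e)}$ as a nonnegative combination of submodularity gaps.

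\textbf{Feasibility.} Fix $\emptyset\neq T\subsetneq e$ and put $p=|T|$, $q=|e/T|$. Substituting~\eqref{soluP2} into $\Sigma(T):=\sum_{v\in T,\tilde v\in e/T}w_{v\tilde v}^{*(e)}$ and interchanging the order of summation gives $\Sigma(T)=\sum_{S}w_e(S)\,g_T(S)$, where $g_T(S)$ is an explicit rational number depending only on the four cardinalities $(|T\cap S|,|T/S|,|S/T|,|e/(T\cup S)|)$ into which $T$ and $S$ partition $e$; one computes $g_T(T)=g_T(e/T)=\tfrac12$, so, using $w_e(T)=w_e(e/T)$, the coefficient of $w_e(T)$ in $\Sigma(T)$ is exactly $1$. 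Thus the feasibility constraint of~\eqref{P1} at $T$ is equivalent to $0\le\Sigma(T)-w_e(T)\le(\beta^{(e)}-1)\,w_e(T)$. Both inequalities are linear and homogeneous in $w_e$, and~\eqref{soluP2} is linear in $w_e$, so it suffices to verify them on a set of generators of the polyhedral cone $\mathcal{C}_{\delta(e)}$ of symmetric submodular functions with $w_e(\emptyset)=0$ (every such function is automatically nonnegative, since $0=w_e(\emptyset)+w_e(e)\le w_e(S)+w_e(e/S)=2w_e(S)$). For the lower bound, \eqref{nablarep} shows $\Sigma(T)$ is a nonnegative combination of the gaps $\Gamma_{S'}$, and one checks that $\Sigma(T)-w_e(T)$ is again such a combination (equivalently, a nonnegative combination of submodularity inequalities); for the upper bound, one evaluates~\eqref{soluP2} and the ratio $\Sigma(T)/w_e(T)$ generator by generator, the maximum over generators and over $T$ being the value of $\beta^{(e)}$ in Table~\ref{optbeta}.

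\textbf{Where the work is.} The routine parts are the nonnegativity argument and the reduction to a per-generator check; the substantive part is the feasibility verification, and in particular pinning down the extreme rays of $\mathcal{C}_{\delta(e)}$. Natural candidate generators---the split indicators $w_e(S)=\mathbf{1}[\emptyset\neq A\cap S\neq A]$ and the truncations $w_e(S)=\min(|A\cap S|,|A/S|)$ for $A\subseteq e$---are symmetric and submodular, but neither family alone generates $\mathcal{C}_{\delta(e)}$ once $\delta(e)\ge4$ (for instance $S\mapsto\min(|S|,\delta(e)-|S|)$ is not a nonnegative combination of split indicators, and the all-ones function is not a nonnegative combination of truncations), and "matroidal" generators $S\mapsto r(S)+r(e/S)-r(e)$, with $r$ a matroid rank function, must be brought in as $\delta(e)$ grows. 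Enumerating the relevant extreme rays and checking the ratio $\Sigma(T)/w_e(T)$ on each is a finite but delicate computation; it is feasible for $\delta(e)\le7$, where it produces the values $1,1,\tfrac32,2,4,6$, and this restriction---not any essential obstruction in the construction~\eqref{soluP2}---is why the theorem is stated only up to $\delta(e)=7$.
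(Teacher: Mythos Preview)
Your nonnegativity argument is correct and is exactly the paper's proof: the paper also partitions $2^e$ into the four classes $\mathbb{S}_1,\mathbb{S}_2,\mathbb{S}_3,\mathbb{S}_4$ according to which of $v,\tilde v$ lie in $S$, observes that the four coefficients in each quadruple agree in absolute value, and cancels using the submodularity inequality $w_e(S_1)+w_e(S_2)\ge w_e(S_3)+w_e(S_4)$. Your representation~\eqref{nablarep} is a clean way to record this.

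For feasibility, however, there is a genuine gap: you outline a strategy but do not carry it out. You correctly note that the constraints are linear and homogeneous in $w_e$, so checking them on the extreme rays of the symmetric submodular cone $\mathcal{C}_{\delta(e)}$ would suffice, but you neither enumerate those rays nor perform the per-generator check; likewise, the assertion that ``one checks that $\Sigma(T)-w_e(T)$ is again such a combination'' is precisely the content of the lower bound and is not established. The paper takes a more direct route: for each $\delta(e)\in\{3,4,5,6,7\}$ and each representative $|S|$ it expands $\mathrm{Vol}_{G_e}(\partial S)$ explicitly, then exhibits, line by line, the specific submodularity inequalities (e.g.\ $w_e(\{v_1,v_2\})+w_e(\{v_1,v_3\})\ge w_e(\{v_2\})+w_e(\{v_3\})$ for the lower bound, $w_e(\{v_1,v_2\})\le w_e(\{v_1\})+w_e(\{v_2\})$ for the upper bound) whose nonnegative combination yields $w_e(S)\le\mathrm{Vol}_{G_e}(\partial S)\le\beta^{(e)}w_e(S)$. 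This is a brute-force certificate rather than a structural argument, but it is complete.

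A practical note on your proposed route: enumerating the extreme rays of $\mathcal{C}_{\delta(e)}$ is itself a substantial computation (the number of extreme rays grows rapidly with $\delta(e)$, and already for $\delta(e)=6,7$ matroidal and non-matroidal rays proliferate), so the ``finite check'' you defer is in fact harder to organize than the paper's direct inequality manipulations. If you pursue this, you also need to handle the degenerate case $w_e(T)=0$ on an extreme ray, where the upper bound forces $\Sigma(T)=0$ and speaking of the ``ratio $\Sigma(T)/w_e(T)$'' is not quite right.
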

\vspace{-0.3cm}
Theorem~\ref{solvability} also holds when some weights in the set $w_e$ are not specified, but may be completed to satisfy submodularity constraints (See Example~\ref{example3}). 
\begin{example} \label{example3}
Let $e=\{1,2,3,4\},\,(w_e(\{1\}),w_e(\{2\}),w_{e}(\{3\}),w_{e}(\{4\}))=(1/3,1/3,1,1)$. Solving~\eqref{basicform} yields $w_{12}^{(e)}=-1/9$ and $\beta^{(e)}=1$. By completing the missing components in $w_e$ as $(w_e(\{1,2\}),w_e(\{1,3\}),w_{e}(\{1,4\}))=(2/3,1,1)$ leads to submodular weights (Observe that completions are not necessarily unique). Then, the solution of~\eqref{soluP2} gives $w_{12}^{(e)}=0$ and $\beta^{(e)}\in(1,2/3]$, which is clearly larger than one. 
\end{example} 
\begin{remark}
It is worth pointing out that $\beta=1$ when $\delta(e)=3$, which asserts that homogeneous triangle clustering may be performed via spectral methods on graphs without any weight projection distortion~\cite{benson2016higher}. The above results extend this finding to the inhomogeneous case whenever the weights are submodular. In addition, triangle clustering based on random walks~\cite{tsourakakis2016scalable} may be extended to the inhomogeneous case.
\end{remark}
Also, $\eqref{soluP2}$ lead to an optimal approximation ratio $\beta^{(e)}$ if we restrict $w^{(e)}$ to be a linear mapping of $w_e$, which is formally stated next. 
\begin{theorem} \label{optimality}
Suppose that for all pairs of $\{v,\tilde{v}\}\in E_o$, $w_{v\tilde{v}}^{(e)}$ is a linear function of $w_e$, denoted by $w_{v\tilde{v}}^{(e)} =f_{v\tilde{v}}(w_e)$, where $\{f_{v\tilde{v}}\}_{\{v\tilde{v}\in E^{(e)}\}}$ depends on $\delta(e)$ but not on $w_e$. Then, when $\delta(e)\leq 7$, the optimal values of $\beta$ for the following optimization problem depend only on $\delta(e)$, and are equal to those listed in Table~\ref{optbeta}.
\begin{align}
\min_{\{f_{v\tilde{v}}\}_{\{v,\tilde{v}\}\in E_o}, \beta}&\quad \max_{\text{submodular}\; w_e} \quad \beta \label{P2}\\
\text{s.t.}&\quad w_e(S)\leq \sum_{v\in S, \tilde{v}\in e/S}f_{v\tilde{v}}(w_e) \leq \beta w_{e}(S),  \quad \text{for all $S\in 2^e$.} \nonumber
\end{align}
\end{theorem}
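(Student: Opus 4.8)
The plan is to show that the problem~\eqref{P2}, which is a minimax problem over the finite-dimensional space of linear maps $\{f_{v\tilde v}\}$ and the cone of submodular weight functions $w_e$, has optimal value equal to the $\beta^{(e)}$ of Table~\ref{optbeta}, exactly matching the value achieved by the explicit solution~\eqref{soluP2} from Theorem~\ref{solvability}. One direction is already in hand: Theorem~\ref{solvability} exhibits a specific linear map (the coefficients appearing in~\eqref{soluP2} are manifestly linear in $w_e$ and depend only on $\delta(e)$) that is feasible for~\eqref{P1} — hence for the inner constraint of~\eqref{P2} — with the tabulated $\beta^{(e)}$ for every submodular $w_e$. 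So the optimal value of~\eqref{P2} is \emph{at most} the entry in Table~\ref{optbeta}. The entire content of the theorem is therefore the matching lower bound: no choice of linear coefficients can do better uniformly over all submodular $w_e$.

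First I would reduce the lower bound to a finite computation by exploiting symmetry. The constraint set ``$w_e$ submodular'' and the objective are invariant under the action of the symmetric group $S_{\delta(e)}$ permuting the ground set $e$; by an averaging (convexity) argument one may assume the optimal $\{f_{v\tilde v}\}$ is $S_{\delta(e)}$-equivariant, which forces $f_{v\tilde v}(w_e)$ to be a symmetric function of the ``type'' of each $S$ relative to the pair $\{v,\tilde v\}$ — i.e.\ determined by $|S|$ and $|\{v,\tilde v\}\cap S|\in\{0,1,2\}$. This collapses the free parameters to $O(\delta(e))$ scalars. Then, for each fixed $\delta(e)\in\{2,\dots,7\}$, the inner problem $\max_{\text{submodular }w_e}\beta$ subject to the sandwich constraints becomes a linear program in finitely many variables (the values $w_e(S)$, the symmetry-reduced coefficients, and $\beta$), and the outer minimization over the coefficients is likewise finite-dimensional. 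I would argue the overall optimum is attained and, by LP duality / extreme-point analysis of the submodular cone, is certified by a finite list of extremal submodular functions $w_e$ — plausibly the rank functions of matroids on $\le 7$ elements, or more simply the cut functions of the form $w_e(S)=g(|S|)$ for suitable concave $g$, together with a few low-rank ``threshold'' submodular functions. Plugging~\eqref{soluP2} against these extremal $w_e$ should reproduce the Table~\ref{optbeta} values and show they cannot be beaten.

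The main obstacle I anticipate is the lower-bound / tightness step: exhibiting, for each $\delta(e)$, a submodular $w_e$ (or a small family of them) forcing any equivariant linear map to incur $\beta\ge$ the tabulated value. The symmetry reduction makes this a finite LP, but one still has to identify the right worst-case submodular instances and verify the bound — this is where the asymmetry between, say, $\beta=3/2$ at $\delta(e)=4$ and $\beta=6$ at $\delta(e)=7$ comes from, and it requires understanding which submodular functions are simultaneously ``hard to cover from below'' on small sets and ``hard not to overshoot'' on large sets. A secondary technical point is justifying that restricting to equivariant $f$ is without loss of optimality (an average of feasible coefficient tuples is feasible with a $\beta$ no larger than the max, by linearity of both sides of the sandwich constraint in $f$), and confirming that the minimax value is actually attained so that ``optimal'' is meaningful — both handled by standard compactness/convexity arguments once the problem is cut down to finite dimension. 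The cases $\delta(e)\le 3$ are immediate since $\beta=1$ there already follows from Theorem~\ref{solvability} and the trivial lower bound $\beta\ge 1$ forced by the constraint $\sum_{v\in S,\tilde v\in e/S} f_{v\tilde v}(w_e)\ge w_e(S)$ evaluated together with $\le\beta w_e(S)$.
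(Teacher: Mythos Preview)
Your proposal is correct and follows essentially the same route as the paper: upper bound from Theorem~\ref{solvability}, symmetry reduction of the linear coefficients (the paper formalizes this as Proposition~\ref{symmtriccoef}, arriving at $3\lfloor\delta(e)/2\rfloor-1$ free parameters), and a lower bound obtained by restricting the inner maximization to a finite set $\Omega$ of submodular test functions and solving the resulting finite LP.

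The one point worth flagging is your speculation about the form of the extremal $w_e$. The paper does not obtain these from any structural characterization (LP duality, matroid rank functions, or concave $g(|S|)$); it simply exhibits, for each $\delta(e)\in\{4,5,6,7\}$, an explicit ad hoc list $\Omega=\{w_e^{(1)},\dots\}$ of submodular functions (tabulated in the supplement) and verifies numerically that the finite LP with those constraints already forces $\beta^\Omega$ to equal the Table~\ref{optbeta} value. Several of these witnesses are \emph{not} symmetric in the vertices---e.g.\ for $\delta(e)=4$ one of them sets $w_e(\{1\})=0$ and $w_e(\{j\})=1$ for $j\ne 1$---so the class ``$w_e(S)=g(|S|)$'' alone would not suffice. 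Your ``main obstacle'' is therefore exactly where the paper's content lives, and it is resolved by case-by-case enumeration rather than by the cleaner extreme-point argument you sketch.
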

\begin{remark}
Although we were able to prove feasibility (Theorem~\ref{solvability}) and optimality of linear solutions (Theorem~\ref{optimality}) only for small values of $\delta(e)$, we conjecture the results to be true for all $\delta(e)$.
\end{remark}
The following theorem shows that if the weights $w_e$ of hyperedges in a hypergraph are generated from graph cuts of a latent weighted graph, then the projected weights of hyperedges are proportional to the corresponding weights in the latent graph.   
\begin{theorem}\label{consistency}
Suppose that $G_{e}=(V^{(e)}, E^{(e)}, w^{(e)})$ is a latent graph that generates hyperedge weights $w_e$ according to the following procedure: for any $S\subseteq e$,
$w_e(S)= \sum_{v\in S, \tilde{v}\in e/S}w_{v\tilde{v}}^{(e)}$.
Then, equation $\eqref{soluP2}$ establishes that $w_{v\tilde{v}}^{*(e)}= \beta^{(e)} w_{v\tilde{v}}^{(e)}$, for all $v\tilde{v}\in E^{(e)}$, with $\beta^{(e)} = \frac{2^{\delta(e)}-2}{\delta(e)(\delta(e)-1)}$. 
\end{theorem}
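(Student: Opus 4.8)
The plan is to use linearity: the map $w_e\mapsto w^{*(e)}$ defined by~\eqref{soluP2} is linear in $w_e$, and the hypothesis makes $w_e$ a linear image of the latent edge weights $w^{(e)}$, so it suffices to verify the claim on a spanning family of inputs. Write $\delta=\delta(e)$, and for a pair $\{p,q\}\in E^{(e)}$ and a set $S\in 2^e\setminus\{\emptyset,e\}$ let $T_{pq}(S)$ denote the coefficient multiplying $w_e(S)$ on the right-hand side of~\eqref{soluP2}, i.e.
\[
T_{pq}(S)=\frac{1_{|\{p,q\}\cap S|=1}}{2|S|(\delta-|S|)}-\frac{1_{|\{p,q\}\cap S|=0}}{2(|S|+1)(\delta-|S|-1)}-\frac{1_{|\{p,q\}\cap S|=2}}{2(|S|-1)(\delta-|S|+1)},
\]
where any term whose denominator vanishes (necessarily $|S|\in\{1,\delta-1\}$) is read as $0$, which is legitimate because the matching indicator is then forced to $0$. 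Substituting the hypothesis $w_e(S)=\sum_{\{a,b\}\in E^{(e)}}w_{ab}^{(e)}\,1_{|\{a,b\}\cap S|=1}$ into~\eqref{soluP2} and interchanging the two sums gives $w_{pq}^{*(e)}=\sum_{\{a,b\}\in E^{(e)}}C_{pq,ab}\,w_{ab}^{(e)}$ with $C_{pq,ab}=\sum_{S\in 2^e\setminus\{\emptyset,e\}}T_{pq}(S)\,1_{|\{a,b\}\cap S|=1}$. Hence the theorem is equivalent to the combinatorial identity $C_{pq,ab}=\beta^{(e)}\,1_{\{p,q\}=\{a,b\}}$, where $1_{\{p,q\}=\{a,b\}}$ is the indicator that the two (unordered) pairs coincide and $\beta^{(e)}=\frac{2^{\delta}-2}{\delta(\delta-1)}$.

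To evaluate $C_{pq,ab}$ I would split on $|\{p,q\}\cap\{a,b\}|\in\{2,1,0\}$ and, in each case, group the sets $S$ by their size $s$ and by how $S$ meets $\{p,q\}$ and $\{a,b\}$; the number of $S$ of a given type is a product of binomial coefficients over the remaining ``free'' coordinates of $e$. When $\{p,q\}=\{a,b\}$, only the first term of $T_{pq}$ contributes, the count of size-$s$ subsets of $e$ meeting $\{a,b\}$ in exactly one vertex is $2\binom{\delta-2}{s-1}$, and the identity $\binom{\delta-2}{s-1}=\frac{s(\delta-s)}{\delta(\delta-1)}\binom{\delta}{s}$ collapses the sum to $\frac{1}{\delta(\delta-1)}\sum_{s=1}^{\delta-1}\binom{\delta}{s}=\frac{2^{\delta}-2}{\delta(\delta-1)}$, which is exactly $\beta^{(e)}$ and pins down its value. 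When $|\{p,q\}\cap\{a,b\}|\le 1$, after writing out the per-type contributions and applying the index shifts $s\mapsto s\pm 1$ that recast the ``$|\{p,q\}\cap S|=0$'' and ``$|\{p,q\}\cap S|=2$'' sums as ``$|\{p,q\}\cap S|=1$''-type sums, every sum reduces to a multiple of one common sum $\Sigma$: the three contributions are $2\Sigma,-\Sigma,-\Sigma$ in the disjoint case, while in the overlap-one case the four sub-type contributions form two pairs that cancel, so $C_{pq,ab}=0$. The terms in which a denominator would vanish line up with binomial coefficients equal to $0$, so these reindexings are valid manipulations of finite sums.

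I expect the only real difficulty to be the bookkeeping in these case analyses --- keeping the binomial counts, the $s\mapsto s\pm1$ reindexings, and the vanishing-denominator convention consistent, most notably in the overlap-one case where four sub-types of $S$ arise. (A shorter route, at the cost of machinery not otherwise used in the paper, is Fourier analysis on the Boolean cube: since $1_{|\{a,b\}\cap S|=1}=\tfrac12\bigl(1-\chi_a(S)\chi_b(S)\bigr)$, the cut function $w_e$ is supported on the level-$0$ and level-$2$ Fourier characters, and one checks directly that~\eqref{soluP2} acts as multiplication by $\beta^{(e)}$ on each level-$2$ character.) Once $C_{pq,ab}=\beta^{(e)}\,1_{\{p,q\}=\{a,b\}}$ is established, $w_{pq}^{*(e)}=\beta^{(e)}w_{pq}^{(e)}$ follows at once, with $\beta^{(e)}=\frac{2^{\delta(e)}-2}{\delta(e)(\delta(e)-1)}$ as read off from the equal-pair case.
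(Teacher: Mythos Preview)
Your proposal is correct and follows essentially the same approach as the paper: substitute the latent-graph hypothesis into~\eqref{soluP2}, interchange sums to expose coefficients $C_{pq,ab}$, and verify $C_{pq,ab}=\beta^{(e)}1_{\{p,q\}=\{a,b\}}$ by binomial counting and index shifts $s\mapsto s\pm1$. The only cosmetic difference is that the paper first exploits the symmetry $w_e(S)=w_e(\bar S)$ to collapse the three indicator cases in~\eqref{soluP2} to two (namely $1_{v\in S,\tilde v\in\bar S}$ and $1_{v,\tilde v\in\bar S}$), which slightly streamlines the subsequent eight-case count; your direct three-term route and the Fourier-analytic aside are both sound alternatives.
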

Theorem~\ref{consistency} establishes consistency of the linear map~$\eqref{soluP2}$, and also shows that the min-max optimal approximation ratio for linear functions equals $\Omega(2^{\delta(e)}/{\delta(e)^2})$. An independent line of work~\cite{devanur2013approximation}, based on Gomory-Hu trees (non-linear), established that submodular functions represent nonnegative solutions of the optimization problem~\eqref{P1} with $\beta^{(e)}=\delta_e-1$. Therefore, an unrestricted solution of the optimization problem~\eqref{P1} ensures that $\beta^{(e)}\leq\delta_e-1$.

As practical applications almost exclusively involve hypergraphs with small, constant $\delta(e)$, the Gomory-Hu tree approach in this case is suboptimal in approximation ratio compared to~\eqref{soluP2}. The expression~\eqref{soluP2} can be rewritten as $w^{*(e)}= M \, w_e,$ where $M$ is a matrix that only depends on $\delta(e)$. Hence, the projected weights can be computed in a very efficient and simple manner, as opposed to constructing the Gomory-Hu tree or solving~\eqref{P1} directly. In the rare case that one has to deal with hyperedges for which $\delta(e)$ is large, the Gomory-Hu tree approach and a solution of~\eqref{P1} may be preferred.

\section{Related Work and Discussion} \label{sec:discussion}

One contribution of our work is to introduce the notion of an inhomogenous partition of hyperedges and a new hypergraph projection method that accompanies the procedure. Subsequent edge weight merging and spectral clustering are standardly used in hypergraph clustering algorithms, and in particular in Zhou's normalized hypergraph cut approach~\cite{zhou2006learning}, Clique Expansion, Star Expansion and Clique Averaging~\cite{agarwal2005beyond}.  
The formulation closest to ours is Zhou's method~\cite{zhou2006learning}. In the aforementioned hypergraph clustering method for H-hyperedges, each hyperedge $e$ is assigned a scalar weight $w_e^{\text{H}}$. For the projection step, Zhou used $w_e^{\text{H}}/\delta(e)$ for the weight 
of each pair of endpoints of $e$. If we view the H-hyperedge as an InH-hyperedge with weight function $w_e$, where $w_e(S) = w_e^{\text{H}}|S|(\delta(e)-|S|)/\delta(e)\;\text{for all}\;S\in 2^e,$\; then our definition of the volume/cost of the boundary~\eqref{cut} is identical to that of Zhou's. With this choice of $w_e$, the optimization problem~\eqref{P1} outputs $w_{v\tilde{v}}^{(e)}=w_e^{\text{H}}/\delta(e),$ with $\beta^{(e)}=1$, which are the same values as those obtained via Zhou's projection. The degree of a vertex in~\cite{zhou2006learning} is defined as 
$d_v=\sum_{e\in E} h(e,v)w_e^{\text{H}}= \sum_{e\in E}\frac{\delta(e)}{\delta(e)-1}w_e(\{v\})$,
which is a weighted sum of the $w_e(\{v\})$ and thus takes a slightly different form when compared to our definition. As a matter of fact, for uniform hypergraphs, the two forms are same. Some other hypergraph clustering algorithms, such as Clique expansion and Star expansion, as shown by Agarwal et al.~\cite{agarwal2006higher}, represent special cases of our method for uniform hypergraphs as well.


The Clique Averaging method differs substantially from all the aforedescribed methods. Instead of projecting each hyperedge onto a subgraph and then combining the subgraphs into a graph, the algorithm performs a one-shot projection of the whole hypergraph onto a graph. The projection is based on a $\ell_2$-minimization rule, which may not allow for constant-approximation solutions. It is unknown if the result of the procedure can provide a quadratic approximation for the optimum solution. Clique Averaging also has practical implementation problems and high computational complexity, as it is necessary to solve a linear regression with $n^2$ variable and $n^{\delta(e)}$ observations.

In the recent work on network motif clustering~\cite{benson2016higher}, the hyperedges are deduced from a graph where they represent so called motifs. Benson et. al~\cite{benson2016higher} proved that if the motifs have three vertices, resulting in a three-uniform hypergraph, their proposed algorithm satisfies the Cheeger inequality for motifs\footnote{The Cheeger inequality~\cite{chung2007four} arises in the context of minimizing the conductance of a graph, which is related to the normalized cut.}. In the described formulation, when cutting an H-hyperedge with weight $w_e^{\text{H}}$, one is required to pay $w_e^{\text{H}}$. Hence, recasting this model within our setting, we arrive at inhomogenous weights $w_e(S) = w_e^{\text{H}},\;\text{for all}\;S\in 2^e,$ for which~\eqref{P1} yields $w_{v\tilde{v}}^{(e)}=w_e^{\text{H}}/(\delta(e)-1)$ and $\beta^{(e)}=\lfloor\frac{\delta^2(e)}{4}\rfloor/(\delta(e)-1)$, identical to the solution of~\cite{benson2016higher}. Furthermore, given the result of our Theorem~\ref{theorem1}, one can prove that the algorithm of~\cite{benson2016higher} offers a quadratic-factor approximation for motifs involving more than three vertices, a fact that was not established in the original 
work~\cite{benson2016higher}.

All the aforementioned algorithms essentially learn the spectrum of Laplacian matrices obtained through hypergraph projection. The ultimate goal of projections is to avoid solving the NP-hard problem of learning the spectrum of certain hypergraph Laplacians~\cite{li2013z}. Methods that do not rely on hypergraph projection, including optimization with the total variance of hypergraphs~\cite{hein2013total, zhang2017re}, tensor spectral methods~\cite{benson2015tensor} and nonlinear Laplacian spectral methods~\cite{louis2015hypergraph}, have also been reported in the literature. These techniques were exclusively applied in homogeneous settings, and they typically have higher complexity and smaller spectral gaps than the projection-based methods. A future line of work is to investigate whether these methods can be extended to the inhomogeneous case. Yet another relevant line of work pertains to the statistical analysis of hypergraph partitioning methods for generalized stochastic block models~\cite{ghoshdastidar2014consistency, ghoshdastidar2015consistency}.

\vspace{-0.1in}
\section{Applications} \label{application}
\vspace{-0.1in}
\textbf{Network motif clustering.}
Real-world networks exhibit rich higher-order connectivity patterns frequently referred to as network motifs~\cite{milo2002network}. Motifs are special subgraphs of the graph and may be viewed as hyperedges of a hypergraph over the same set of vertices. Recent work has shown that hypergraph clustering based on motifs may be used to learn hidden high-order organization patterns in networks~\cite{benson2016higher,li2016motif,tsourakakis2016scalable}. However, this approach treats all vertices and edges within the motifs in the same manner, and hence ignores the fact that each structural unit within the motif may have a different relevance or different role. As a result, the vertices of the motifs are partitioned with a uniform cost. However, this assumption is hardly realistic as in many real networks, only some vertices of higher-order structures may need to be clustered together. 
Hence, inhomogenous hyperedges are expected to elucidate more subtle high-order organizations of network. We illustrate the utility of InH-partition on the Florida Bay foodweb~\cite{FoodWebCite} and compare our findings to those of~\cite{benson2016higher}. 

The Florida Bay foodweb comprises $128$ vertices corresponding to different species or organisms that live in the Bay, and $2106$ directed edges indicating carbon exchange between two species. The Foodweb essentially represents a layered flow network, as carbon flows from so called producers organisms to high-level predators. Each layer of the network consists of ``similar'' species that play the same role in the food chain. Clustering of the species may be performed by leveraging the layered structure of the interactions. As a network motif, we use a subgraph of four species, and correspondingly, four vertices denoted by $v_i,$ for $i=1,2,3,4$. The motif captures, among others, relations between two producers and two consumers: The producers $v_1$ and $v_2$ both transmit carbons to $v_3$ and $v_4$, and all types of carbon flow between $v_1$ and $v_2$, $v_3$ and $v_4$ are allowed (see Figure~\ref{networkmotif} Left). Such a motif is the smallest structural unit that captures the fact that carbon exchange occurs in uni-direction between layers, while is allowed freely within layers.
The inhomogeneous hyperedge costs are assigned according to the following heuristics: First, as $v_1$ and $v_2$ share two common carbon recipients (predators) while $v_3$ and $v_4$ share two common carbon sources (preys), we set $w_e(\{v_i\})=1$ for $i=1,2,3,4,$ and $w_e(\{v_1,v_2\})=0,\,w_e(\{v_1,v_3\})=2,\,\text{and}\,\,w_e(\{v_1,v_4\})=2$. Based on the solution of the optimization problem~\eqref{P1}, one can construct a weighted subgraph whose costs of cuts match the inhomogeneous costs, with $\beta^{(e)}=1$. The graph is depicted in Figure~\ref{networkmotif} (left). 

\begin{figure}[t]
\centering
\includegraphics[trim={0.5cm 4.7cm 1.0cm 1.0cm},clip,width=0.71\textwidth]{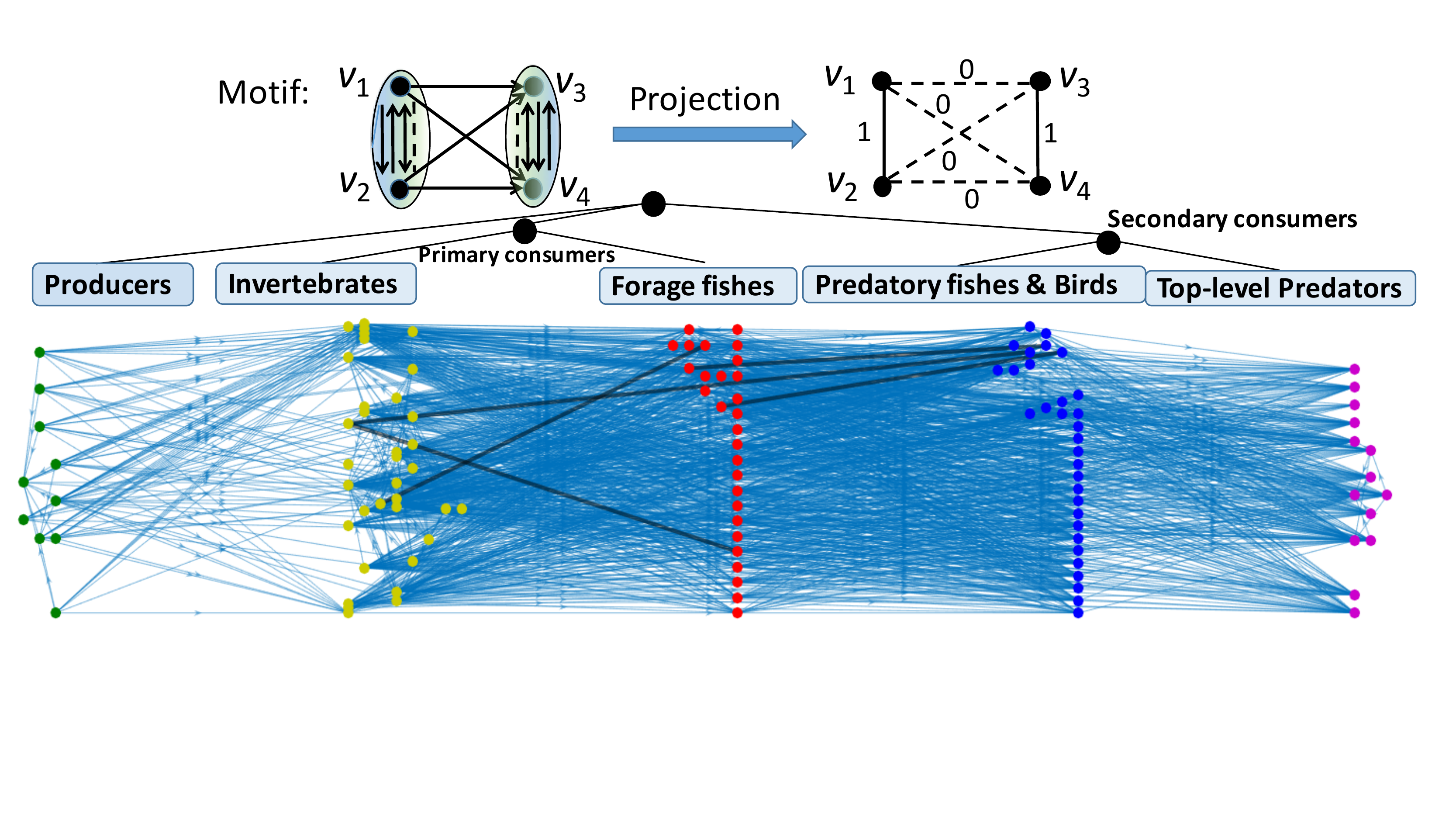}
\includegraphics[trim={6.8cm 2.5cm 14cm 2.8cm},clip,width=.28\textwidth]{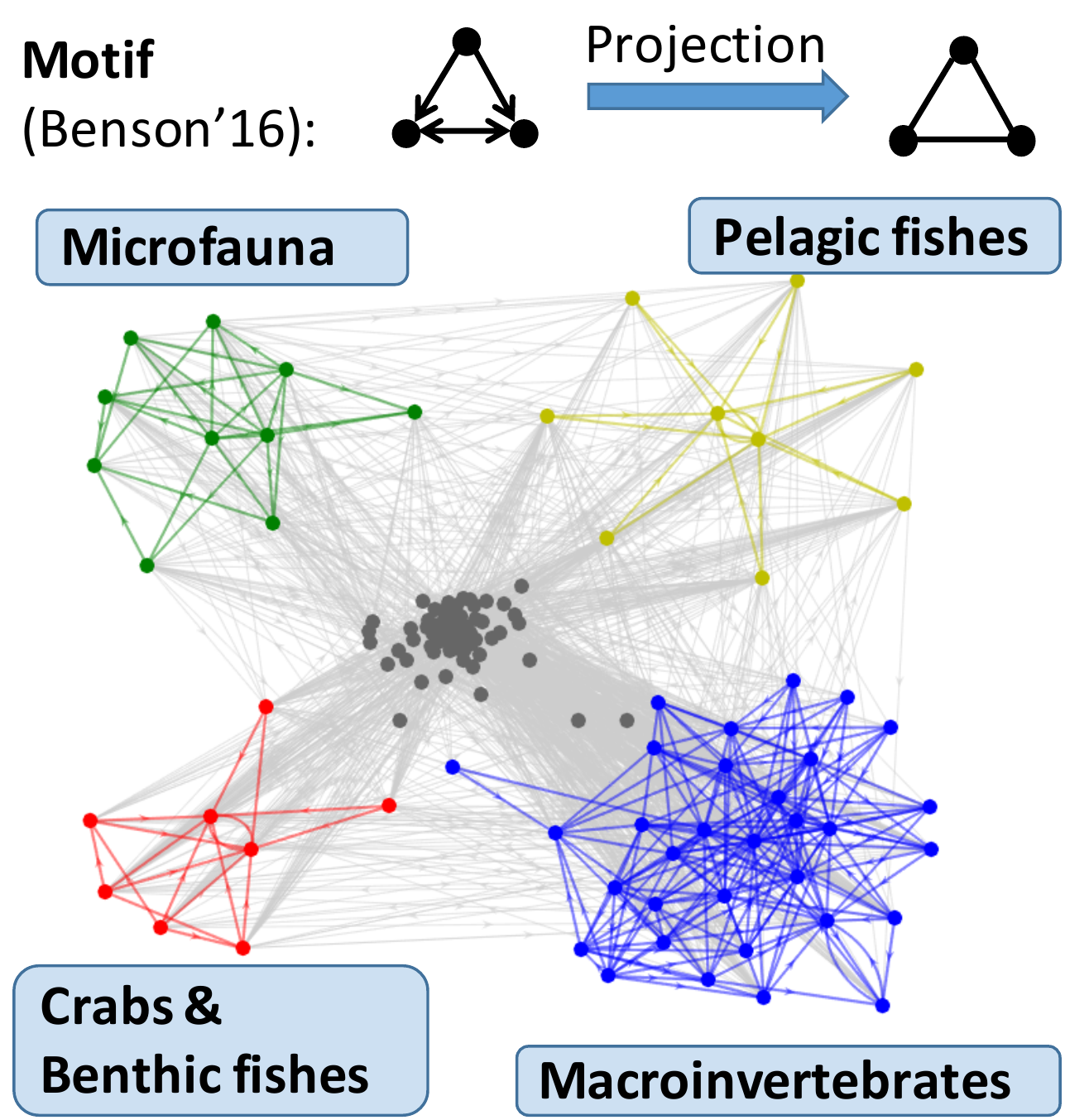}
\caption{Motif clustering in the Florida Bay food web. Left: InHomogenous case. Left-top: Hyperedge (network motif) \& the weighted induced subgraph; Left-bottom: Hierarchical clustering structure and five clusters via InH-partition. The vertices belonging to different clusters are distinguished by the colors of vertices. Edges with a uni-direction (right to left) are colored black while other edges are kept blue. Right: Homogenous partitioning~\cite{benson2016higher} with four clusters. Grey vertices are not connected by motifs and thus unclassified.} 
\label{networkmotif}
\vspace{-0.5cm}
\end{figure}
Our approach is to perform hierarchical clustering via iterative application of the InH-partition method. In each iteration, we construct a hypergraph by replacing the chosen motif subnetwork by an hyperedge. The result is shown in~Figure \ref{networkmotif}. At the first level, we partitioned the species into three clusters corresponding to producers, primary consumers and secondary consumers. The producer cluster is homogeneous in so far that it contains only producers, a total of nine of them. At the second level, we partitioned the obtained primary-consumer cluster into two clusters, one of which almost exclusively comprises invertebrates ($28$ out of $35$), while the other almost exclusively comprises forage fishes. The secondary-consumer cluster is partitioned into two clusters, one of which comprises top-level predators, while the other cluster mostly consists of predatory fishes and birds. Overall, we recovered five clusters that fit five layers ranging from producers to top-level consumers. It is easy to check that the producer, invertebrate and top-level predator clusters exhibit high functional similarity of species ($>80\%$). An exact functional classification of forage and predatory fishes is not known, but our layered network appears to capture an overwhelmingly large number of prey-predator relations among these species. Among the $1714$ edges, obtained after removing isolated vertices and detritus species vertices, only five edges point in the opposite direction from a higher to a lower-level cluster, two of which go from predatory fishes to forage fishes. Detailed information about the species and clusters is provided in the Supplementary Material. 

In comparison, the related work of Benson et al.~\cite{benson2016higher} which used homogenous hypergraph clustering and triangular motifs reported a very different clustering structure. The corresponding clusters covered less than half of the species ($62$ out of $128$) as many vertices were not connected by the triangle motif; in contrast, $127$ out of $128$ vertices were covered by our choice of motif.  We attribute the difference between our results and the results of~\cite{benson2016higher} to the choices of the network motif. A triangle motif, used in~\cite{benson2016higher} leaves a large number of vertices unclustered and fails to enforce a hierarchical network structure. On the other hand, our fan motif with homogeneous weights produces a giant cluster as it ties all the vertices together, and the hierarchical decomposition is only revealed when the fan motif is used with inhomogeneous weights. In order to identify hierarchical network structures, instead of hypergraph clustering, one may use topological sorting to rank species based on their carbon flows~\cite{allesina2005ecological}. Unfortunately, topological sorting cannot use biological side information and hence fails to automatically determine the boundaries of the clusters. 

\textbf{Learning the Riffled Independence Structure of Ranking Data.}
Learning probabilistic models for ranking data has attracted significant interest in social and political sciences as well as in machine learning~\cite{awasthi2014learning,meek2014recursive}. Recently, a probabilistic model, termed the riffled-independence model, was shown to accurately describe many benchmark ranked datasets~\cite{huang2012uncovering}. In the riffled independence model, one first generates two rankings over two disjoint sets of element independently, and then riffle shuffles the rankings to arrive at an interleaved order. The structure learning problem in this setting reduces to distinguishing the two categories of elements based on limited ranking data. 
More precisely, let $Q$ be the set of candidates to be ranked, with $|Q|=n$. A full ranking is a bijection $\sigma:Q\rightarrow [n]$, and for an $a\in Q$, $\sigma(a)$ denotes the position of candidate $a$ in the ranking $\sigma$. We use $\sigma(a)<(>)\sigma(b)$ to indicate that $a$ is ranked higher (lower) than $b$ in $\sigma$. 
If $S \subseteq Q$, we use $\sigma_{S}: S\rightarrow [|S|]$ to denote the ranking $\sigma$ projected onto the set $S$. We also use $S(\sigma)\triangleq\{\sigma(a)|a\in S\}$ to denote the subset of positions of elements in $S$. 
Let $\prob(E)$ denote the probability of the event $E$. Riffled independence asserts that there exists a riffled-independent set $S\subset Q$, such that for a fixed ranking $\sigma'$ over $[n]$, 
\begin{align*}
\prob(\sigma=\sigma')=\prob(\sigma_S=\sigma'_S)\prob(\sigma_{Q/S}=\sigma'_{Q/S})\prob(S(\sigma)=S(\sigma')).
\end{align*}
Suppose that we are given a set of rankings $\Sigma=\{\sigma^{(1)},\sigma^{(2)},...,\sigma^{(m)}\}$ drawn independently according to some probability distribution $\prob$. If $\prob$ has a riffled-independent set $S^*$, the structure learning problem is to find $S^*$. In~\cite{huang2012uncovering}, the described problem was cast as an optimization problem over all possible subsets of $Q$, with the objective of minimizing the Kullback-Leibler divergence between the ranking distribution with riffled independence and the empirical distribution of $\Sigma$~\cite{huang2012uncovering}. A simplified version of the optimization problem reads as
\begin{align}\label{rankopt}
\arg\min_{S\subset Q} \mathcal{F}(S)\triangleq\sum_{(i,j,k)\in \Omega_{S,\bar{S}}^{cross}}I_{i;j,k}+\sum_{(i,j,k)\in \Omega_{\bar{S},S}^{cross}}I_{i;j,k},
\end{align}
where $\Omega_{A,B}^{cross}\triangleq\{(i,j,k)|i\in A, j,k\in B\}$, and where $I_{i;j,k}$ denotes the estimated mutual information between the position of the candidate $i$ and two ``comparison candidates'' $j,k$. If $1_{\sigma(j)<\sigma(k)}$ denotes the indicator function of the underlying event, we may write
\begin{align}\label{MIest}
I_{i;j,k}\triangleq \hat{I}(\sigma(i);1_{\sigma(j)<\sigma(k)})=\sum_{\sigma(i)}\sum_{1_{\sigma(j)<\sigma(k)}}\hat{\mathbb{P}}(\sigma(i),1_{\sigma(j)<\sigma(k)})\log\frac{\hat{\mathbb{P}}(\sigma(i),1_{\sigma(j)<\sigma(k)})}{\hat{\mathbb{P}}(\sigma(i))\prob(1_{\sigma(j)<\sigma(k)})},
\end{align}
where $\hat{\mathbb{P}}$ denotes an estimate of the underlying probability. If $i$ and $j,k$ are in different riffled-independent sets, the estimated mutual information $\hat{I}(\sigma(i);1_{\sigma(j)<\sigma(k)})$ converges to zero as the number of samples increases. When the number of samples is small, one may use mutual information estimators described in~\cite{jiao2017maximum, bu2016estimation, gao2017demystifying}.

One may recast the above problem as an InH-partition problem over a hypergraph where each candidate represents a vertex in the hypergraph, and $I_{i;j,k}$ represents the inhomogeneous cost $w_e(\{i\})$ for the hyperedge $e=\{i,j,k\}$. Note that as mutual information $\hat{I}(\sigma(i);1_{\sigma(j)<\sigma(k)})$ is in general asymmetric, one would not have been able to use H-partitions. The optimization problem reduces to $\min_S \text{vol}_{\mathcal{H}}(\partial S) $. The two optimization tasks are different, and we illustrate next that the InH-partition outperforms the original optimization approach AnchorsPartition (Apar)~\cite{huang2012uncovering} both on synthetic data and real data. Due to space limitations, synthetic data and a subset of the real dataset results are listed in the Supplementary Material.

Here, we analyzed the Irish House of Parliament election dataset (2002)~\cite{gormley2007latent}. The dataset consists of $2490$ ballots fully ranking $14$ candidates. The candidates were from a number of parties, where Fianna F\'{a}il (F.F.) and Fine Gael (F.G.) are the two largest (and rival) Irish political parties. Using InH-partition (InH-Par), one can split the candidates iteratively into two sets (See Figure~\ref{IrishResult}) which yields to meaningful clusters that correspond to large parties: $\{1,4,13\}$ (F.F.), $\{2,5,6\}$ (F.G.),  $\{7,8,9\}$ (Ind.). We  compared InH-partition with Apar based on their performance in detecting these three clusters using a small training set: We independently sampled $m$ rankings $100$ times and executed both algorithms to partition the set of candidates iteratively. During the partitioning procedure, ``party success'' was declared if one exactly detected one of the three party clusters (``F.F.'', ``F.G.'' \& ``Ind.'').  ``All'' was used to designate that all three party clusters were detected completely correctly. InH-partition outperforms Apar in recovering the cluster Ind. and achieved comparable performance for cluster F.F., although it performs a little worse than Apar for cluster F.G.; InH-partition also offers superior overall performance compared to Apar. We also compared InH-partition with APar in the large sample regime ($m=2490$), using only a subset of triple comparisons (hyperedges) sampled independently with probability $r$ (This strategy significantly reduces the complexity of both algorithms). The average is computed over $100$ independent runs. The results are shown in Figure~\ref{IrishResult}, highlighting the robustness of InH-partition with respect to missing triples. Additional test on ranking data are described in the Supplementary Material, along with new results on subspace clustering, motion segmentation and others.

\begin{figure}[t]
\begin{minipage}{0.31\textwidth}
\centering
\scriptsize{\begin{tabular}{p{1.2cm}<{\centering}| p{1.3cm}<{\centering}}
\hline
Party &  Candidates \\
\hline
Fianna F\'{a}il & 1,4,13 \\
Fine Gael & 2,5,6 \\
Independent & 3,7,8,9 \\
Others & 10, 11,12,14\\
\hline
\end{tabular}}
\includegraphics[trim={6.4cm 0.3cm 0 0},clip,width=\textwidth]{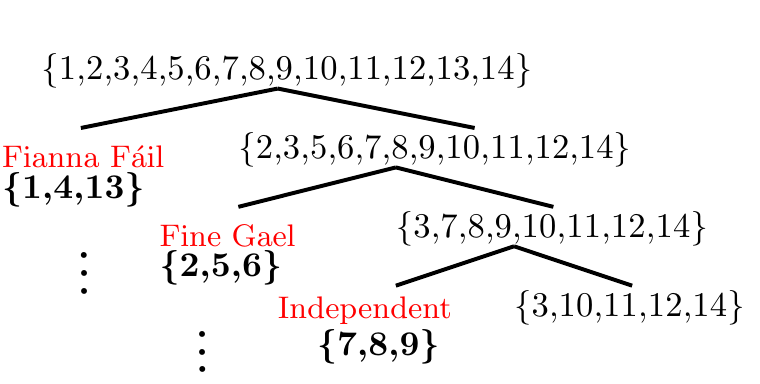}
\end{minipage}
\begin{minipage}{0.34\textwidth}
\includegraphics[trim={0.6cm 0cm 1.4cm 0.6cm},clip,width=\textwidth]{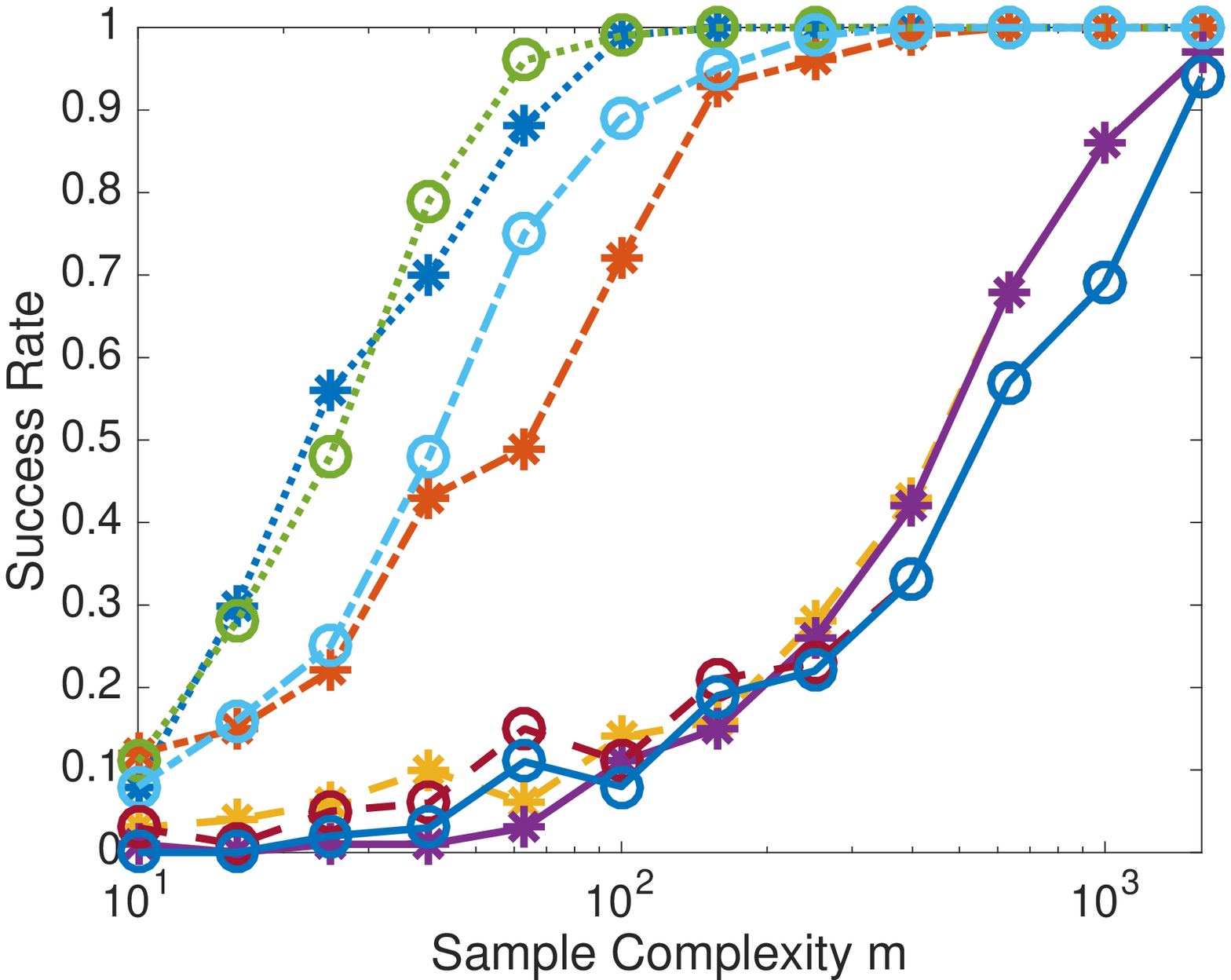}
\end{minipage}
\begin{minipage}{0.34\textwidth}
\includegraphics[trim={0.6cm 0cm 1.4cm 0.6cm},clip,width=\textwidth]{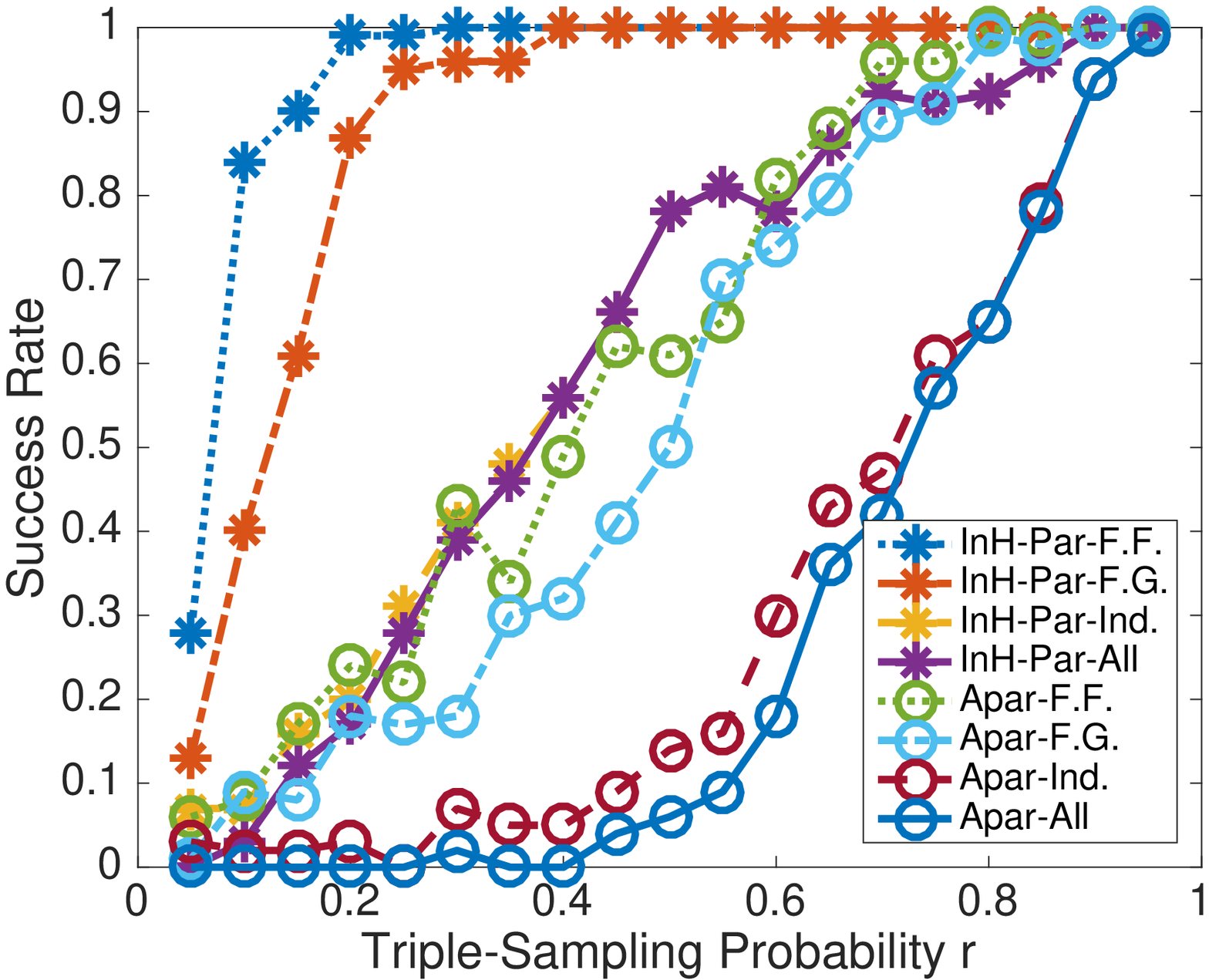}
\end{minipage}
\caption{Election dataset. Left-top: parties and candidates; Left-bottom: hierarchical partitioning structure of Irish election detected by InH-Par; Middle: Success rate vs Sample Complexity; Right: Success rate vs Triple-sampling Rate.}
\label{IrishResult}
\vspace{-0.5cm}
\end{figure}
\section{Acknowledgement}
The authors gratefully acknowledge many useful suggestions by the reviewers. They are also indebted to the reviewers for providing many additional and relevant references. This work was supported in part by the NSF grant CCF 1527636. 
\newpage
\bibliographystyle{IEEETran}
\bibliography{CS512ref}

\begin{thebibliography}{10}
\providecommand{\url}[1]{#1}
\csname url@samestyle\endcsname
\providecommand{\newblock}{\relax}
\providecommand{\bibinfo}[2]{#2}
\providecommand{\BIBentrySTDinterwordspacing}{\spaceskip=0pt\relax}
\providecommand{\BIBentryALTinterwordstretchfactor}{4}
\providecommand{\BIBentryALTinterwordspacing}{\spaceskip=\fontdimen2\font plus
\BIBentryALTinterwordstretchfactor\fontdimen3\font minus
  \fontdimen4\font\relax}
\providecommand{\BIBforeignlanguage}[2]{{%
\expandafter\ifx\csname l@#1\endcsname\relax
\typeout{** WARNING: IEEEtran.bst: No hyphenation pattern has been}%
\typeout{** loaded for the language `#1'. Using the pattern for}%
\typeout{** the default language instead.}%
\else
\language=\csname l@#1\endcsname
\fi
#2}}
\providecommand{\BIBdecl}{\relax}
\BIBdecl

\bibitem{jain1999data}
A.~K. Jain, M.~N. Murty, and P.~J. Flynn, ``Data clustering: a review,''
  \emph{ACM computing surveys (CSUR)}, vol.~31, no.~3, pp. 264--323, 1999.

\bibitem{ng2001spectral}
A.~Y. Ng, M.~I. Jordan, and Y.~Weiss, ``On spectral clustering: Analysis and an
  algorithm,'' in \emph{Advances in Neural Information Processing Systems
  (NIPS)}, 2002, pp. 849--856.

\bibitem{bulo2009game}
S.~R. Bul{\`o} and M.~Pelillo, ``A game-theoretic approach to hypergraph
  clustering,'' in \emph{Advances in Neural Information Processing Systems
  (NIPS)}, 2009, pp. 1571--1579.

\bibitem{leordeanu2012efficient}
M.~Leordeanu and C.~Sminchisescu, ``Efficient hypergraph clustering,'' in
  \emph{International Conference on Artificial Intelligence and Statistics
  (AISTATS)}, 2012, pp. 676--684.

\bibitem{liu2010robust}
H.~Liu, L.~J. Latecki, and S.~Yan, ``Robust clustering as ensembles of affinity
  relations,'' in \emph{Advances in Neural Information Processing Systems
  (NIPS)}, 2010, pp. 1414--1422.

\bibitem{bansal2002correlation}
N.~Bansal, A.~Blum, and S.~Chawla, ``Correlation clustering,'' in \emph{The
  43rd Annual IEEE Symposium on Foundations of Computer Science (FOCS)}, 2002,
  pp. 238--247.

\bibitem{ailon2008aggregating}
N.~Ailon, M.~Charikar, and A.~Newman, ``Aggregating inconsistent information:
  ranking and clustering,'' \emph{Journal of the ACM (JACM)}, vol.~55, no.~5,
  p.~23, 2008.

\bibitem{li2016motif}
P.~Li, H.~Dau, G.~Puleo, and O.~Milenkovic, ``Motif clustering and overlapping
  clustering for social network analysis,'' in \emph{IEEE Conference on
  Computer Communications (INFOCOM)}, 2017, pp. 109--117.

\bibitem{benson2016higher}
A.~R. Benson, D.~F. Gleich, and J.~Leskovec, ``Higher-order organization of
  complex networks,'' \emph{Science}, vol. 353, no. 6295, pp. 163--166, 2016.

\bibitem{yin2017local}
H.~Yin, A.~R. Benson, J.~Leskovec, and D.~F. Gleich, ``Local higher-order graph
  clustering,'' in \emph{Proceedings of the 23rd ACM International Conference
  on Knowledge Discovery and Data Mining (SIGKDD)}, 2017, pp. 555--564.

\bibitem{zhou2006learning}
D.~Zhou, J.~Huang, and B.~Sch{\"o}lkopf, ``Learning with hypergraphs:
  Clustering, classification, and embedding,'' in \emph{Advances in neural
  information processing systems}, 2007, pp. 1601--1608.

\bibitem{hein2013total}
M.~Hein, S.~Setzer, L.~Jost, and S.~S. Rangapuram, ``The total variation on
  hypergraphs-learning on hypergraphs revisited,'' in \emph{Advances in Neural
  Information Processing Systems (NIPS)}, 2013, pp. 2427--2435.

\bibitem{zhang2017re}
C.~Zhang, S.~Hu, Z.~G. Tang, and T.~H. Chan, ``Re-revisiting learning on
  hypergraphs: confidence interval and subgradient method,'' in
  \emph{International Conference on Machine Learning (ICML)}, 2017, pp.
  4026--4034.

\bibitem{agarwal2005beyond}
S.~Agarwal, J.~Lim, L.~Zelnik-Manor, P.~Perona, D.~Kriegman, and S.~Belongie,
  ``Beyond pairwise clustering,'' in \emph{IEEE Conference on Computer Vision
  and Pattern Recognition (CVPR)}, vol.~2, 2005, pp. 838--845.

\bibitem{kim2011higher}
S.~Kim, S.~Nowozin, P.~Kohli, and C.~D. Yoo, ``Higher-order correlation
  clustering for image segmentation,'' in \emph{Advances in Neural Information
  Processing Systems (NIPS)}, 2011, pp. 1530--1538.

\bibitem{jeong2000large}
H.~Jeong, B.~Tombor, R.~Albert, Z.~N. Oltvai, and A.-L. Barab{\'a}si, ``The
  large-scale organization of metabolic networks,'' \emph{Nature}, vol. 407,
  no. 6804, pp. 651--654, 2000.

\bibitem{chung2007four}
F.~R. Chung, ``Four proofs for the cheeger inequality and graph partition
  algorithms,'' in \emph{Proceedings of ICCM}, vol.~2, 2007, p. 378.

\bibitem{shi2000normalized}
J.~Shi and J.~Malik, ``Normalized cuts and image segmentation,'' \emph{IEEE
  Transactions on Pattern Analysis and Machine Intelligence}, vol.~22, no.~8,
  pp. 888--905, 2000.

\bibitem{kunegis2010spectral}
J.~Kunegis, S.~Schmidt, A.~Lommatzsch, J.~Lerner, E.~W. De~Luca, and
  S.~Albayrak, ``Spectral analysis of signed graphs for clustering, prediction
  and visualization,'' in \emph{SIAM International Conference on Data Mining
  (ICDM)}, 2010, pp. 559--570.

\bibitem{knyazev2017signed}
A.~V. Knyazev, ``Signed laplacian for spectral clustering revisited,''
  \emph{arXiv preprint arXiv:1701.01394}, 2017.

\bibitem{tsourakakis2016scalable}
C.~Tsourakakis, J.~Pachocki, and M.~Mitzenmacher, ``Scalable motif-aware graph
  clustering,'' \emph{arXiv preprint arXiv:1606.06235}, 2016.

\bibitem{devanur2013approximation}
N.~R. Devanur, S.~Dughmi, R.~Schwartz, A.~Sharma, and M.~Singh, ``On the
  approximation of submodular functions,'' \emph{arXiv preprint
  arXiv:1304.4948}, 2013.

\bibitem{agarwal2006higher}
S.~Agarwal, K.~Branson, and S.~Belongie, ``Higher order learning with graphs,''
  in \emph{International Conference on Machine Learning (ICML)}.\hskip 1em plus
  0.5em minus 0.4em\relax ACM, 2006, pp. 17--24.

\bibitem{li2013z}
G.~Li, L.~Qi, and G.~Yu, ``The z-eigenvalues of a symmetric tensor and its
  application to spectral hypergraph theory,'' \emph{Numerical Linear Algebra
  with Applications}, vol.~20, no.~6, pp. 1001--1029, 2013.

\bibitem{benson2015tensor}
A.~R. Benson, D.~F. Gleich, and J.~Leskovec, ``Tensor spectral clustering for
  partitioning higher-order network structures,'' in \emph{Proceedings of the
  2015 SIAM International Conference on Data Mining (ICDM)}, 2015, pp.
  118--126.

\bibitem{louis2015hypergraph}
A.~Louis, ``Hypergraph markov operators, eigenvalues and approximation
  algorithms,'' in \emph{Proceedings of the forty-seventh annual ACM symposium
  on Theory of computing (STOC)}, 2015, pp. 713--722.

\bibitem{ghoshdastidar2014consistency}
D.~Ghoshdastidar and A.~Dukkipati, ``Consistency of spectral partitioning of
  uniform hypergraphs under planted partition model,'' in \emph{Advances in
  Neural Information Processing Systems (NIPS)}, 2014, pp. 397--405.

\bibitem{ghoshdastidar2015consistency}
------, ``Consistency of spectral hypergraph partitioning under planted
  partition model,'' \emph{arXiv preprint arXiv:1505.01582}, 2015.

\bibitem{milo2002network}
R.~Milo, S.~Shen-Orr, S.~Itzkovitz, N.~Kashtan, D.~Chklovskii, and U.~Alon,
  ``Network motifs: simple building blocks of complex networks,''
  \emph{Science}, vol. 298, no. 5594, pp. 824--827, 2002.

\bibitem{FoodWebCite}
``Florida bay trophic exchange matrix,''
  \url{http://vlado.fmf.uni-lj.si/pub/networks/data/bio/foodweb/Florida.paj}.

\bibitem{allesina2005ecological}
S.~Allesina, A.~Bodini, and C.~Bondavalli, ``Ecological subsystems via graph
  theory: the role of strongly connected components,'' \emph{Oikos}, vol. 110,
  no.~1, pp. 164--176, 2005.

\bibitem{awasthi2014learning}
P.~Awasthi, A.~Blum, O.~Sheffet, and A.~Vijayaraghavan, ``Learning mixtures of
  ranking models,'' in \emph{Advances in Neural Information Processing Systems
  (NIPS)}, 2014, pp. 2609--2617.

\bibitem{meek2014recursive}
C.~Meek and M.~Meila, ``Recursive inversion models for permutations,'' in
  \emph{Advances in Neural Information Processing Systems (NIPS)}, 2014, pp.
  631--639.

\bibitem{huang2012uncovering}
J.~Huang, C.~Guestrin \emph{et~al.}, ``Uncovering the riffled independence
  structure of ranked data,'' \emph{Electronic Journal of Statistics}, vol.~6,
  pp. 199--230, 2012.

\bibitem{jiao2017maximum}
J.~Jiao, K.~Venkat, Y.~Han, and T.~Weissman, ``Maximum likelihood estimation of
  functionals of discrete distributions,'' \emph{IEEE Transactions on
  Information Theory}, vol.~63, no.~10, pp. 6774--6798, 2017.

\bibitem{bu2016estimation}
Y.~Bu, S.~Zou, Y.~Liang, and V.~V. Veeravalli, ``Estimation of {KL} divergence:
  optimal minimax rate,'' \emph{arXiv preprint arXiv:1607.02653}, 2016.

\bibitem{gao2017demystifying}
W.~Gao, S.~Oh, and P.~Viswanath, ``Demystifying fixed k-nearest neighbor
  information estimators,'' in \emph{IEEE International Symposium on
  Information Theory (ISIT)}, 2017, pp. 1267--1271.

\bibitem{gormley2007latent}
I.~C. Gormley and T.~B. Murphy, ``A latent space model for rank data,'' in
  \emph{Statistical Network Analysis: Models, Issues, and New
  Directions}.\hskip 1em plus 0.5em minus 0.4em\relax Springer, 2007, pp.
  90--102.

\bibitem{bollobas2013modern}
B.~Bollob{\'a}s, \emph{Modern graph theory}.\hskip 1em plus 0.5em minus
  0.4em\relax Springer Science \& Business Media, 2013, vol. 184.

\bibitem{plackett1975analysis}
R.~L. Plackett, ``The analysis of permutations,'' \emph{Applied Statistics},
  pp. 193--202, 1975.

\bibitem{kamishima2003nantonac}
T.~Kamishima, ``Nantonac collaborative filtering: recommendation based on order
  responses,'' in \emph{ACM International Conference on Knowledge Discovery and
  Data Mining (SIGKDD)}, 2003, pp. 583--588.

\bibitem{vidal2011subspace}
R.~Vidal, ``Subspace clustering,'' \emph{IEEE Signal Processing Magazine},
  vol.~28, no.~2, pp. 52--68, 2011.

\bibitem{chen2009spectral}
G.~Chen and G.~Lerman, ``Spectral curvature clustering (scc),''
  \emph{International Journal of Computer Vision (IJCV)}, vol.~81, no.~3, pp.
  317--330, 2009.

\bibitem{costeira1998multibody}
J.~P. Costeira and T.~Kanade, ``A multibody factorization method for
  independently moving objects,'' \emph{International Journal of Computer
  Vision (IJCV)}, vol.~29, no.~3, pp. 159--179, 1998.

\bibitem{tron2007benchmark}
R.~Tron and R.~Vidal, ``A benchmark for the comparison of 3-d motion
  segmentation algorithms,'' in \emph{IEEE Conference on Computer Vision and
  Pattern Recognition (CVPR)}, 2007, pp. 1--8.

\bibitem{vidal2005generalized}
R.~Vidal, Y.~Ma, and S.~Sastry, ``Generalized principal component analysis
  (gpca),'' \emph{IEEE Transactions on Pattern Analysis and Machine
  Intelligence}, vol.~27, no.~12, pp. 1945--1959, 2005.

\bibitem{yan2006general}
J.~Yan and M.~Pollefeys, ``A general framework for motion segmentation:
  Independent, articulated, rigid, non-rigid, degenerate and non-degenerate,''
  in \emph{European conference on computer vision (ECCV)}, 2006, pp. 94--106.

\bibitem{ma2007segmentation}
Y.~Ma, H.~Derksen, W.~Hong, and J.~Wright, ``Segmentation of multivariate mixed
  data via lossy data coding and compression,'' \emph{IEEE Transactions on
  Pattern Analysis and Machine Intelligence}, vol.~29, no.~9, 2007.

\bibitem{elhamifar2009sparse}
E.~Elhamifar and R.~Vidal, ``Sparse subspace clustering,'' in \emph{IEEE
  Conference on Computer Vision and Pattern Recognition (CVPR)}, 2009, pp.
  2790--2797.

\bibitem{purkait2016clustering}
P.~Purkait, T.-J. Chin, A.~Sadri, and D.~Suter, ``Clustering with hypergraphs:
  the case for large hyperedges,'' \emph{IEEE Transactions on Pattern Analysis
  and Machine Intelligence}, 2016.

\end{thebibliography}
\newpage

\appendix
\section{Conductance of a Cut for an Inhomogeneous Hypergraph}
The conductance of a cut $(S,\bar{S})$ for an inhomogeneous hypergraph is defined according to
\begin{align}\label{cond}
\psi_{\mathcal{H}}(S) = \frac{\text{vol}_{\mathcal{H}}(\partial S)}{\min\{\text{vol}_{\mathcal{H}}(S),\text{vol}_{\mathcal{H}}(\bar{S})\}}.
\end{align}
This definition is consistent with the definition of conductance of a graph cut~\cite{bollobas2013modern} and homogeneoous hypergraph cut~\cite{benson2016higher}. The conductance is an upper bound for the normalized cut 
\begin{align}\label{condupper}
\psi_{\mathcal{H}}(S)  \geq \frac{1}{2}\text{NCut}_{\mathcal{H}}(S).
\end{align}

\section{A Brief Overview of Spectral Graph Partitioning} \label{spectralclustering}
The combinatorial optimization problem of minimizing the NCut~\eqref{ncut} for graphs is known to be NP-complete~\cite{shi2000normalized}. However, an efficient algorithm based on spectral techniques (Algorithm 1 below) can produce a solution within a quadratic factor of the optimum (Theorem.~\ref{spectral}). 

\begin{table}[H]
\centering
\begin{tabular}{l}
\hline
\label{alg:NL}
\textbf{\normalsize Algorithm 1: Spectral Graph Partitioning}\\ 
\textbf{Input:} The adjacency matrix $A$ of the graph $\mathcal{G}$.\\
\ Step 1: Construct the diagonal degree matrix $D$, with $D_{ii}= \sum_{j=1}^n A_{ij}$ for all $i\in[n]$.\\
\ Step 2: Construct the normalized Laplacian matrix  $\mathcal{L}=I- D^{-1/2}AD^{-1/2}$. \\
\ Step 3: Compute the eigenvector $\mathbf{u}=(u_1,u_2,...,u_n)^T$ corresponding to the second \\ smallest eigenvalue of $\mathcal{L}$.\\
\ Step 4: Let $\ell_i$ be the index of the $i$-th smallest entry of $D^{-1/2}\mathbf{u}$.\\
\ Step 5: Compute $S = \arg\min_{S_i, 1\leq i \leq n-1}\text{NCut}_{\mathcal{G}}(S_i)$ over all sets $S_i = \{\ell_1,\ell_2,\ldots,\ell_i\}$.\\
\textbf{Output:} Output $S$ if $|S|<|\bar{S}|$, and $\bar{S}$ otherwise. \\
\hline
\end{tabular}
\end{table}

\begin{theorem}\label{spectral}
[\textbf{Derived based on Theorem 1~\cite{chung2007four}}] Let $\alpha$ denote the value of the NCut output by Algorithm 1, and let $\alpha_{\mathcal{G}}$ denotes the optimal NCut of the graph $\mathcal{G}$. Also, let $\lambda_{\mathcal{G}}$ stand for the second smallest eigenvalue of normalized Laplacian matrix  $\mathcal{L}$. Then, 
\begin{align} \label{ordcheeger}
\alpha_\mathcal{G}  \geq \lambda_{\mathcal{G}}\geq \frac{\alpha^2}{8} \geq \frac{\alpha_\mathcal{G}^2}{8}.
\end{align}
\end{theorem}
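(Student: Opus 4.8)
The plan is to establish the four-term chain $\alpha_{\mathcal{G}} \ge \lambda_{\mathcal{G}} \ge \alpha^2/8 \ge \alpha_{\mathcal{G}}^2/8$ from three ingredients: the Rayleigh-quotient characterization of $\lambda_{\mathcal{G}}$, the algorithmic (sweep-cut) Cheeger inequality, which is exactly Theorem~1 of~\cite{chung2007four}, and the elementary comparison between normalized cut and conductance. The rightmost inequality is immediate: the set $S$ returned by Algorithm~1 is an admissible cut, so $\alpha = \text{NCut}_{\mathcal{G}}(S) \ge \min_{T}\text{NCut}_{\mathcal{G}}(T) = \alpha_{\mathcal{G}}$, and squaring preserves the inequality since both quantities are nonnegative.

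For the leftmost inequality $\alpha_{\mathcal{G}} \ge \lambda_{\mathcal{G}}$ I would show that $\lambda_{\mathcal{G}}$ is a continuous relaxation of NCut. Given any cut $(T,\bar{T})$, set $f\in\reals^{n}$ equal to $1/\text{vol}_{\mathcal{G}}(T)$ on $T$ and $-1/\text{vol}_{\mathcal{G}}(\bar{T})$ on $\bar{T}$, and put $g = D^{1/2}f$. A short computation with the combinatorial Laplacian $L = D - A$ gives $g^{\top}\mathcal{L}g/g^{\top}g = f^{\top}Lf/f^{\top}Df = \text{NCut}_{\mathcal{G}}(T)$, while $\mathbf{1}^{\top}Df = 0$ forces $g \perp D^{1/2}\mathbf{1}$. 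Since $D^{1/2}\mathbf{1}$ is an eigenvector of $\mathcal{L}$ for the smallest eigenvalue $0$, Courant--Fischer yields $\text{NCut}_{\mathcal{G}}(T) \ge \lambda_{\mathcal{G}}$; minimizing over $T$ gives $\alpha_{\mathcal{G}}\ge\lambda_{\mathcal{G}}$.

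The middle inequality $\lambda_{\mathcal{G}} \ge \alpha^2/8$ carries the real content, and this is where I expect the main obstacle. Writing the conductance of a cut as $\psi_{\mathcal{G}}(T) = \text{vol}_{\mathcal{G}}(\partial T)/\min\{\text{vol}_{\mathcal{G}}(T),\text{vol}_{\mathcal{G}}(\bar{T})\}$, the key lemma is the sweep-cut Cheeger bound: if $x = D^{-1/2}\mathbf{u}$ with $\mathbf{u}$ the $\lambda_{\mathcal{G}}$-eigenvector of $\mathcal{L}$, then among the nested sets $S_i = \{\ell_1,\dots,\ell_i\}$ produced in Steps~4--5 of Algorithm~1 one has $\min_i \psi_{\mathcal{G}}(S_i) \le \sqrt{2\lambda_{\mathcal{G}}}$. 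I would reproduce the standard argument from~\cite{chung2007four}: shift $x$ by a $D$-weighted median so that both the positive and the negative support have volume at most $\text{vol}_{\mathcal{G}}(V)/2$, treat the two supports separately, draw a random threshold $t$ with density proportional to $2|t|$, and bound the expected boundary volume of the resulting threshold set by $\sqrt{2R(x)}$ times the expected volume of its smaller side via Cauchy--Schwarz, where $R(x) = x^{\top}Lx/x^{\top}Dx = \lambda_{\mathcal{G}}$; a set meeting the bound in expectation meets it for some index $i$. Finally, since $1/\text{vol}_{\mathcal{G}}(T) + 1/\text{vol}_{\mathcal{G}}(\bar{T}) \le 2/\min\{\text{vol}_{\mathcal{G}}(T),\text{vol}_{\mathcal{G}}(\bar{T})\}$ we have $\text{NCut}_{\mathcal{G}}(T) \le 2\psi_{\mathcal{G}}(T)$ for every $T$, whence $\alpha = \min_i \text{NCut}_{\mathcal{G}}(S_i) \le 2\min_i\psi_{\mathcal{G}}(S_i) \le 2\sqrt{2\lambda_{\mathcal{G}}}$, i.e. $\alpha^2 \le 8\lambda_{\mathcal{G}}$. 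Apart from this sweep-cut lemma every step is a one-line algebraic identity or a monotonicity remark, so the only genuine difficulty is importing (or re-deriving) the rounding half of the Cheeger inequality in the form that controls the best threshold cut rather than merely the optimal conductance.
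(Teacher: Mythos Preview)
Your proposal is correct and mirrors the paper's argument: the paper also gets $\alpha\ge\alpha_{\mathcal G}$ trivially, plugs an indicator-type test vector (they use $g(v)=1_{v\in S_o}-\text{vol}_{\mathcal G}(S_o)/\text{vol}_{\mathcal G}(V)$, which is a scalar multiple of your $f$) into the Rayleigh quotient for $\alpha_{\mathcal G}\ge\lambda_{\mathcal G}$, and combines the sweep-cut Cheeger bound of~\cite{chung2007four} with the inequality $\text{NCut}\le 2\psi$ for $\lambda_{\mathcal G}\ge\alpha^2/8$. The only difference is cosmetic: the paper simply cites Theorem~1 of~\cite{chung2007four} for the middle step rather than re-deriving the threshold-rounding argument you sketch.
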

\begin{proof}
Let $S^*$ denote the output of Algorithm 1. We have $\alpha_\mathcal{G}\leq \text{NCut}_{\mathcal{G}}(S^*) =\alpha$ based on the definition of $\alpha_\mathcal{G}$. Suppose that $\alpha_\mathcal{G}$ is achieved by the set $S_o$, i.e., that
$$\alpha_\mathcal{G}=\text{vol}_{\mathcal{G}}(\partial S_o) \left(\frac{1}{\text{vol}_{\mathcal{G}}(S_o)}+\frac{1}{\text{vol}_{\mathcal{G}}(\bar{S}_o)}\right).$$
Let $V$ be the vertices of $\mathcal{G}$. We define the following function $g$ over $V$ 
$$g(v)=1_{v\in S_o}-\frac{\text{vol}_{\mathcal{G}}(S_o)}{\text{vol}_{\mathcal{G}}(V)}, \quad \forall\; v\in V.$$
Recall that $d_v$ is the degree of vertex $v$. It is easy to check that $\sum_{v\in V} g(v)d_v = 0$. Hence, due to the definition of $\lambda_\mathcal{G}$, we have 
$$\lambda_\mathcal{G} \leq \frac{g^T(D-A)g}{g^TDg} =\alpha_\mathcal{G},$$
where $A$ denotes the adjacency matrix of the graph and $D$ is a diagonal matrix whose diagonal entries equal to the degrees of the corresponding vertices. According to Theorem 1~\cite{chung2007four} and the results of~\eqref{condupper}, we have 
$$\lambda_\mathcal{G}\geq \frac{\psi_{\mathcal{G}}^2(S^*)}{2}\geq \frac{\alpha^2}{8},$$
which concludes the proof. 
\end{proof}

For $k$-way partitions, Step 3 of the previously described algorithm entails computing the eigenvectors corresponds to the $k$ smallest eigenvalues of $\mathcal{L}$. The $i$-th components of these $k$ eigenvectors may be viewed as the coordinates of a representation for vertex $v_i$. Partitioning is performed by using the eigenvector representations of the vertices in some distance-based clustering algorithms, such as $k$-means. 

\section{Proof of Theorem~\ref{theorem1}}
Assume that the optimization problem~\eqref{P1} is feasible for all InH-hyperedges, and recall that $\beta^*=\max_{e\in E}\beta^{(e)}$. Then $\mathcal{G}=(V, E_o, w)$ can constantly approximate the original hypergraph $\mathcal{H}$ in the sense that
\begin{align}\label{constant}
\text{Vol}_{\mathcal{H}}(\partial S)\leq \text{Vol}_\mathcal{G}(\partial S)\leq \beta^* \text{Vol}_{\mathcal{H}}(\partial S),\;\;
\text{Vol}_{\mathcal{H}}(S) \leq \text{Vol}_\mathcal{G}(S)\leq \beta^* \text{Vol}_{\mathcal{H}}(S).
\end{align}
Furthermore, recall that $w_{v\tilde{v}}$ is the weight of the edge $v\tilde{v}$ of the graph $\mathcal{G}$. If the weights $w_{v\tilde{v}}$ are non-negative for all $\{v,\tilde{v}\}\in E_o$, we can use Theorem~\ref{spectral} when performing Algorithm 1 over the graph $\mathcal{G}$. Combining Theorem~\ref{spectral} and equation~\eqref{constant}, we have
\begin{align}
(\beta^*)^3\alpha_\mathcal{H} \geq \frac{(\alpha^*)^2}{8} \geq \frac{\alpha_\mathcal{H}^2}{8},
\end{align}
which concludes the proof.

\section{Proof of Theorem~\ref{solvability}}
First, recall that $w_{v\tilde{v}}^{*(e)}$ denotes the projection weight of equation~\eqref{soluP2}, for the case that $w_e(\cdot)$ is submodular:
\begin{align}\label{projw}
&w_{v\tilde{v}}^{*(e)}=\sum_{S\in 2^e/\{\emptyset, e\}}\left[\frac{w_e(S)}{2|S|(\delta(e)-|S|)}1_{|\{v,\tilde{v}\}\cap S|=1} \right.  \\
 -&  \left.\frac{w_e(S)}{2(|S|+1)(\delta(e)-|S|-1)}1_{|\{v,\tilde{v}\}\cap S|=0}-\frac{w_e(S)}{2(|S|-1)(\delta(e)-|S|+1)}1_{|\{v,\tilde{v}\}\cap S|=2}\right] \nonumber
\end{align}
We start by proving that for a fixed pair of vertices $v$ and $\tilde{v}$, the weights $w_{v\tilde{v}}^{*(e)}$ are nonnegative provided that the $w_e(\cdot)$ are submodular. Note that the sum on the right-hand side of~\eqref{projw} is over all proper subsets $S$. The coefficients of $w_e(S)$ are positive if and only if $S$ contains exactly one of the endpoints $v$ and $\tilde{v}$. The idea behind the proof is to construct bijections between the subsets with positive coefficients and those with negative coefficients and cancel negative and positive terms. 

We partition the power set $2^e$ into four parts, namely
\begin{align*}
\mathbb{S}_1&\triangleq\{S\in 2^e: v\in S, \tilde{v}\not \in S\}, \\
\mathbb{S}_2&\triangleq\{S\in 2^e: v\not\in S, \tilde{v} \in S\}, \\
\mathbb{S}_3&\triangleq\{S\in 2^e: v\not\in S, \tilde{v}\not \in S\}, \\
\mathbb{S}_4&\triangleq\{S\in 2^e: v\in S, \tilde{v}\in S\}.
\end{align*}
Choose any $S_1\in\mathbb{S}_1$ and construct the unique sets $S_2=S_1/\{v\}\cup\{\tilde{v}\}\in\mathbb{S}_2$, $S_3=S_1/\{v\}\in\mathbb{S}_3$, $S_4=S_1\cup\{\tilde{v}\}\in\mathbb{S}_4$. Consequently, each set may be reconstructed from another set in the group, and we denote this set of bijective relations by $S_1\leftrightarrow S_2\leftrightarrow S_3\leftrightarrow S_4$. Let $s=|S_1|$. Due to the way the sets $S_1$ and $S_2$ are chosen, the corresponding coefficients of $w_e(S_1)$ and $w_e(S_2)$ in~\eqref{soluP2} are both equal to 
$$\frac{1}{2s(\delta(e)-s)}.$$ 
We also observe that the corresponding coefficients of $w_e(S_3)$ and $w_e(S_4)$ are 
$$-\frac{1}{2s(\delta(e)-s)}.$$ 
Note that the submodularity property
\begin{align*}
w_e(S_1)+w_e(S_2)\geq w_e(S_3)+w_e(S_4),
\end{align*}
allows us to cancel out the negative terms in the sum~\eqref{projw}. This proves the claimed result. 

Next, we prove that the optimization problem~\eqref{P1} has a feasible solution.

Recall that $G_e = (V^{(e)}, E^{(e)},w^{(e)})$ is the subgraph obtained by projecting $e$. Set $w^{(e)}=w^{*(e)}$. For simplicity of notation, we denote the volume of the boundary of $S$ over $G_e$ as 
\begin{align*}
\text{Vol}_{G_e}(\partial S) =\sum_{v\in S, \tilde{v}\in e/S}w_{v\tilde{v}}^{(e)},\quad\quad \text{for $S\in 2^e$}.
\end{align*}
The existence of a feasible solution of the optimization problem may be verified by checking that for any $S\in 2^e/\{\emptyset,e\}$, and for a given $\beta^{(e)}$, we have the following bounds on the volume of the boundary of $S$: 
\begin{align*}
w_e(S) \leq \text{Vol}_{G_e}(\partial S) \leq \beta^{(e)} w_e(S). 
\end{align*}
Due to symmetry, we only need to perform the verification for sets $S$ of different cardinalities $|S|\leq \delta(e)/2$. This verification is performed on a case-by-case bases, as we could not establish a general proof for arbitrary degree $\delta(e)\geq 2$. In what follows, we show that the claim holds true for all $\delta(e)\leq 7$; based on several special cases considered, we conjecture that the result is also true for all values of $\delta(e)$ greater than seven. 

For notational simplicity, we henceforth assume that the vertices in $e$ are labeled by elements in $\{1,2,3,...,\delta(e)\}$. 

First, note that by combining symmetry and submodularity, we can easily show that
\begin{align*}
w_e(S_1)+w_e(S_2)=w_e(S_1)+w_e(\bar{S_2})\geq w_e(S_1\cup \bar{S_2}) +w_e(S_1\cap \bar{S_2}) = w_e(S_2/S_1)+ w_e(S_1/S_2). 
\end{align*}
We iteratively use this equality in our subsequent proofs, following a specific notational format for all relevant inequalities:  
\begin{align*}
v_{i_1},...,v_{i_r}\in S_1,\, v_{j_{1}},...,v_{j_s}\in S_2,\; \emph{Weight inequality} \; \Longrightarrow \; \emph{Volume inequality}.
\end{align*}
The above line asserts that for all ordered subsets $(v_{i_1},...,v_{i_r})$ and $(v_{j_{1}},...,v_{j_s})$ chosen from $S_1$ and $S_2$ without replacement, respectively, we have that the \emph{Weight inequalities} follow based on the properties of $w_e(\cdot)$. These \emph{Weight inequalities} are consequently inserted into the formula for the volume $\text{Vol}_{G_e}(S)$ to arrive at the \emph{Volume inequality} for $\text{Vol}_{G_e}(S)$. 

For $\delta(e)=2$, the projection~\eqref{projw} is just a ``self-projection'': It is easy to check that for any singleton $S$, $\text{Vol}_{G_e}(\partial S)=w_e(S)$ and hence $\beta^{(e)}=1$. We next establish the same claim for larger hyperedge sizes $\delta(e)$.

\subsection{$\delta(e)=3,\,\beta^{(e)}=1$}
By using the symmetry property of $w_e(\cdot)$ we have 
\begin{align*}
w_{12}^{*(e)}= \frac{1}{2}(w_e(\{1\})+w_e(\{2\}))-w_e(\{3\}).
\end{align*}
Therefore, $\text{Vol}_{G_e}(\partial \{1\})=w_{12}^{*(e)} + w_{13}^{*(e)} = w_e(\{1\})$ and hence $\beta^{(e)}=1$.

\subsection{$\delta(e)=4,\,\beta^{(e)}=3/2$}
By using the symmetry property of $w_e(\cdot)$ we have 
\begin{align*}
&w_{12}^{*(e)} \\
= & \frac{1}{3}(w_e(\{1\})+w_e(\{2\}))-\frac{1}{4}(w_e(\{3\})+w_e(\{4\})) \\
+&\frac{1}{4}w_e(\{1,3\})+w_e(\{1,4\}))-\frac{1}{3}w_e(\{1,2\}).
\end{align*}
The basic idea behind the proof of the equalities to follow is to carefully select subsets for which the submodular inequality involving $w_e(\cdot)$ may be used to eliminate the terms corresponding to the volumes $\text{Vol}_{G_e}(\partial S)$.

\textbf{Case $S=\{1\}$:}
\begin{align*}
&\text{Vol}_{G_e}(\partial \{1\})  \\
= &w_e(\{1\})-\frac{1}{6}(w_e(\{2\})+w_e(\{3\})+w_e(\{4\})) \\
 + &\frac{1}{6}(w_e(\{1,2\})+w_e(\{1,3\})+w_e(\{1,4\}))
\end{align*}
\begin{align*}
&v_1 = 1,\;v_2,v_3\in\{2,3,4\},\;w_e(\{v_1,v_2\})+w_e(\{v_1,v_3\})\geq w_e(\{v_2\})+w_e(\{v_3\}) \\
& \quad\quad\quad\quad\quad\quad\quad\quad\quad\quad\quad\quad\quad\quad\quad\quad\quad\quad\quad\quad\quad\quad\Longrightarrow \quad \text{Vol}_{G_e}(\partial \{1\})\geq w_e(\{1\}).  
\end{align*}
\begin{align*}
&v_1=1,\;v_2\in\{2,3,4\},\;w_e(\{v_1,v_2\})\leq w_e(\{v_1\})+w_e(\{v_2\})\\
& \quad\quad\quad\quad\quad\quad\quad\quad\quad\quad\quad\quad\quad\quad\quad\quad\quad\quad\quad\quad\quad\quad\Longrightarrow  \quad \text{Vol}_{G_e}(\partial \{1\})\leq \frac{3}{2}w_e(\{1\}).
\end{align*}
\textbf{Case $S=\{1,2\}$:}
\begin{align*}
&\text{Vol}_{G_e}(\partial \{1,2\}) \\
 =& \frac{1}{6}(w_e(\{1\})+w_e(\{2\})+w_e(\{3\})+w_e(\{4\})) \\
+&w_e(\{1,2\})-\frac{1}{6}(w_e(\{1,3\})+w_e(\{1,4\}))
\end{align*}
\begin{align*}
&v_1\in \{1,2\}, \;v_2\in\{3,4\},\;w_e(\{v_1\})+w_e(\{v_2\})\geq w_e(\{v_1,v_2\}) \quad \\
& \quad\quad\quad\quad\quad\quad\quad\quad\quad\quad\quad\quad\quad\quad\quad\quad\quad\quad\quad\quad\quad\quad \Longrightarrow  \quad \text{Vol}_{G_e}(\partial \{1,2\})\geq w_e(\{1,2\}).  
\end{align*}
\begin{align*}
&v_1,v_2\in\{1,2\}, v_3\in\{3,4\},\;  w_e(\{v_1\})+w_e(\{v_3\})\leq w_e(\{v_1,v_2\})+w_e(\{v_2,v_3\}) \\
& \quad\quad\quad\quad\quad\quad\quad\quad\quad\quad\quad\quad\quad\quad\quad\quad\quad\quad\quad\quad\quad\quad \Longrightarrow \quad \text{Vol}_{G_e}(\partial \{1,2\})\leq \frac{4}{3}w_e(\{1,2\}).
\end{align*}
%
\subsection{$\delta(e)=5,\,\beta^{(e)}=2$}
By using the symmetry property of $w_e(\cdot)$ we have 
\begin{align*}
&w_{12}^{*(e)}\\
=& \frac{1}{4}(w_e(\{1\})+w_e(\{2\}))-\frac{1}{6}(w_e(\{3\})+w_e(\{4\})+w_e(\{5\})) -\frac{1}{4}w_e(\{1,2\})\\
+&\frac{1}{6}(w_e(\{1,3\})+w_e(\{1,4\})+w_e(\{1,5\})+w_e(\{2,3\})+w_e(\{2,4\})+w_e(\{2,5\}))\\
-&\frac{1}{6}(w_e(\{3,4\})+w_e(\{3,5\})+w_e(\{4,5\})).
\end{align*}

\textbf{Case $S=\{1\}$:}
\begin{align*}
&\text{Vol}_{G_e}(\partial \{1\})  \\
= &w_e(\{1\})-\frac{1}{4}(w_e(\{2\})+w_e(\{3\})+w_e(\{4\})+w_e(\{5\})) \\
+& \frac{1}{4}(w_e(\{1,2\})+w_e(\{1,3\})+w_e(\{1,4\})+w_e(\{1,5\}))
\end{align*}
\begin{align*}
&v_1 = 1,\;v_2,v_3\in\{2,3,4,5\},\;w_e(\{v_1,v_2\})+w_e(\{v_1,v_3\})\geq w_e(\{v_2\})+w_e(\{v_3\}) \\
& \quad\quad\quad\quad\quad\quad\quad\quad\quad\quad\quad\quad\quad\quad\quad\quad\quad\quad\quad\quad\quad\quad\Longrightarrow  \quad \text{Vol}_{G_e}(\partial \{1\})\geq w_e(\{1\}).  
\end{align*}
\begin{align*}
&v_1=1,\;v_2\in\{2,3,4,5\},\;w_e(\{v_1,v_2\})\leq w_e(\{v_1\})+w_e(\{v_2\}) \quad\\
 & \quad\quad\quad\quad\quad\quad\quad\quad\quad\quad\quad\quad\quad\quad\quad\quad\quad\quad\quad\quad\quad\quad\Longrightarrow  \quad \text{Vol}_{G_e}(\partial \{1\})\leq 2w_e(\{1\}).
\end{align*}
\textbf{Case $S=\{1,2\}$:}
\begin{align*}
&\text{Vol}_{G_e}(\partial \{1,2\}) \\
 =& \frac{1}{4}(w_e(\{1\})+w_e(\{2\}))-\frac{1}{6}(w_e(\{3\})+w_e(\{4\})+w_e(\{5\}))+ w_e(\{1,2\})\\
- &\frac{1}{12}(w_e(\{1,3\})+w_e(\{1,4\})+w_e(\{1,5\})+w_e(\{2,3\})+w_e(\{2,4\})+w_e(\{2,5\}))\\
+&\frac{1}{3}(w_e(\{3,4\})+w_e(\{3,5\})+w_e(\{4,5\}))
\end{align*}
\begin{align*}
&v_1,v_2,v_3\in\{3,4,5\},\;w_e(\{v_2\})+w_e(\{v_3\})\leq w_e(\{v_1,v_2\})+w_e(\{v_1,v_3\}) \\
&v_1\in \{1,2\}, \;v_2\in\{3,4,5\},\;w_e(\{v_1,v_2\})\leq w_e(\{v_1\})+w_e(\{v_2\})  \quad\\
& \quad\quad\quad\quad\quad\quad\quad\quad\quad\quad\quad\quad\quad\quad\quad\quad\quad\quad\quad\quad\quad\quad\Longrightarrow  \quad \text{Vol}_{G_e}(\partial \{1,2\})\geq w_e(\{1,2\}).  
\end{align*}
\begin{align*}
&v_1\in\{1,2\},\;v_2,v_3\in\{3,4,5\},\;w_e(\{v_1,v_2\})+w_e(\{v_1,v_3\})\geq w_e(\{v_2\})+w_e(\{v_3\}) \\
&v_1,v_2\in\{1,2\},\;v_3,v_4,v_5\in\{3,4,5\},\;w_e(\{v_3,v_4\})\leq w_e(\{v_1,v_2\})+w_e(\{v_5\})  \quad \\
& \quad\quad\quad\quad\quad\quad\quad\quad\quad\quad\quad\quad\quad\quad\quad\quad\quad\quad\quad\quad\quad\quad\Longrightarrow  \quad \text{Vol}_{G_e}(\partial \{1,2\})\leq 2 w_e(\{1,2\}).
\end{align*}

\subsection{$\delta(e)=6,\,\beta^{(e)}=4$}
By using the symmetry property of $w_e(\cdot)$ we have 
\begin{align*}
&w_{12}^{*(e)}\\
=& \frac{1}{5}(w_e(\{1\})+w_e(\{2\}))-\frac{1}{8}(w_e(\{3\})+w_e(\{4\})+w_e(\{5\})+w_e(\{6\})) -\frac{1}{5}w_e(\{1,2\}) \\
+&\frac{1}{8}(w_e(\{1,3\})+w_e(\{1,4\})+w_e(\{1,5\})+w_e(\{1,6\})+w_e(\{2,3\})+w_e(\{2,4\})\\
&\quad \quad \quad \quad \quad\quad+w_e(\{2,5\})+w_e(\{2,6\}))\\
-&\frac{1}{9}(w_e(\{3,4\})+w_e(\{3,5\})+w_e(\{3,6\})+w_e(\{4,5\})+w_e(\{4,6\})+w_e(\{5,6\}))\\
	-&\frac{1}{9}(w_e(\{1,2,3\})+w_e(\{1,2,4\})+w_e(\{1,2,5\})+w_e(\{1,2,6\}))\\ 
	+&\frac{1}{8}(w_e(\{1,3,4\})+w_e(\{1,3,5\})+w_e(\{1,3,6\})+w_e(\{1,4,5\})\\
&\quad \quad \quad \quad \quad\quad	+w_e(\{1,4,6\})+w_e(\{1,5,6\}))\\
\end{align*}

\textbf{Case $S=\{1\}$:}
\begin{align*}
&\text{Vol}_{G_e}(\partial \{1\})  \\
=& w_e(\{1\})-\frac{3}{10}(w_e(\{2\})+w_e(\{3\})+w_e(\{4\})+w_e(\{5\})+w_e(\{6\}))\\
		+ &\frac{3}{10}(w_e(\{1,2\})+w_e(\{1,3\})+w_e(\{1,4\})+w_e(\{1,5\})+w_e(\{1,6\}))\\
		-&\frac{1}{12}(w_e(\{2,3\})+w_e(\{2,4\})+w_e(\{2,5\})+w_e(\{2,6\})+w_e(\{3,4\}) \\
		& \quad \quad \quad \quad \quad\quad  +w_e(\{3,5\})+w_e(\{3,6\})+w_e(\{4,5\})+w_e(\{4,6\})+w_e(\{5,6\})) \\
		+& \frac{1}{12}(w_e(\{1,2,3\})+w_e(\{1,2,4\})+w_e(\{1,2,5\})+w_e(\{1,2,6\}) \\
		& \quad \quad \quad \quad \quad\quad\quad+w_e(\{1,3,4\})+w_e(\{1,3,5\})+w_e(\{1,3,6\})\\
		& \quad \quad \quad \quad \quad\quad\quad+w_e(\{1,4,5\})+w_e(\{1,4,6\})+w_e(\{1,5,6\})) 
\end{align*}
\begin{align*}
&v_1 = 1,\;v_2,v_3\in\{2,3,4,5,6\},\;w_e(\{v_1,v_2\})+w_e(\{v_1,v_3\})\geq w_e(\{v_2\})+w_e(\{v_3\}) \\
&v_1 = 1,\;v_2,v_3,v_4,v_5\in\{2,3,4,5,6\},\\
&\quad\quad\quad\quad\quad\quad\quad \;w_e(\{v_1,v_2,v_3\})+w_e(\{v_1,v_4,v_5\})\geq w_e(\{v_2,v_3\})+w_e(\{v_4,v_5\})  \\
& \quad\quad\quad\quad\quad\quad\quad\quad\quad\quad\quad\quad\quad\quad\quad\quad\quad\quad\quad\quad\quad\quad\Longrightarrow \text{Vol}_{G_e}(\partial \{1\})\geq w_e(\{1\}).  \\
\end{align*}
\begin{align*}
&v_1=1,\;v_2\in\{2,3,4,5,6\},\;w_e(\{v_1,v_2\})\leq w_e(\{v_1\})+w_e(\{v_2\}) \\
&v_1=1,\;v_2,v_3\in\{2,3,4,5,6\},\;w_e(\{v_1,v_2,v_3\})\leq w_e(\{v_1\})+w_e(\{v_2,v_3\}) \\
 & \quad\quad\quad\quad\quad\quad\quad\quad\quad\quad\quad\quad\quad\quad\quad\quad\quad\quad\quad\quad\quad\quad\Longrightarrow \quad \text{Vol}_{G_e}(\partial \{1\})\leq \frac{10}{3}w_e(\{1\}).
\end{align*}

\textbf{Case $S=\{1,2\}$:}
\begin{align*}
&\text{Vol}_{G_e}(\partial \{1,2\})  \\
=&\frac{3}{10}(w_e(\{1\})+w_e(\{2\}))-\frac{7}{20}(w_e(\{3\})+w_e(\{4\})+w_e(\{5\})+w_e(\{6\}))\\
		+ &w_e(\{1,2\})-\frac{1}{30}(w_e(\{1,3\})+w_e(\{1,4\})+w_e(\{1,5\})+w_e(\{1,6\})+w_e(\{2,3\})\\
		 & \quad\quad\quad\quad+w_e(\{2,4\})+w_e(\{2,5\})+w_e(\{2,6\})) \\
		+&\frac{1}{18}(w_e(\{3,4\})+w_e(\{3,5\})+w_e(\{3,6\})+w_e(\{4,5\})+w_e(\{4,6\})+w_e(\{5,6\})) \\
		+& \frac{5}{12}(w_e(\{1,2,3\})+w_e(\{1,2,4\})+w_e(\{1,2,5\})+w_e(\{1,2,6\})) \\
		-& \frac{1}{18}(w_e(\{1,3,4\})+w_e(\{1,3,5\})+w_e(\{1,3,6\})+w_e(\{1,4,5\})+w_e(\{1,4,6\})+w_e(\{1,5,6\})) 
\end{align*}
\begin{align*}
&v_1\in\{1,2\},\;v_2\in\{3,4,5,6\},\; w_e(\{v_1,v_2\})\leq w_e(\{v_1\})+w_e(\{v_2\}) \\
&v_1=1,\;v_2=2,\;v_3,v_4\in\{3,4,5,6\},\; w_e(\{v_3\})+w_e(\{v_4\})\leq w_e(\{v_1,v_2,v_3\})+w_e(\{v_1,v_2,v_4\}) \\
&v_1=1,\;v_2,v_3\in\{3,4,5,6\},\;w_e(\{v_1,v_2,v_3\})\leq w_e(\{v_1\})+w_e(\{v_2,v_3\}) \\ 
& \quad\quad\quad\quad\quad\quad\quad\quad\quad\quad\quad\quad\quad\quad\quad\quad\quad\quad\quad\quad\quad\quad\Longrightarrow \quad \text{Vol}_{G_e}(\partial \{1,2\})\geq w_e(\{1,2\}).  
\end{align*}
\begin{align*}
&v_1=1,\;v_2=2,\;v_3\in\{3,4,5,6\},\; w_e(\{v_3\}) \geq w_e(\{v_1,v_2,v_3\})-w_e(\{v_1,v_2\})\\
&v_1,\;v_2\in\{1,2\},\;v_3\in\{3,4,5,6\},\;w_e(\{v_1,v_3\})\geq w_e(\{v_1\})+w_e(\{v_1,v_2,v_3\})- w_e(\{v_1,v_2\})\\
&v_1,\;v_2\in\{1,2\}, \;v_3,v_4,v_5,v_6\in\{3,4,5,6\},\;w_e(\{v_1,v_3,v_4\})\geq w_e(\{v_5,v_6\})+w_e(\{v_1\})-w_e(\{v_1,v_2\})\\
& \quad\quad\quad\quad\quad\quad\quad\quad\quad\quad\quad\quad\quad\quad\quad\quad\quad\quad\quad\quad\quad\quad\Longrightarrow  \quad \text{Vol}_{G_e}(\partial \{1,2\})\leq 3w_e(\{1,2\}).
\end{align*}

\textbf{Case $S=\{1,2,3\}$:}
\begin{align*}
&\text{Vol}_{G_e}(\partial \{1,2,3\}) \\
=&-\frac{3}{20}(w_e(\{1\})+w_e(\{2\})+w_e(\{3\})+w_e(\{4\})+w_e(\{5\})+w_e(\{6\}))\\
		+ & \frac{5}{12}(w_e(\{1,2\})+w_e(\{1,3\})+w_e(\{2,3\}+w_e(\{4,5\})+w_e(\{4,6\})+w_e(\{5,6\}))\\
		-& \frac{13}{90}(w_e(\{1,4\})+w_e(\{1,5\})+w_e(\{1,6\})+w_e(\{2,4\})+w_e(\{2,5\})+w_e(\{2,6\})\\
		 & \quad\quad\quad\quad+w_e(\{3,4\})+w_e(\{3,5\})+w_e(\{3,6\})) \\
		+& w_e(\{1,2,3\})+ \frac{1}{18}(w_e(\{1,2,4\})+w_e(\{1,2,5\})+w_e(\{1,2,6\})+w_e(\{1,3,4\})\\
		 & \quad\quad\quad\quad+w_e(\{1,3,5\})+w_e(\{1,3,6\})+w_e(\{1,4,5\})+w_e(\{1,4,6\})+w_e(\{1,5,6\})) 
\end{align*}
\begin{align*}
&v_1,v_2,v_3\in\{1,2,3\},\;v_4,v_5,v_6\in\{4,5,6\},\; w_e(\{v_2\})+w_e(\{v_3\})\leq w_e(\{v_1,v_2\})+ w_e(\{v_1,v_3\}), \\
 & \quad\quad\quad\quad\quad\quad\quad\quad\quad\quad\quad\quad\quad\quad\quad\quad\quad\quad w_e(\{v_5\})+w_e(\{v_6\}) \leq w_e(\{v_4,v_5\})+w_e(\{v_4,v_6\}) \\
&v_1,v_2,v_3\in\{1,2,3\},\;v_4,v_5,v_6\in\{4,5,6\},\; w_e(\{v_1,v_2\})+w_e(\{v_4,v_5\})\leq  w_e(\{v_3,v_6\})\\
&v_1,v_2,v_3\in\{1,2,3\},\;v_4,v_5,v_6\in\{4,5,6\},\; w_e(\{v_1,v_2,v_4\})+w_e(\{v_1,v_4,v_5\})\leq w_e(\{v_1,v_4\})+w_e(\{v_3,v_6\}) \\
  & \quad\quad\quad\quad\quad\quad\quad\quad\quad\quad\quad\quad\quad\quad\quad\quad\quad\quad\quad\quad\quad\quad\quad\quad \Longrightarrow \quad  \text{Vol}_{G_e}(\partial \{1,2,3\})\geq w_e(\{1,2,3\}).  
\end{align*}
\begin{align*}
&v_1,v_2\in\{1,2,3\},\;v_4,v_5\in\{4,5,6\},\; w_e(\{v_1,v_4,v_5\}) \leq w_e(\{v_1,v_4\})+w_e(\{v_1,v_5\})-w_e(\{v_1\}),\\
& \quad\quad\quad\quad\quad\quad\quad\quad\quad\quad\quad\quad\quad\quad\quad w_e(\{v_1,v_2,v_4\}) \leq w_e(\{v_1,v_4\})+w_e(\{v_2,v_4\})-w_e(\{v_4\})\\
&v_1,v_2,v_3\in\{1,2,3\},\;v_4,v_5,v_6\in\{4,5,6\},\;w_e(\{v_1,v_4\})\geq w_e(\{v_1\})+w_e(\{v_5,v_6\})- w_e(\{v_1,v_2,v_3\})\\
&v_1,v_2,v_3\in\{1,2,3\},\; v_4,v_5,v_6\in\{4,5,6\},\;w_e(\{v_1\})+w_e(\{v_2\})\geq w_e(\{v_1,v_2\}), \\
& \quad\quad\quad\quad\quad\quad\quad\quad\quad\quad\quad\quad\quad\quad\quad w_e(\{v_1\})\geq w_e(\{v_2,v_3\})-w_e(\{v_1,v_2,v_3\}), \\
& \quad\quad\quad\quad\quad\quad\quad\quad\quad\quad\quad\quad\quad\quad\quad w_e(\{v_4\})+w_e(\{v_5\})\geq w_e(\{v_4,v_5\}),  \\
& \quad\quad\quad\quad\quad\quad\quad\quad\quad\quad\quad\quad\quad\quad\quad w_e(\{v_4\})\geq w_e(\{v_5,v_6\})-w_e(\{v_4,v_5,v_6\})  \\
  & \quad\quad\quad\quad\quad\quad\quad\quad\quad\quad\quad\quad\quad\quad\quad\quad\quad\quad\quad\quad\quad\quad\quad\quad \Longrightarrow \quad \text{Vol}_{G_e}(\partial \{1,2,3\})\leq 4 w_e(\{1,2,3\}).
\end{align*}

\subsection{$\delta(e)=7,\,\beta^{(e)}=6$}
By using the symmetry property of $w_e(\cdot)$ we have 
\begin{align*}
&w_{12}^{*(e)}\\
=& \frac{1}{6}(w_e(\{1\})+w_e(\{2\}))-\frac{1}{10}(w_e(\{3\})+w_e(\{4\})+w_e(\{5\})+w_e(\{6\})+w_e(\{7\})) \\
-&\frac{1}{6}w_e(\{1,2\}) +\frac{1}{10}(w_e(\{1,3\})+w_e(\{1,4\})+w_e(\{1,5\})+w_e(\{1,6\})+w_e(\{1,7\})\\
&\quad\quad\quad\quad\quad+w_e(\{2,3\})+w_e(\{2,4\})+w_e(\{2,5\})+w_e(\{2,6\})+w_e(\{2,7\}))\\
-&\frac{1}{12}(w_e(\{3,4\})+w_e(\{3,5\})+w_e(\{3,6\})+w_e(\{3,7\})+w_e(\{4,5\})+w_e(\{4,6\})\\
&\quad\quad\quad\quad\quad+w_e(\{4,7\})+w_e(\{5,6\})+w_e(\{5,7\})+w_e(\{6,7\}))\\
	-&\frac{1}{10}(w_e(\{1,2,3\})+w_e(\{1,2,4\})+w_e(\{1,2,5\})+w_e(\{1,2,6\})+w_e(\{1,2,7\}))\\ 
	+&\frac{1}{12}(w_e(\{1,3,4\})+w_e(\{1,3,5\})+w_e(\{1,3,6\})+w_e(\{1,3,7\})+w_e(\{1,4,5\})\\
&\quad \quad \quad \quad \quad	+w_e(\{1,4,6\})+w_e(\{1,4,7\})+w_e(\{1,5,6\})+w_e(\{1,5,7\})+w_e(\{1,6,7\}) \\
&\quad \quad \quad \quad \quad	+w_e(\{2,3,4\})+w_e(\{2,3,5\})+w_e(\{2,3,6\})+w_e(\{2,3,7\})+w_e(\{2,4,5\})\\
&\quad \quad \quad \quad \quad	+w_e(\{2,4,6\})+w_e(\{2,4,7\})+w_e(\{2,5,6\})+w_e(\{2,5,7\})+w_e(\{2,6,7\}))\\
	-&\frac{1}{12}(w_e(\{3,4,5\})+w_e(\{3,4,6\})+w_e(\{3,4,7\})+w_e(\{3,5,6\})+w_e(\{3,5,7\})+w_e(\{3,6,7\})\\
&\quad \quad \quad \quad \quad	+w_e(\{4,5,6\})+w_e(\{4,5,7\})+w_e(\{4,6,7\})+w_e(\{5,6,7\})) 
\end{align*}

\textbf{Case $S=\{1\}$:}
\begin{align*}
&\text{Vol}_{G_e}(\partial \{1\})  \\
=& w_e(\{1\})-\frac{1}{3}(w_e(\{2\})+w_e(\{3\})+w_e(\{4\})+w_e(\{5\})+w_e(\{6\})+w_e(\{7\}))\\
		+ &\frac{1}{3}(w_e(\{1,2\})+w_e(\{1,3\})+w_e(\{1,4\})+w_e(\{1,5\})+w_e(\{1,6\})+w_e(\{1,7\}))\\
		-&\frac{2}{15}(w_e(\{2,3\})+w_e(\{2,4\})+w_e(\{2,5\})+w_e(\{2,6\})+w_e(\{2,7\})+w_e(\{3,4\}) \\
		& \quad \quad \quad \quad \quad\quad  +w_e(\{3,5\})+w_e(\{3,6\})+w_e(\{3,7\})+w_e(\{4,5\})+w_e(\{4,6\})\\
		&\quad \quad \quad \quad \quad\quad+w_e(\{4,7\})+w_e(\{5,6\})++w_e(\{5,7\})+w_e(\{6,7\})) \\
		+& \frac{2}{15}(w_e(\{1,2,3\})+w_e(\{1,2,4\})+w_e(\{1,2,5\})+w_e(\{1,2,6\})+w_e(\{1,2,7\})+w_e(\{1,3,4\}) \\
		& \quad \quad \quad \quad \quad\quad+w_e(\{1,3,5\})+w_e(\{1,3,6\})+w_e(\{1,3,7\})+w_e(\{1,4,5\})+w_e(\{1,4,6\})\\
		& \quad \quad \quad \quad \quad\quad+w_e(\{1,4,7\})+w_e(\{1,5,6\}))+w_e(\{1,5,7\})++w_e(\{1,6,7\}) \\
\end{align*}
\begin{align*}
&v_1 = 1,\;v_2,v_3\in\{2,3,4,5,6,7\},\;w_e(\{v_1,v_2\})+w_e(\{v_1,v_3\})\geq w_e(\{v_2\})+w_e(\{v_3\}) \\
&v_1 = 1,\;v_2,v_3,v_4,v_5\in\{2,3,4,5,6,7\},\\
&\quad\quad\quad\quad\quad\quad\quad \;w_e(\{v_1,v_2,v_3\})+w_e(\{v_1,v_4,v_5\})\geq w_e(\{v_2,v_3\})+w_e(\{v_4,v_5\})  \\
& \quad\quad\quad\quad\quad\quad\quad\quad\quad\quad\quad\quad\quad\quad\quad\quad\quad\quad\quad\quad\quad\quad\Longrightarrow \text{Vol}_{G_e}(\partial \{1\})\geq w_e(\{1\}).  \\
\end{align*}
\begin{align*}
&v_1=1,\;v_2\in\{2,3,4,5,6,7\},\;w_e(\{v_1,v_2\})\leq w_e(\{v_1\})+w_e(\{v_2\}) \\
&v_1=1,\;v_2,v_3\in\{2,3,4,5,6,7\},\;w_e(\{v_1,v_2,v_3\})\leq w_e(\{v_1\})+w_e(\{v_2,v_3\}) \\
 & \quad\quad\quad\quad\quad\quad\quad\quad\quad\quad\quad\quad\quad\quad\quad\quad\quad\quad\quad\quad\quad\quad\Longrightarrow \quad \text{Vol}_{G_e}(\partial \{1\})\leq 5w_e(\{1\}).
\end{align*}

\textbf{Case $S=\{1,2\}$:}
\begin{align*}
&\text{Vol}_{G_e}(\partial \{1,2\})  \\
=&\frac{1}{3}(w_e(\{1\})+w_e(\{2\}))-\frac{7}{15}(w_e(\{3\})+w_e(\{4\})+w_e(\{5\})+w_e(\{6\})+w_e(\{7\}))\\
+ &w_e(\{1,2\})-\frac{1}{10}(w_e(\{3,4\}) +w_e(\{3,5\})+w_e(\{3,6\})+w_e(\{3,7\})+w_e(\{4,5\})\\
&\quad\quad\quad\quad\quad+w_e(\{4,6\})+w_e(\{4,7\})+w_e(\{5,6\})+w_e(\{5,7\})+w_e(\{6,7\}))\\
+& \frac{7}{15}(w_e(\{1,2,3\})+w_e(\{1,2,4\})+w_e(\{1,2,5\})+w_e(\{1,2,6\}+w_e(\{1,2,7\})) \\
-&\frac{1}{30}(w_e(\{1,3,4\})+w_e(\{1,3,5\})+w_e(\{1,3,6\})+w_e(\{1,3,7\})+w_e(\{1,4,5\})\\
&\quad \quad \quad \quad \quad	+w_e(\{1,4,6\})+w_e(\{1,4,7\})+w_e(\{1,5,6\})+w_e(\{1,5,7\})+w_e(\{1,6,7\}) \\
&\quad \quad \quad \quad \quad	+w_e(\{2,3,4\})+w_e(\{2,3,5\})+w_e(\{2,3,6\})+w_e(\{2,3,7\})+w_e(\{2,4,5\})\\
&\quad \quad \quad \quad \quad	+w_e(\{2,4,6\})+w_e(\{2,4,7\})+w_e(\{2,5,6\})+w_e(\{2,5,7\})+w_e(\{2,6,7\}))\\
+&\frac{1}{6}(w_e(\{3,4,5\})+w_e(\{3,4,6\})+w_e(\{3,4,7\})+w_e(\{3,5,6\})+w_e(\{3,5,7\})+w_e(\{3,6,7\})\\
&\quad \quad \quad \quad \quad	+w_e(\{4,5,6\})+w_e(\{4,5,7\})+w_e(\{4,6,7\})+w_e(\{5,6,7\})) 
\end{align*}
\begin{align*}
&v_1,v_2\in\{1,2\},\;v_3,v_4,v_5,v_6,v_7\in\{3,4,5,6,7\},\;  w_e(\{v_2,v_6,v_7\})\leq w_e(\{v_1\})+w_e(\{v_3,v_4,v_5\}) \\
&v_1=1, v_2 =2, v_3,v_4,v_5,v_6,v_7\in\{3,4,5,6,7\},\;  \\
&\quad\quad\quad\quad\quad\quad\quad \quad\quad\quad\quad\quad\quad\quad w_e(\{v_6,v_7\})\leq w_e(\{v_1,v_2,v_3\})+w_e(\{v_3,v_4,v_5\})-w_e(\{v_3\})\\
&v_1=1, v_2 =2, v_3,v_4\in\{3,4,5,6,7\},\;  w_e(\{v_3\})+w_e(\{v_4\})\leq w_e(\{v_1,v_2,v_3\})+w_e(\{v_1,v_2,v_4\})\\
 &\quad\quad\quad\quad\quad\quad\quad\quad\quad\quad\quad\quad\quad\quad\quad\quad\quad\quad\quad\quad\quad\quad\Longrightarrow \quad \text{Vol}_{G_e}(\partial \{1,2\})\geq w_e(\{1,2\}).  
\end{align*}
\begin{align*}
&v_1=1,\;v_2=2,\;v_3\in\{3,4,5,6\},\; w_e(\{v_3\}) \geq w_e(\{v_1,v_2,v_3\})-w_e(\{v_1,v_2\})\\
&v_1=1,\;v_2=2,\;v_3,v_4,v_5,v_6,v_7\in\{3,4,5,6\}, w_e(\{v_3,v_4\}) \geq w_e(\{v_5,v_6,v_7\})-w_e(\{v_1,v_2\})\\
&v_1,\;v_2\in\{1,2\},\;v_3,v_4,v_5,v_6,v_7\in\{3,4,5,6\}, \\
&\quad\quad\quad\quad\quad\quad\quad \quad\quad\quad \quad\quad\quad  w_e(\{v_1,v_3,v_4\}) \geq w_e(\{v_5,v_6,v_7\})+w_e(\{v_1\})-w_e(\{v_1,v_2\})\\
& \quad\quad\quad\quad\quad\quad\quad\quad\quad\quad\quad\quad\quad\quad\quad\quad\quad\quad\quad\quad\quad\quad\Longrightarrow  \quad \text{Vol}_{G_e}(\partial \{1,2\})\leq 5w_e(\{1,2\}).
\end{align*}

\textbf{Case $S=\{1,2,3\}$:}
\begin{align*}
&\text{Vol}_{G_e}(\partial \{1,2,3\}) \\
=&-\frac{2}{15}(w_e(\{1\})+w_e(\{2\})+w_e(\{3\}))-\frac{2}{5}(w_e(\{4\})+w_e(\{5\})+w_e(\{6\})+w_e(\{7\}))\\
		+ & \frac{7}{15}(w_e(\{1,2\})+w_e(\{1,3\})+w_e(\{2,3\}) \\
		-& \frac{1}{6}(w_e(\{1,4\})+w_e(\{1,5\})+w_e(\{1,6\})+w_e(\{1,7\})+w_e(\{2,4\})+w_e(\{2,5\})\\
		&+w_e(\{2,6\})+w_e(\{2,7\})+w_e(\{3,4\})+w_e(\{3,5\})+w_e(\{3,6\})+w_e(\{3,7\})) \\
		+&\frac{1}{10}(w_e(\{4,5\})+w_e(\{4,6\})+w_e(\{4,7\})+w_e(\{5,6\})+w_e(\{5,7\})+w_e(\{6,7\}))\\
		+& w_e(\{1,2,3\})+ \frac{2}{15}(w_e(\{1,2,4\})+w_e(\{1,2,5\})+w_e(\{1,2,6\})+w_e(\{1,2,7\})\\
		 &+w_e(\{1,3,4\})+w_e(\{1,3,5\})+w_e(\{1,3,6\})+w_e(\{1,3,7\})\\
		 &+w_e(\{2,3,4\})+w_e(\{2,3,5\})+w_e(\{2,3,6\})+w_e(\{2,3,7\})\\
		-&\frac{1}{30}(w_e(\{1,4,5\})+w_e(\{1,4,6\})+w_e(\{1,4,7\})+w_e(\{1,5,6\})+w_e(\{1,5,7\})+w_e(\{1,6,7\}) \\
		&+ w_e(\{2,4,5\})+w_e(\{2,4,6\})+w_e(\{2,4,7\})+w_e(\{2,5,6\})+w_e(\{2,5,7\})+w_e(\{2,6,7\}) \\
		&+w_e(\{3,4,5\})+w_e(\{3,4,6\})+w_e(\{3,4,7\})+w_e(\{3,5,6\})+w_e(\{3,5,7\})+w_e(\{3,6,7\})) \\
		+&\frac{1}{2}(w_e(\{4,5,6\})+w_e(\{4,5,7\})+w_e(\{4,6,7\})+w_e(\{5,6,7\})) 
\end{align*}
\begin{align*}
&v_1,v_2,v_3\in\{1,2,3\},\; w_e(\{v_1\})+w_e(\{v_2\})\leq w_e(\{v_1,v_3\})+ w_e(\{v_2,v_3\}), \\
&v_1,v_2,v_3\in\{1,2,3\},\; v_4,v_5,v_6,v_7\in\{4,5,6,7\}, \\
&\quad\quad\quad\quad\quad\quad\quad \quad\quad\quad \quad\quad\quad w_e(\{v_4\})\leq w_e(\{v_1,v_2,v_4\})+ w_e(\{v_4,v_5,v_6\})-w_e(\{v_3,v_7\}), \\
&v_1,v_2,v_3\in\{1,2,3\},\; v_4,v_5,v_6,v_7\in\{4,5,6,7\},\; w_e(\{v_3,v_7\})\leq w_e(\{v_1,v_2\})+ w_e(\{v_4,v_5,v_6\}), \\
&v_1,v_2,v_3\in\{1,2,3\},\; v_4,v_5,v_6,v_7\in\{4,5,6,7\},\; w_e(\{v_1,v_4,v_5\})\leq w_e(\{v_2,v_3\})+ w_e(\{v_6.v_7\}), \\
& \quad\quad\quad\quad\quad\quad\quad\quad\quad\quad\quad\quad\quad\quad\quad\quad\quad\quad\quad\quad\quad\quad\quad\quad \Longrightarrow \quad  \text{Vol}_{G_e}(\partial \{1,2,3\})\geq w_e(\{1,2,3\}).  
\end{align*}
\begin{align*}
&v_1,v_2,v_3\in\{1,2,3\},\; v_4,v_5,v_6,v_7\in\{4,5,6,7\},\\
&\quad\quad\quad\quad\quad\quad\quad \quad\quad\quad \quad\quad\quad  w_e(\{v_1,v_4,v_5\})\geq w_e(\{v_2,v_3\})+ w_e(\{v_4,v_5\})-w_e(\{v_1,v_2,v_3\}), \\
&v_1,v_2,v_3\in\{1,2,3\},\; v_4,v_5,v_6,v_7\in\{4,5,6,7\},\\
 &\quad\quad\quad\quad\quad\quad\quad \quad\quad\quad \quad\quad\quad w_e(\{v_1,v_4\})\geq w_e(\{v_1\})+w_e(\{v_5,v_6,v_7\})-w_e(\{v_1,v_2,v_3\}),\\
&v_1,v_2\in\{1,2,3\},\; v_3\in\{4,5,6,7\},\; w_e(\{v_3\})\geq w_e(\{v_1,v_2,v_3\})- w_e(\{v_1,v_2\}), \\
&v_1,v_2,v_3\in\{1,2,3\},\; w_e(\{v_1\})\geq w_e(\{v_2,v_3\})-w_e(\{v_1,v_2,v_3\}), \\
  & \quad\quad\quad\quad\quad\quad\quad\quad\quad\quad\quad\quad\quad\quad\quad\quad\quad\quad\quad\quad\quad\quad\quad\quad \Longrightarrow \quad \text{Vol}_{G_e}(\partial \{1,2,3\})\leq 6 w_e(\{1,2,3\}).
\end{align*}

\section{Proof of Theorem~\ref{optimality}}
Suppose that $\{f_{v\tilde{v}}^o\}_{\{v\tilde{v}\in E^{(e)}\}} and\,\beta^{o}$ represent the optimal solution of the optimization problem~\eqref{P2}. To prove that the values of $\beta^{o}$ are equal to those listed in Table~\ref{optbeta}, we proceed as follows. The result of the optimization procedure over $\{f_{v\tilde{v}}\}_{\{v\tilde{v}\in E^{(e)}\}}$ produces the weights (coefficients) of the linear mapping. The optimization problem~\eqref{P2} may be rewritten as 
\begin{align}
\min_{\{f_{v\tilde{v}}\}_{\{v,\tilde{v}\}\in E_o}}& \quad \beta  \label{P2eq} \\
\text{s.t.}&\quad w_{e}(S)\leq \sum_{v\in S, \tilde{v}\in e/S}f_{v\tilde{v}}(w_e) \leq \beta w_e(S),  \quad \forall \text{ $S\in 2^e$ and submodular $w_{e}(\cdot)$.} \nonumber
\end{align}
which is essential a linear programming. However, as there are uncountable many choices for the submodular functions $w_e(\cdot)$, the above optimization problem has uncountable many constraints. However, given a finite collection of inhomogenous cost functions $\Omega=\{w_{e}^{(1)}(\cdot),w_{e}^{(2)}(\cdot),...\}$ all of which are submodular, the following linear program
\begin{align}
\min_{\{f_{v\tilde{v}}\}_{\{v,\tilde{v}\}\in E_o} }& \quad \beta \label{P3}\\
\text{s.t.}&\quad w_{e}^{(r)}(S)\leq \sum_{v\in S, \tilde{v}\in e/S}f_{v\tilde{v}}(w_e^{(r)}) \leq \beta w_{e}^{(r)}(S),  \quad \text{for all $S\in 2^e$ and $w_{e}^{(r)}(\cdot)\in\Omega$.} \nonumber
\end{align} 
can be efficiently computed and yields an optimal $\beta^\Omega$ that provides a lower bound for $\beta^{o}$. Therefore, we just need to identify the sets $\Omega$ for different values of $\delta(e)$ that meet the values of $\beta^{(e)}$ listed in Table~\ref{optbeta}. 

The proof involves solving the linear program~\eqref{P3}. We start by identifying some structural properties of the problem.

\begin{proposition}\label{symmtriccoef}
Given $\delta(e)$, the optimization problem~\eqref{P2} over $\{f_{v\tilde{v}}\}_{\{v\tilde{v}\in E^{(e)}\}}$ involves $3[\frac{\delta(e)}{2}]-1$ variables, where $[a]$ denotes the largest integer not greater than $a$. 
\end{proposition}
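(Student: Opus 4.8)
The plan is to exploit the full symmetry of the problem under the symmetric group $\mathbb{S}_{\delta(e)}$ acting on the vertices of $e$, and to argue that an optimal collection of linear maps $\{f_{v\tilde{v}}\}$ may always be taken to be equivariant under this action. First I would observe that the constraint set of~\eqref{P2} is invariant under relabeling the vertices of $e$: if $\{f_{v\tilde{v}}\}$ is feasible with parameter $\beta$, then so is $\{f_{\pi(v)\pi(\tilde{v})} \circ \pi\}$ for any permutation $\pi$, because the family of submodular weights $w_e(\cdot)$ and the cut constraints $w_e(S) \le \sum_{v\in S,\tilde v \in e/S} f_{v\tilde v}(w_e) \le \beta w_e(S)$ are permuted among themselves. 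Averaging a feasible solution over the group (the constraints are convex and the objective $\beta$ is fixed under the averaging) yields an optimal solution that is itself $\mathbb{S}_{\delta(e)}$-equivariant; hence, without loss of optimality, we may restrict to equivariant linear maps.

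Next I would decompose an equivariant linear map $f_{v\tilde{v}}: w_e \mapsto \mathbb{R}$ into its coordinates. Since $w_e$ is specified by its values $w_e(S)$ over subsets $S$ (paired up by the symmetry $w_e(S)=w_e(e/S)$ and with $w_e(\emptyset)=w_e(e)=0$), a linear functional $f_{v\tilde{v}}$ is a linear combination $\sum_{S} c_{v\tilde v}(S)\, w_e(S)$. Equivariance forces the coefficient $c_{v\tilde v}(S)$ to depend only on the ``type'' of the pair $(\{v,\tilde v\}, S)$, i.e.\ only on $|S|$ and on $|\{v,\tilde v\}\cap S| \in \{0,1,2\}$ — this is exactly the pattern already visible in the closed form~\eqref{soluP2}. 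So the number of free parameters is the number of pairs $(k, j)$ with $k = |S|$ ranging over $1 \le k \le \delta(e)-1$ (equivalently $1\le k \le \lfloor \delta(e)/2\rfloor$ after folding by $S \leftrightarrow e/S$) and $j = |\{v,\tilde v\}\cap S| \in \{0,1,2\}$, subject to the combinatorial feasibility $j \le k$ and $k - j \le \delta(e)-2$. Counting these pairs, one gets three choices of $j$ for each ``generic'' value of $k$ in the middle range, with boundary corrections at the smallest and largest admissible $k$; carrying out this bookkeeping and using the $S\leftrightarrow e/S$ identification to avoid double counting should collapse the total to $3\lfloor \delta(e)/2 \rfloor - 1$.

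I expect the main obstacle to be the careful boundary bookkeeping in the final count: one must be precise about which $(k,j)$ types are actually realizable (for instance $j=2$ requires $k\ge 2$ and $j=0$ requires $k \le \delta(e)-2$), and about the effect of the involution $S \mapsto e/S$, which also swaps $j \leftrightarrow 2-j$ and $k \leftrightarrow \delta(e)-k$, so some parameters get identified and the naive count $3\lfloor\delta(e)/2\rfloor$ is reduced by exactly one. A secondary point requiring care is the justification that the group-averaging step preserves feasibility with the \emph{same} $\beta$: since both the lower bound $w_e(S)$ and the upper bound $\beta w_e(S)$ are themselves symmetric functions of the data (they do not change under relabeling), averaging the $f_{v\tilde v}$ keeps the two-sided inequalities intact for every submodular $w_e$, so no loss in $\beta$ is incurred. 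Once equivariance is established, the dimension count is purely combinatorial and the stated formula follows.
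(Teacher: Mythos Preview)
Your proposal is correct and follows essentially the same route as the paper: reduce to permutation-equivariant linear maps, conclude that each coefficient depends only on the pair $(|S|,\,|\{v,\tilde v\}\cap S|)$, then quotient by the involution $S\leftrightarrow e\setminus S$ (which swaps $(k,j)\leftrightarrow(\delta(e)-k,\,2-j)$) and count. The paper's write-up is slightly different in tone---it argues directly that because every permuted weight $w_e^{(\pi)}(\cdot)=w(\pi(\cdot))$ is itself a submodular constraint, the optimal coefficients may be taken invariant---whereas you make this step explicit via a convex group-averaging argument; your version is arguably more rigorous on that point, but the structure and the final bookkeeping are the same.
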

\begin{proof}
The linear mapping $f_{v\tilde{v}}$ may be written as 
\begin{align*}
f_{v\tilde{v}}(w_e) =\sum_{S\in 2^e} \phi(v\tilde{v}, S) w_e(S), 
\end{align*}
where $\phi(v\tilde{v}, S)$ represent the coefficients that we wish to optimize, and which depend on the edge $v\tilde{v}$ and the subset $S$. Although we have ${\delta(e)\choose 2}\times 2^{\delta(e)}$ coefficients, the coefficients are not independent from each other. To see what kind of dependencies exist, define the set of all permutations of the vertices of $e$ $\pi: e \rightarrow \{1,2,...,\delta(e)\}$; clearly, there are $\delta(e)!$ such permutations $\pi$. Also, define $\pi(S)=\{\pi(v)|v\in S\}$ for $S\subseteq e$. If a set function $w(\cdot)$ over all the subsets of $\{1,2,...,\delta(e)\}$ satisfies the following conditions
\begin{align*}
 w(\emptyset)\quad&=\quad 0, \\
 w(S)\quad&=\quad w(\bar{S}), \\  
 w(S_1)+w(S_2)&\geq w(S_1\cap S_2)+w(S_1\cup S_2),
 \end{align*}
for $S,S_1,S_2\subseteq \{1,2,...,\delta(e)\}$, then one may construct $\delta(e)!$ many inhomogeneous cost functions $w_e^{(\pi)}(\cdot)$ such that for all distinct $\pi$ one has $w_e^{(\pi)}(\cdot) = w(\pi(\cdot))$. As all the weights $w_e^{(\pi)}$ are submodular and appear in the constraints of the optimization problem~\eqref{P2eq}, the coefficients $\phi(v\tilde{v}, S)$ will be invariant under the permutations $\pi$; thus, they will depend only on two parameters, $|\{v,\tilde{v}\}\cap S|$ and $|S|$. We replace $\phi$ with another function $\tilde{\phi}$ to capture this dependence
\begin{align*}
 \phi(v\tilde{v}, S) =\phi(\pi(v)\pi(\tilde{v}), \; \pi(S)) = \tilde{\phi}(|\{v,\tilde{v}\}\cap S|, |S|).
\end{align*}
Moreover, as $w_e(S)$ is symmetric, i.e., as $w_e(S) =w_e(\bar{S})$, we also have 
\begin{align*}
\tilde{\phi}(|\{v,\tilde{v}\}\cap \bar{S}|, |\bar{S}|) =\tilde{\phi}(|\{v,\tilde{v}\}\cap S|, |S|).
\end{align*}
Hence, for any given $\delta(e)$, the set of the coefficients of the linear function may be written as 
\begin{align*}
\Phi = \{\tilde{\phi}(r,s) | &(r,s)\in\{0,1,2\}\times\{1,2,3,...,\delta(e)-2,\delta(e)-1\}/\{(2,1),(0,\delta(e)-1)\},\; \\
&\text{s.t.}\quad \tilde{\phi}(0,s) = \tilde{\phi}(2,\delta(e)-s)\;, \; \tilde{\phi}(1,s) = \tilde{\phi}(1,\delta(e)-s)\}.
\end{align*}
which concludes the proof. 
\end{proof}
Using Proposition~\ref{symmtriccoef}, we can transform the optimization problem~\eqref{P3} into the following form: 
\begin{align*}
&\min_{\Phi} \quad \beta &\nonumber\\
\text{s.t.}\;& w_{e}^{(r)}(S)\leq \sum_{v\in S, \tilde{v}\in e/S}\sum_{S'\subseteq e}\tilde{\phi}(|\{v,\tilde{v}\}\cap S'|, |S'|)w_e^{(r)}(S') \leq \beta w_{e}^{(r)}(S),\; \forall S\in 2^e, \forall w_{e}^{(r)}(\cdot)\in\Omega.
\end{align*} 
For a given $\delta(e)$, we list the sets $\Omega=\{w_e^{(1)},w_e^{(2)},...\}$ in Table.~\ref{omega}. The above linear program yields optimal values of $\beta^{\Omega}$ equal to those listed in Table~\ref{optbeta}. As already pointed out, the cases $\delta(e)=2,3$ are simple to verify, and hence we concentrate on the sets $\Omega$ for $\delta(e)\geq 4$. The case $\delta(e)=7$ is handled similarly but requires a large verification table that we omitted for succinctness. 

\section{Proof of Theorem~\ref{consistency}}
For two pairs of vertices $v\tilde{v}, u\tilde{u}\in E^{(e)}$, count the number of $S\in 2^e/\{\emptyset, e\}$'s with $|S|=k$ and $u\tilde{u}\in \partial S$ that satisfy the following conditions:
\begin{itemize}
\item[1)] $v,\tilde{v}\in\{u,\tilde{u}\}$ and $v\in S, \tilde{v}\in \bar{S}$: The number is ${\delta(e)-2 \choose k-1}$;
\item[2)] $v,\tilde{v}\in\{u,\tilde{u}\}$ and $v, \tilde{v}\in \bar{S}$: The number is $0$;
\item[3)] $v\in\{u,\tilde{u}\}, \tilde{v}\notin \{u,\tilde{u}\}$ and $v\in S, \tilde{v}\in \bar{S}$: The number is ${\delta(e)-3 \choose k-1}$;
\item[4)] $v\in\{u,\tilde{u}\}, \tilde{v}\notin \{u,\tilde{u}\}$ and $v, \tilde{v}\in \bar{S}$: The number is ${\delta(e)-3 \choose k-1}$;
\item[5)] $v\notin\{u,\tilde{u}\}, \tilde{v}\in \{u,\tilde{u}\}$ and $v\in S, \tilde{v}\in \bar{S}$: The number is ${\delta(e)-3 \choose k-2}$; 
\item[6)] $v\notin\{u,\tilde{u}\}, \tilde{v}\in \{u,\tilde{u}\}$ and $v, \tilde{v}\in \bar{S}$: The number is ${\delta(e)-3 \choose k-1}$;
\item[7)] $v,\tilde{v}\notin\{u,\tilde{u}\}$ and $v\in S, \tilde{v}\in \bar{S}$: The number is $2{\delta(e)-4 \choose k-2}$;
\item[8)] $v,\tilde{v}\notin\{u,\tilde{u}\}$ and $v, \tilde{v}\in \bar{S}$: The number is $2{\delta(e)-4 \choose k-1}$. 
\end{itemize}
Moreover, some identities in the following can be demonstrated:
 $$\sum_{k=1}^{\delta(e)-1}{\delta(e)-3 \choose k-1}\frac{1}{k(\delta(e)-k)}\stackrel{a)}{=} \sum_{k=2}^{\delta(e)-1}{\delta(e)-3 \choose k-2}\frac{1}{k(\delta(e)-k)} \stackrel{b)}{=} \sum_{k=1}^{\delta(e)-2}{\delta(e)-3 \choose k-1}\frac{1}{(k+1)(\delta(e)-k-1)};$$
$$\sum_{k=1}^{\delta(e)-1}{\delta(e)-4 \choose k-2}\frac{1}{k(\delta(e)-k)} \stackrel{c)}{=}\sum_{k=1}^{\delta(e)-2}{\delta(e)-4 \choose k-1}\frac{1}{(k+1)(\delta(e)-k-1)}.$$
where the equalities are by substitution: a) $k\rightarrow\delta(e)-k$, b) $k\rightarrow k+1$, c) $k\rightarrow\delta(e)-(k+1)$. 

As we assume that $w_e(S)=\sum_{u\tilde{u}\in \partial S} w_{u\tilde{u}}^{(e)}$, the RHS of formula~\eqref{soluP2} can be decomposed into the weighted sum of $w_{u\tilde{u}}^{e}$. As $w_e$ is symmetric and the above identities can be used,
\begin{align*}
w_{v\tilde{v}}^{*(e)} = &\sum_{S\in 2^e/\{\emptyset, e\}}\left[\frac{w_e(S)}{|S|(\delta(e)-|S|)}1_{v\in S, \tilde{v}\in \bar{S}}-\frac{w_e(S)}{(|S|+1)(\delta(e)-|S|-1)}1_{v, \tilde{v}\in \bar{S}}\right] \\
=&\sum_{k=1}^{\delta(e)-1}\sum_{S:|S|=k}\frac{w_e(S)}{k(\delta(e)-k)}1_{v\in S, \tilde{v}\in \bar{S}}-\sum_{k=1}^{\delta(e)-2}\sum_{S:|S|=k}\frac{w_e(S)}{(k+1)(\delta(e)-k-1)}1_{v, \tilde{v}\in \bar{S}}\\
=&\sum_{k=1}^{\delta(e)-1}\sum_{S:|S|=k}\frac{1}{k(\delta(e)-k)}1_{v\in S, \tilde{v}\in \bar{S}}\sum_{u\tilde{u}\in \partial S}w_{u\tilde{u}}^{e}-\sum_{k=1}^{\delta(e)-2}\sum_{S:|S|=k}\frac{1}{(k+1)(\delta(e)-k-1)}1_{v, \tilde{v}\in \bar{S}}\sum_{u\tilde{u}\in \partial S}w_{u\tilde{u}}^{e} \\
=&\sum_{u\tilde{u}\in \partial S}w_{u\tilde{u}}^{(e)}\left\{\sum_{k=1}^{\delta(e)-1}{\delta(e)-2 \choose k-1}\frac{1}{k(\delta(e)-k)}1_{v,\tilde{v}\in\{u,\tilde{u}\}}\right. \\
+&\left[\sum_{k=1}^{\delta(e)-1}{\delta(e)-3 \choose k-1}\frac{1}{k(\delta(e)-k)} - \sum_{k=1}^{\delta(e)-2}{\delta(e)-3 \choose k-1}\frac{1}{(k+1)(\delta(e)-k-1)}\right] 1_{v\in\{u,\tilde{u}\}, \tilde{v}\notin\{u,\tilde{u}\}} \\
+ &\left[\sum_{k=1}^{\delta(e)-1}{\delta(e)-3 \choose k-2}\frac{1}{k(\delta(e)-k)} - \sum_{k=1}^{\delta(e)-2}{\delta(e)-3 \choose k-1}\frac{1}{(k+1)(\delta(e)-k-1)}\right] 1_{v\notin\{u,\tilde{u}\}, \tilde{v}\in\{u,\tilde{u}\}} \\
+ &\left. 2\left[\sum_{k=1}^{\delta(e)-1}{\delta(e)-4 \choose k-2}\frac{1}{k(\delta(e)-k)} - \sum_{k=1}^{\delta(e)-2}{\delta(e)-4 \choose k-1}\frac{1}{(k+1)(\delta(e)-k-1)}\right] 1_{v\notin\{u,\tilde{u}\}, \tilde{v}\notin\{u,\tilde{u}\}} \right\} \\
= & \sum_{k=1}^{\delta(e)-1}{\delta(e)-2 \choose k-1}\frac{1}{k(\delta(e)-k)} w_{v\tilde{v}}^{(e)} = \frac{2^{\delta(e)}-2}{\delta(e)(\delta(e)-1)}w_{v\tilde{v}}^{(e)}
\end{align*} 
which concludes the proof. 

\section{Complexity analysis}
Recall that the proposed algorithm consists of three computational steps: 1) Projecting each InH-hyperedge onto a subgraph; 2) Combining the subgraphs to create a graph; 3) Performing spectral clustering on the derived graph based on Algorithm 1 (Appendix.~\ref{spectralclustering}). The complexity of the algorithms depends on the complexity of these three steps. Let $\delta^*=\max_{e\in E}\delta(e)$ denote the largest size of a hyperedge. If in the first step we solve the optimization procedure~\eqref{P1} for all InH-hyperedges with at most $2^{\delta(e)}$ constraints, the worst case complexity of the algorithm is $O(2^{c\delta^*}|E|)$, where $c$ is a constant that depends on the LP-solver. The second step has complexity $O((\delta^*)^2|E|)$, while the third step has complexity $O(n^2)$, given that one has to find the eigenvectors corresponding to the extremal eigenvalues. Other benchmark hypergraph clustering algorithms, such as Clique Expansion, Star Expansion~\cite{agarwal2005beyond} and Zhou's normalized hypergraph cut~\cite{zhou2006learning} share two steps of our procedure and hence have the same complexity for the corresponding computations. In practice, we usually deal with hyperedges of small size ($<10$) and hence $\delta^*$ may, for all purposes, be treated as a constant. Hence, the complexity overhead of our method is of the same order as that of the last two steps, and hence we retain the same order of computation as classical homogeneous clustering methods. Nevertheless, to reduce the complexity of InH-partition, one may use predetermined mappings of the form~\eqref{basicform} and~\eqref{soluP2}. In the applications discussed in what follows, we exclusively used this computationally efficient approach.

\section{Discussion of Equation~\eqref{basicform}}
For the case that only the values of $w_e(\{v\})$ are known, we showed that one may perform perfect projections with $\beta^{(e)}=1$. Suppose now that $w^{(e)}$ takes the form~\eqref{basicform}, i.e., that for each pair of vertices $v\tilde{v}$ in $e$, one has
\begin{align*}
w_{v\tilde{v}}^{(e)} = \frac{1}{\delta(e)-2}\left[w_{e}(\{v\})+w_{e}(\{\tilde{v}\})\right]-\frac{1}{(\delta(e)-1)(\delta(e)-2)}\sum_{v'\in e}w_{e}(\{v'\}).
\end{align*}
For all $v\in e$, one can compute
\begin{align*}
\sum_{\tilde{v}\in e/\{v\}} w^{(e)}_{v\tilde{v}} =& \frac{\delta(e)-1}{\delta(e)-2}w_{e}(\{v\})+\frac{1}{\delta(e)-2}\sum_{\tilde{v}\in e/\{v\}}w_{e}(\{\tilde{v}\})-\frac{1}{\delta(e)-2}\sum_{v'\in e}w_{e}(\{v'\})\\
 =&w_{e}(\{v\},
\end{align*}
which confirms that the projection is perfect. When $\delta(e)>3$, a perfect projection is not unique as the problem is underdetermined. It is also easy to check the conditions for nonnegativity of the components of $w^{(e)}$: For each pair of vertices $v\tilde{v}$ in $e$, one requires
\begin{align}\label{recallbasicform}
w_{e}(\{v\})+w_{e}(\{\tilde{v}\}) \geq \frac{1}{(\delta(e)-1)}\sum_{v'\in e}w_{e}(\{v'\}), 
\end{align}
which indicates the weights in $w_e(\cdot)$ associated with different vertices should satisfy a special form of a balancing condition.

\section{Supplementary Applications and Experiments }

\subsection{Structure Learning of Ranking Data}

\textbf{Synthetic data.} We first compare the InH-partition (InH-Par) method with the AnchorsPartition (APar) technique proposed in~\cite{huang2012uncovering} on synthetic data. Note that APar is assumed to know the correct size of the riffled-independent sets while InH-partition automatically determines the sizes of the parts. We set the number elements to $n=16$, and partitioned them into a pair $(S^*,\bar{S}^*)$, where $|S^*|=q$, $1\leq q \leq n$. For a sample set size $m$, we first independently choose scores $s_i,\bar{s}_i\sim$Uniform([0,1]). For $i\in V$ and then generate $m$ rankings via the following procedure: We first use the Plackett-Luce Model~\cite{plackett1975analysis} with parameters $s_i, i \in S^*$ and $\bar{s}_i, i \in \bar{S}^*,$ to generate $\sigma_{S^*}$ and $\sigma_{\bar{S}^*}$. Then, we interleave $\sigma_{S^*}$ and $\sigma_{\bar{S}^*},$ which were sampled uniformly at random without replacement, to form $\sigma$. The performance of the method is characterized via the success rate of full recovery of $(S^*,\bar{S}^*)$. The results of various algorithms based on 100 independently generated sample sets are listed in Figure~\ref{RankingSample} a) and b).  For almost all $m$, InH-partition outperforms APar. Only when $q=4$ and the sample size $m$ is large, InH-partition may offer worse performance than Apar. The explanation for this finding is that InH-partition performs a normalized cut that tends to balance the sizes of different classes. With regards to the computational complexity of the methods, both require one to evaluate the mutual information of all triples of elements at the cost of $O(mn^3)$ operations. To reduce the time complexity of this step, one may sample each triple independently with probability $r$. Results pertaining to triple-sampling with $m=10^4$ are summarized in Figure~\ref{RankingSample} c) and d). 
The InH-partition can achieve high success rate $80\%$ even when only a small fraction of triples ($r<0.2$) is available. On the other hand, APar only works when almost all triples are sampled ($r>0.7$). 

To further test the performance of InH-partition, instead of using the previously described $s_i$ values as the parameters for Plakett-Luce Model, we use the values $s_i^3$ instead. This choice of parameters further restricts the positions of the candidates within $S^*$ and $\bar{S}^*$. Hence, the mutual information of interest is closer to zero and hence harder to estimate. The results for this setting are shown in part e) and f) of Figure~\ref{RankingSample}. As may be seen, in this setting, the performance of APar is poor while that of InH-partition changes little. 
%

\begin{figure}[t]
\centering
\subfloat[]{
\includegraphics[trim={0.1cm 0cm 1.4cm 0.7cm},clip,width=.33\textwidth]{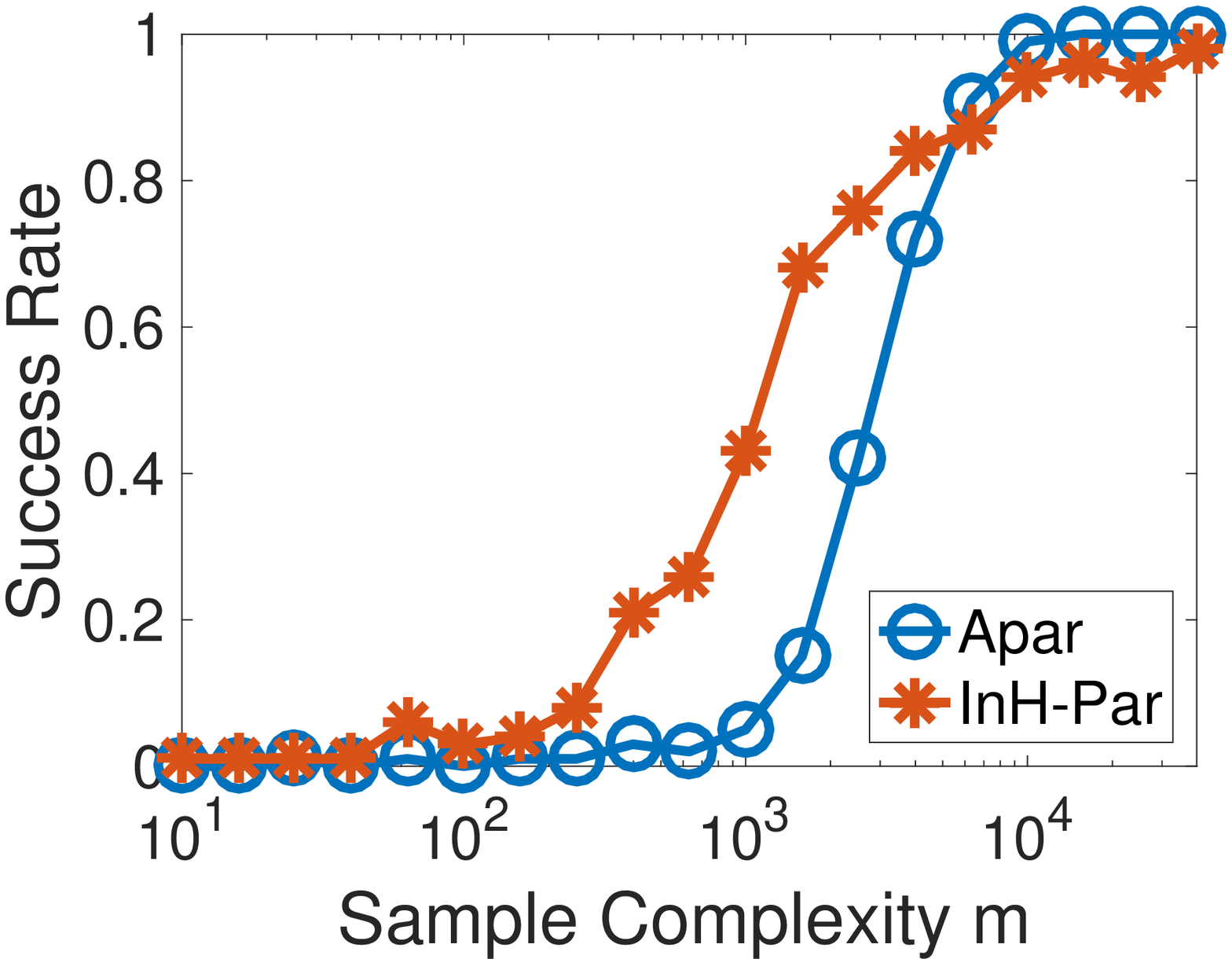}}
\subfloat[]{
\includegraphics[trim={0.1cm 0cm 1.4cm 0.7cm},clip,width=.33\textwidth]{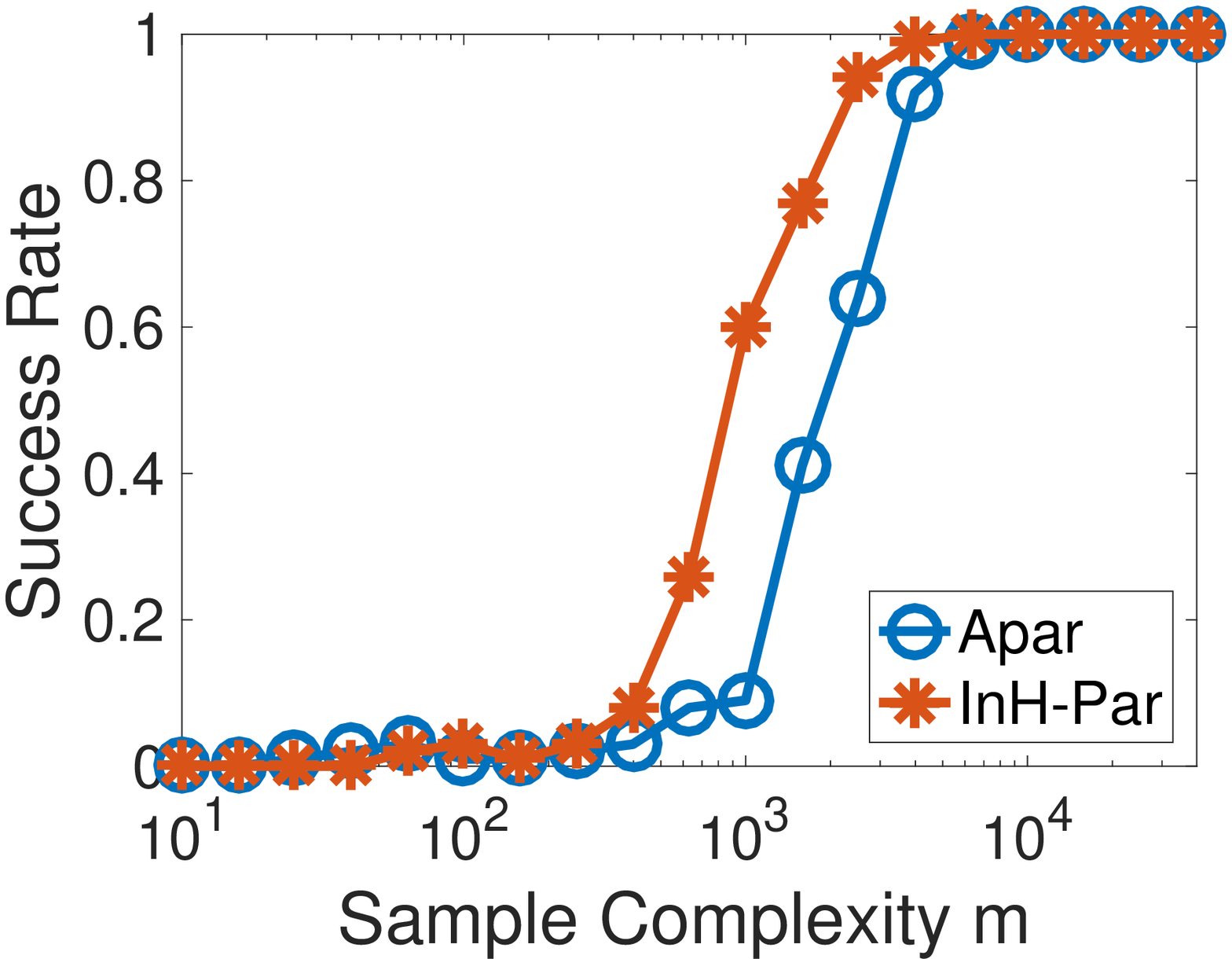}}
\subfloat[]{
\includegraphics[trim={0.1cm 0cm 1.4cm 0.7cm},clip,width=.33\textwidth]{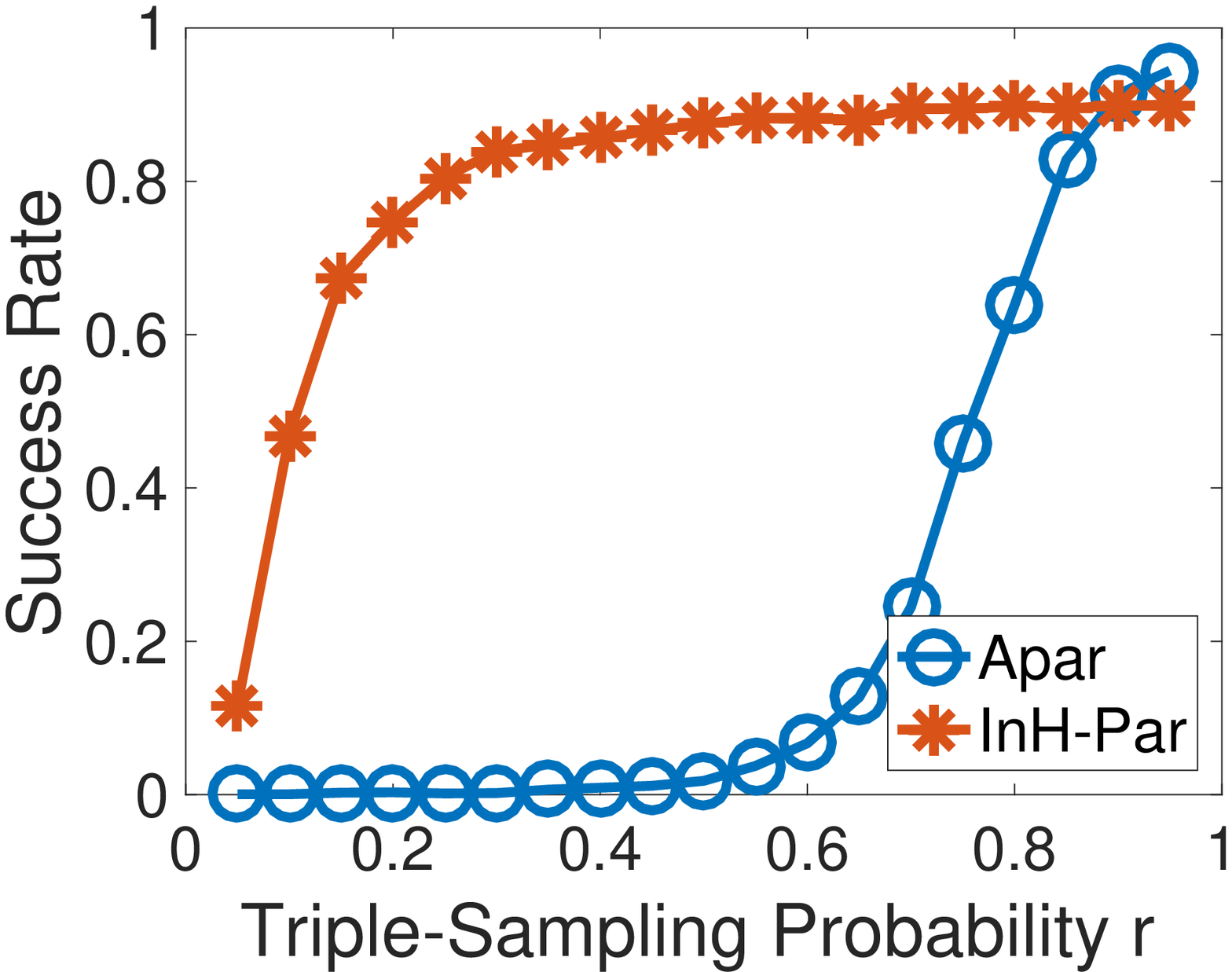}}\\
\subfloat[]{
\includegraphics[trim={0.1cm 0cm 1.4cm 0.7cm},clip,width=.33\textwidth]{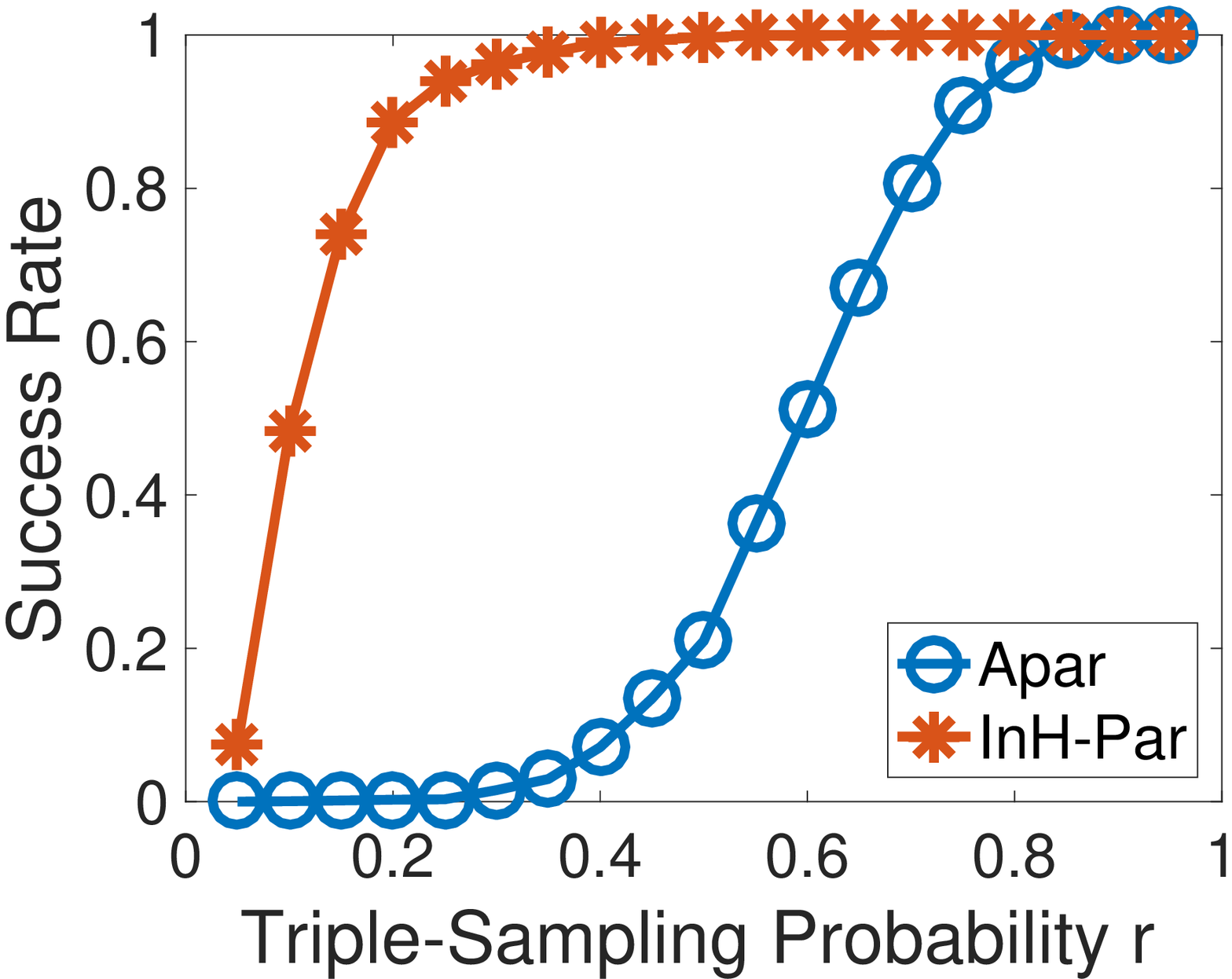}}
\subfloat[]{
\includegraphics[trim={0.1cm 0cm 1.4cm 0.7cm},clip,width=.33\textwidth]{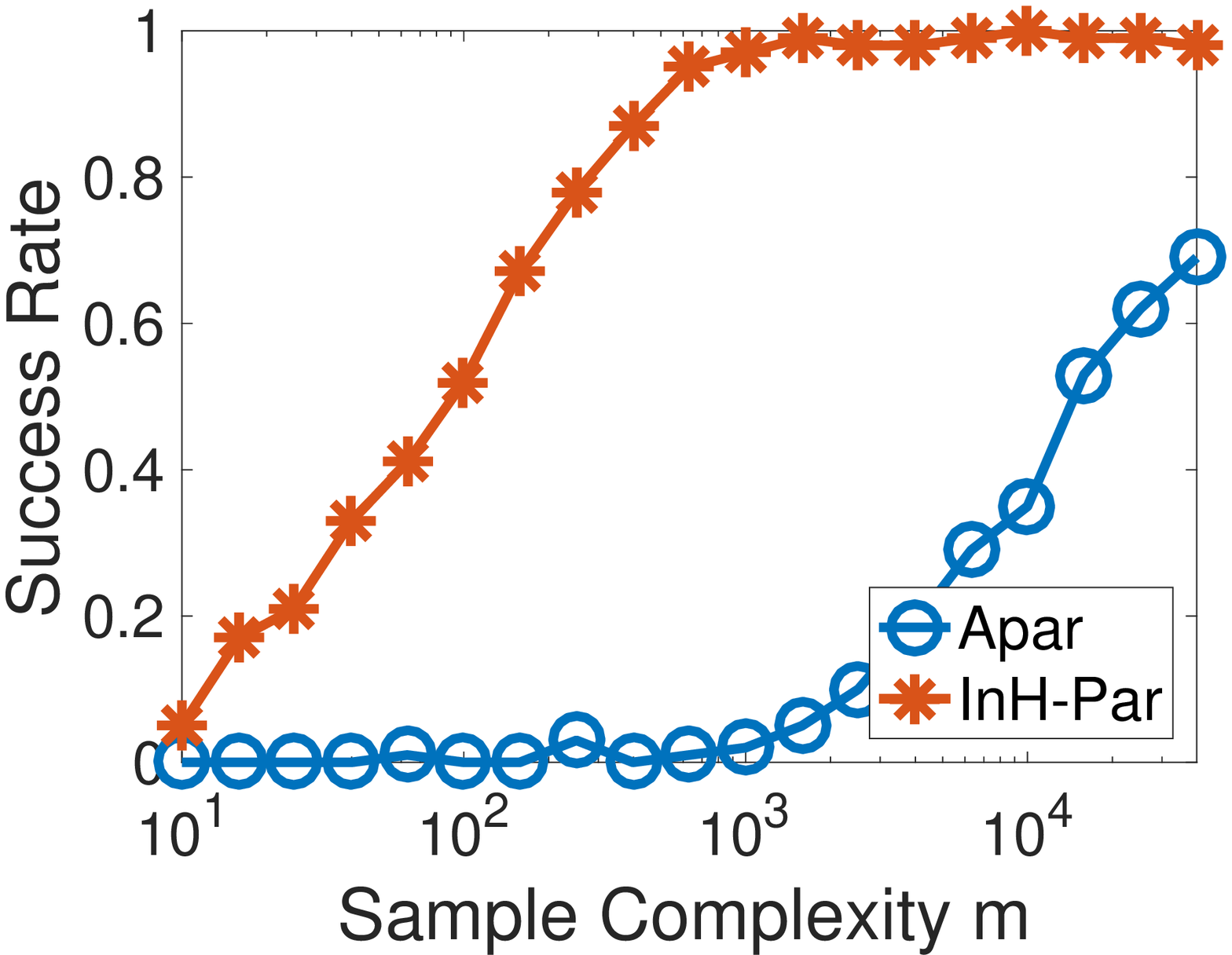}}
\subfloat[]{
\includegraphics[trim={0.1cm 0cm 1.4cm 0.7cm},clip,width=.33\textwidth]{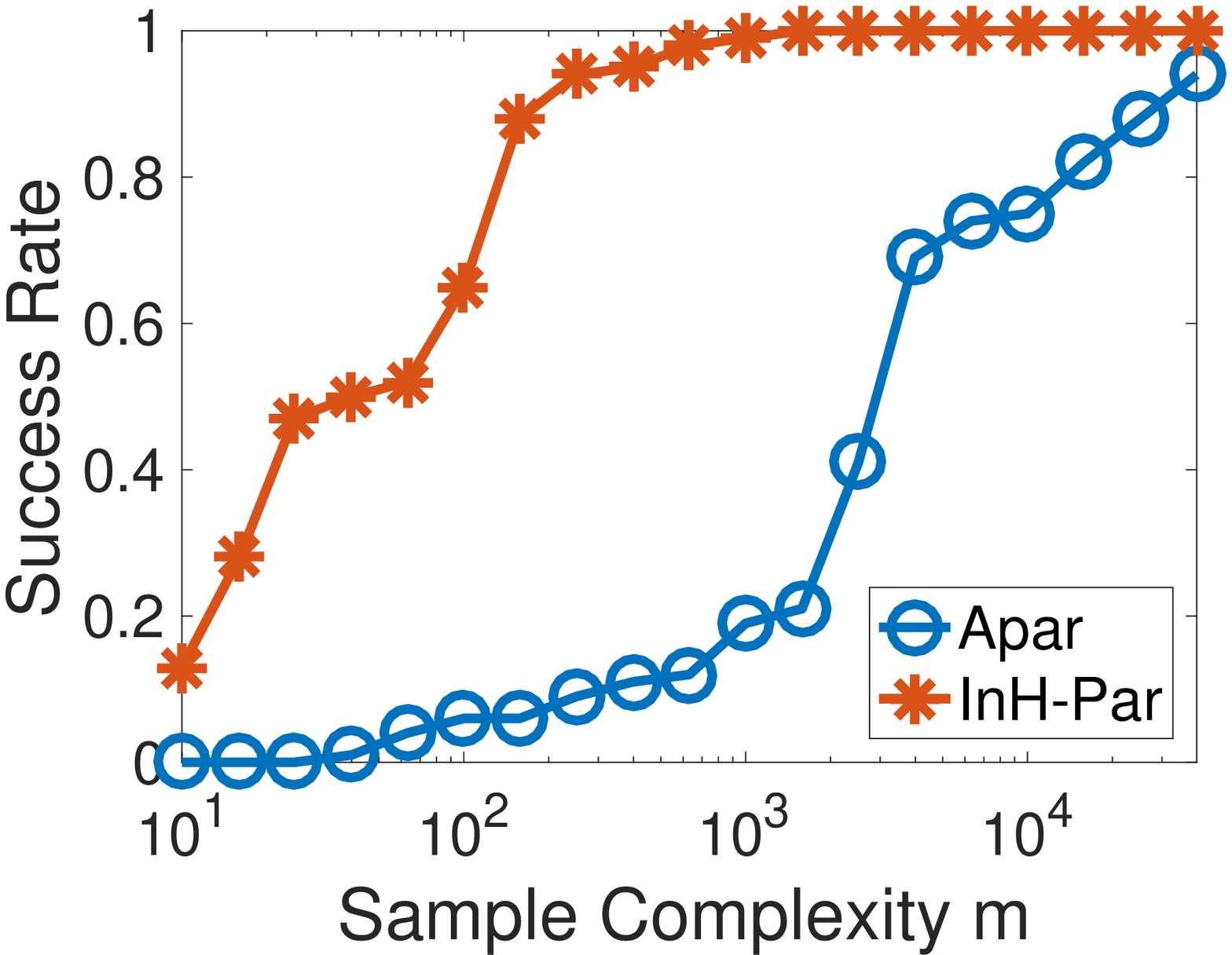}}
\caption{Success rate vs Sample Complexity \& Triple-sampling Rate. a),c): $q=4$ with scores $s_i$; b),d): $q=8$ with scores $s_i$; e) $q=4$ with scores $s_i^3$; f): $q=8$ with scores $s_i^3$.}
\label{RankingSample}
\vspace{-0.3cm}
\end{figure}

\textbf{Real data -} \textbf{Supplement for the Irish Election Dataset~\cite{gormley2007latent}.} The Irish Election Dataset consists of rankings of $14$ candidates from different parties, listed in Table~\ref{Parties}. 
This information was used to perform the learning tasks described in the main text.
\begin{table}[H]
\centering
\caption{
List of candidates from the Meath Constituency Election in 2002  (reproduced from~\cite{huang2012uncovering,gormley2007latent})}
\vspace{0.1in}
\label{Parties}
\begin{tabular}{|c c| c|}
\hline
& Candidate &Party  \\
\hline
1 & Brady, J. & Fianna F\'{a}il \\
2 & Bruton, J. & Fine Gael \\
3 & Colwell, J. & Independent\\
4 & Dempsey, N. & Fianna F\'{a}il  \\
5 & English, D. & Fine Gael \\
6 & Farrelly, J. & Fine Gael \\
7 & Fitzgerald, B. & Independent \\
\hline
\end{tabular}
\begin{tabular}{|c c| c|}
\hline
& Candidate &Party  \\
\hline
8 & Kelly, T. &  Independent \\
9 & O'Brien, P. &  Independent\\
10 & O'Byrne, F. & Green Party\\
11 & Redmond, M. & Christian Solidarity \\
12 & Reilly, J. & Sinn F\'{e}in\\
13 & Wallace, M. &  Fianna F\'{a}il \\
14 & Ward, P. & Labour \\
\hline
\end{tabular}
\end{table}

In addition, we performed the same structure learning task on the sushi preference ranking dataset~\cite{kamishima2003nantonac}. This dataset consists of $5000$ full rankings of ten types of sushi. The different types of sushi evaluated are listed in Table~\ref{Sushi}. We ran InH-partition to split the ten sushi types to obtain a hierarchical clustering structure as the one shown in Figure~\ref{sushih}. The figure reveals two meaningful clusters, $\{5,6\}$ (uni,sake) and $\{3,8,9\}$ (tuna-related sushi): The sushi types labeled by $5$ and $6$ have the commonality of being expensive and branded as ``daring, luxury sushi,'' while sushi types labeled by $3,8,9$ all contain tuna. InH-partition cannot detect the so-called ``vegeterian-choice sushi'' cluster $\{7,10\}$, which was recovered by Apar~\cite{huang2012uncovering}. This may be a consequence of the ambiguity and overlap of clusters, as the cluster $\{4,7\}$ may also be categorized as ``rich in lecithin''. The detailed comparisons between InH-partition and APar are performed based on their ability to detect the two previously described standard clusters, $\{5,6\}$ and $\{3,8,9\},$ using small training sets. The averaged results based on 100 independent tests are depicted in Figure~\ref{sushiResult} a). As may be seen, InH-partition outperforms APar in recovering both the clusters (uni,sake) and (tuna sushi), and hence is superior to APar when learning both classes simultaneously. We also compared InH-partition and APar in the large sample regime ($m=5000$) while using only a subset of triples. The averaged results over $100$ sets of independent samples are shown Figure~\ref{sushiResult} b), again indicating the robustness of InH-partition to missing triple information. 

\begin{table}
\centering
\caption{List of 10 sushi from the sushi preference dataset  (reproduced from~~\cite{kamishima2003nantonac})}
\label{Sushi}
\vspace{0.1in}
\begin{tabular}{|c c| c|}
\hline
& Sushi & Type \\
\hline
1 & ebi & shrimp\\
2 & anago & sea eel \\
3 & maguro & tuna\\
4 & ika & squid  \\
5 & uni & sea urchin\\
\hline
\end{tabular}
\begin{tabular}{|c c| c|}
\hline
& Candidate &Party  \\
\hline
6 & sake &  salmon roe\\
7 & tamago &  egg\\
8 & toro & fatty tuna\\
9 &  tekka-maki & tuna roll \\
10 & kappa-maki & cucumber roll\\
\hline
\end{tabular}
\end{table}

\begin{figure}[t]
\centering
\includegraphics[trim={5.5cm 0 0 0},clip,width=.5\textwidth]{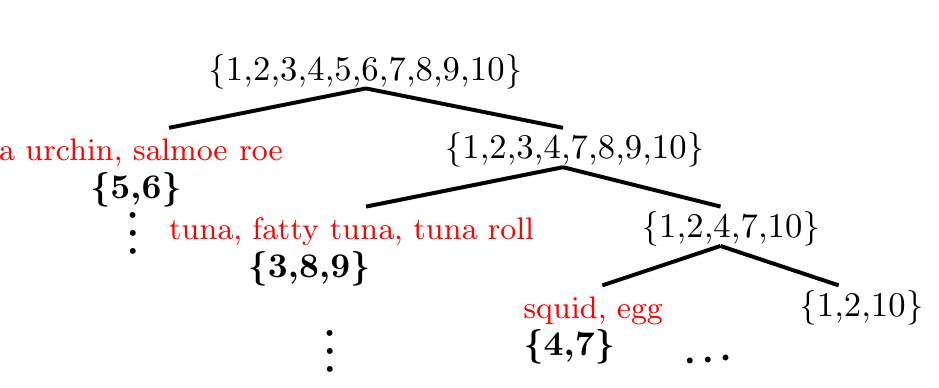}
\caption{Hierarchical partitioning structure of sushi preference detected by InH-Par}
\label{sushih}
\end{figure}

\begin{figure}[t]
\centering
\subfloat[]{
\includegraphics[trim={0.6cm 0cm 1.4cm 0.6cm},clip,width=0.4\textwidth]{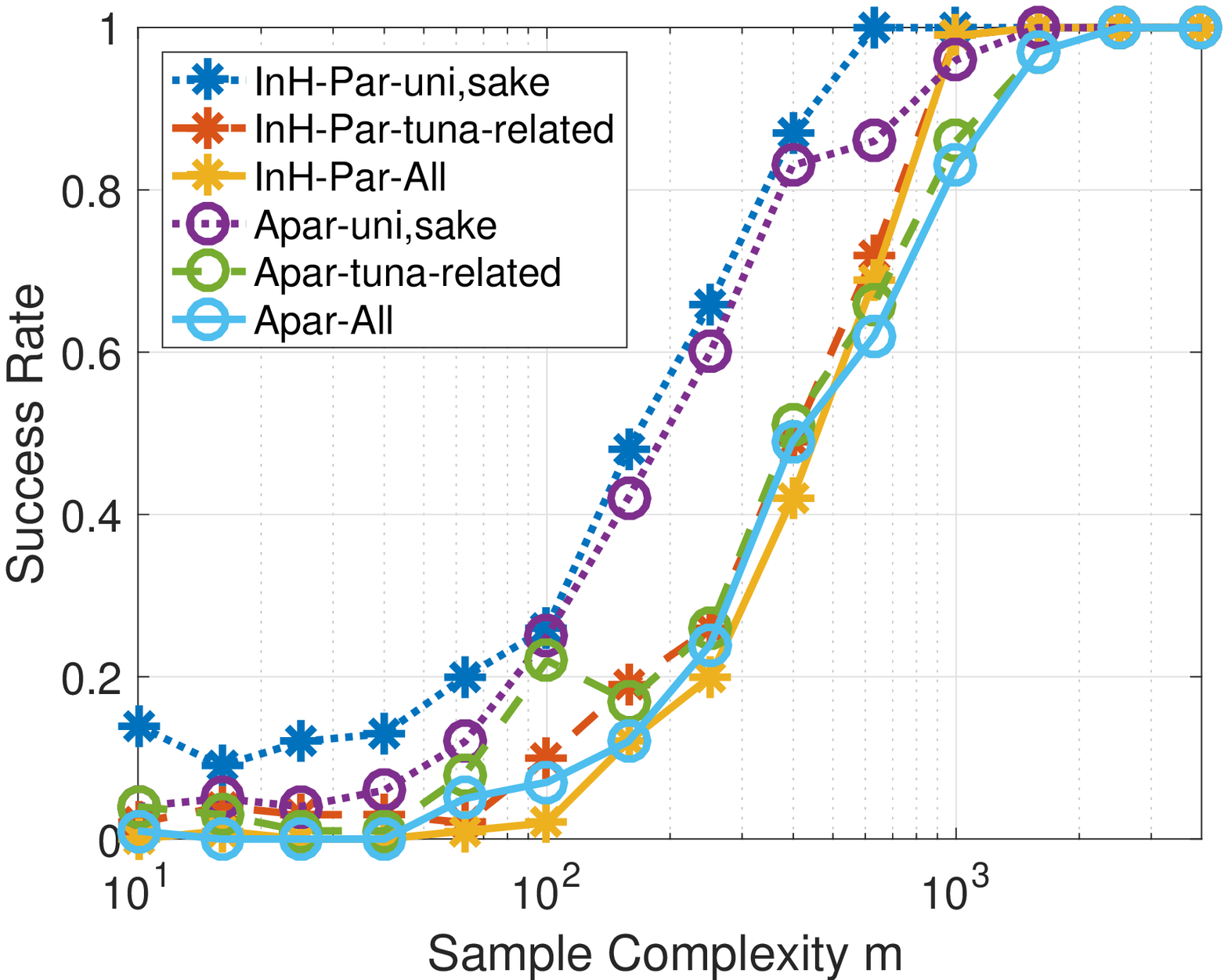}
}
\subfloat[]{
\includegraphics[trim={0.6cm 0cm 1.4cm 0.6cm},clip,width=0.4\textwidth]{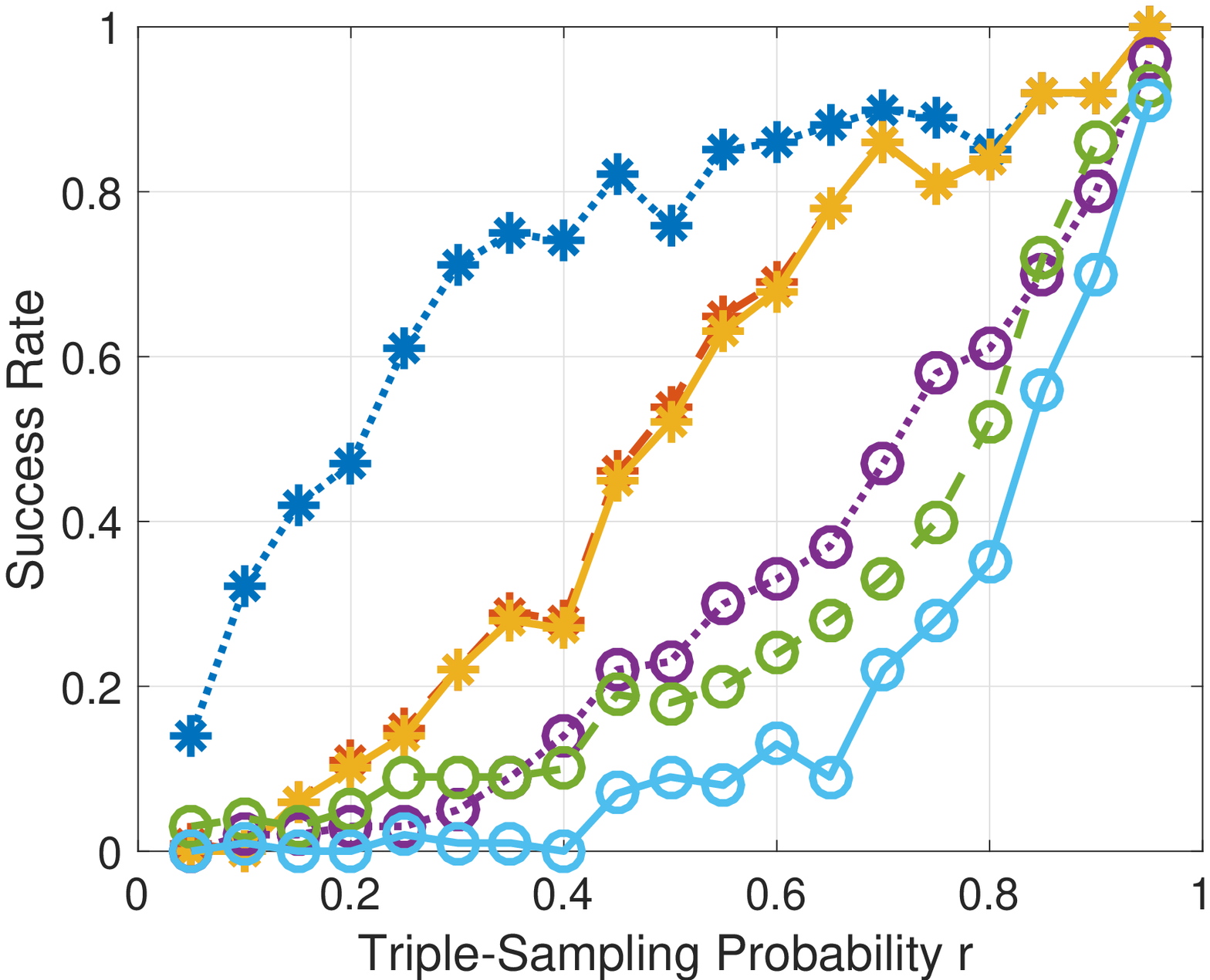}
}
\caption{Clusters detected in the sushi preference dataset: a) Success rate vs Sample Complexity; b) Success rate vs Triple-Sampling Probability.}
\label{sushiResult}
\end{figure}

\subsection{Subspace segmentation}

Subspace segmentation is an extension of traditional data segmentation problems that has the goal to partition data according to their intrinsically embedded subspaces. Among subspace segmentation methods, those based on hypergraph clustering exhibit superior performance compared to others~\cite{vidal2011subspace}. They also exhibit other distinguishing features, such as loose dependence on the choice of parameters~\cite{agarwal2005beyond}, robustness to outliers~\cite{bulo2009game,liu2010robust}, and clustering robustness and accuracy~\cite{chen2009spectral}. 

Hypergraph clustering algorithms are exclusively homogeneous: If the intrinsic affine space is $p$-dimensional ($p$-D), the algorithms use $\psi$-uniform ($\psi> p+1$, typically set to $p+2$) hypergraphs $\mathcal{H}=(V, E)$, where the vertices in $V$ correspond to observed data vectors and the hyperedges in $E$ are chosen $\psi$-tuples of vertices. To each hyperedge $e$ in the hypergraph $\mathcal{H}$ one assigns a weight $w_e$, typically of the form $w_e = \exp(-d_e^2/\theta^2)$, where $d_e$ describes the deviation needed to fit the corresponding $\psi$-tuple of vectors into a $p$-D affine subspace, and $\theta$ represents a tunable parameter obtained by cross validation~\cite{agarwal2005beyond} or computed empirically~\cite{chen2009spectral}. A small value of $d_e$ corresponds to a large value of $w_e$, and indicates that $\psi$-tuples of vectors in $e$ tend to be clustered together. As a good fit of the subspace yields a large weight for the corresponding hyperedge, hypergraph clustering tends to avoid cutting hyperedges of large weight and thus mostly groups vectors within one subspace together. The performance of the methods varies due to different techniques used for computing the deviation $d_e$ and for sampling the hyperedges. Some widely used deviations include $d_e^{\text{H}-1}$, defined as the mean Euclidean distance to the optimal fitted affine subspace~\cite{agarwal2005beyond,bulo2009game,liu2010robust,leordeanu2012efficient}, and the \emph{polar curvature} (PC)~\cite{chen2009spectral}, both of which lead to a homogeneous partition. Instead, we propose to use an inhomogeneous deviation defined as
\begin{align*}
d_e^{\text{InH}}(\{v\}) = \text{Euclidean distance between $v$ and the affine subspace generated by $e/\{v\}$,} \; \text{for all $v\in e$.}
\end{align*}
This deviation measures the ``distance'' needed to fit $v$ into the subspace supported by $e/{v}$ and will be used to construct inhomogeneous cost functions $w_e(\cdot)$ via $w_e(\{v\})=\exp[-d_e^{\text{InH}}(\{v\})^2/\theta^2]$, as described in what follows. Note that the choice of a ``good'' deviation is still an open problem, which may depend on specific datasets. Hence, to make a comprehensive comparison, besides $d_e^{\text{H}-1}$ and PC, we also made use of another homogeneous deviation, $d_e^{\text{H}-2}=\sum_{v\in e}d_e^{\text{InH}}(\{v\})/\delta(e)$ which is the average of all the defined inhomogeneous deviations. Comparing the results obtained from $d_e^{\text{InH}}$ with $d_e^{\text{H}-2}$ will highlight the improvements obtained from InH-partition, rather than from the choice of the deviation. The inhomogeneous form of deviation $d_e^{\text{InH}}(\cdot)$ has a geometric interpretation based on the polytopes ($p=1$) shown in Figure~\ref{subspaceill} (with $d_e^{\text{H}-1}$ and $d_e^{\text{H}-2}$). There, $d_e^{\text{InH}}(\{v\})$ is the distance of $\{v\}$ from the hyperplane spanned by $e/\{v\}$. The induced inhomogeneous weight $w_e(\{v\})=\exp(-d_e^{\text{InH}}(\{v\})^2/\theta^2)$  may be interpreted as the cost of separating $\{v\}$ away from the other points (vertices) in $e$. 

\begin{figure}[h]
\centering
\begin{minipage}{0.6\textwidth}
\includegraphics[trim={1cm 5.5cm 1.5cm 0.5cm},clip,width=\textwidth]{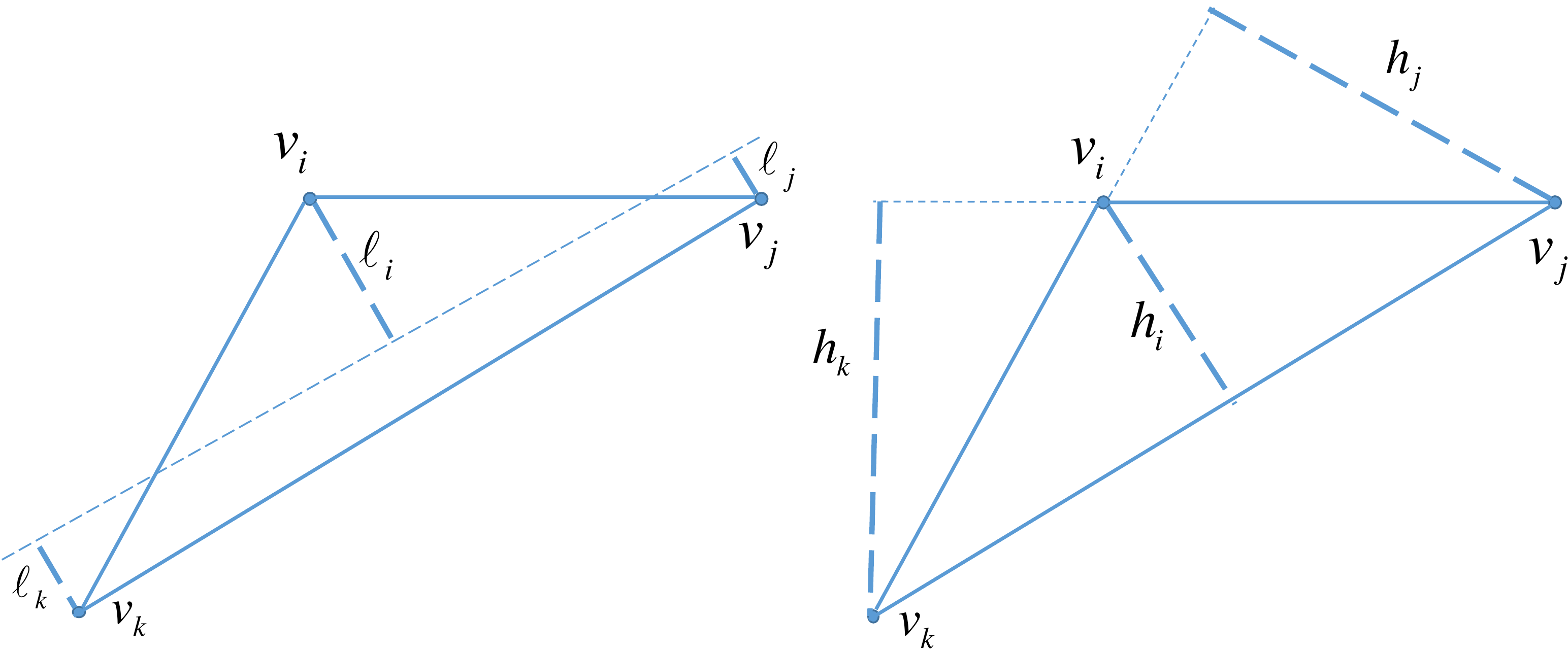}
 \end{minipage}
 \hspace{0.5cm}
 \begin{minipage}{0.3\textwidth}
 \begin{align*}
 &d_e^{\text{H}-1}=(\ell_i+\ell_j+\ell_k)/3  \\
 &d_e^{\text{H}-2}=(h_i+h_j+h_k)/3 \\
 &d_e^{\text{InH}}(\{v_i\})=h_i \\
 \end{align*}
 \end{minipage}
\caption{Illustration of the deviation ($p=1$) used for subspace segmentation.}
\label{subspaceill}
\end{figure}

 
All hypergraph-partitioning based subspace segmentation algorithms essentially use the NCut procedure described in the main text, but their performances vary due to different approaches for constructing the hypergraphs. Three steps in the clustering procedure are key to the performance quality: The first is to quantify the deviation to fit a collection of vectors into a affine subspace; the second is to choose the parameter $\theta$; the third is to sample $\psi$-tuples of vectors, i.e., choose the hyperedges of the hypergraph. For fairness of comparison, 
in all our experiments we computed an inhomogeneous deviation $d_e$ for the hyperedge $e$ instead of a homogeneous one in the first step, and kept the other two key steps the same as used in the standard literature. In particular, we performed hyperedge sampling uniformly at random for the experiments pertaining to $k$-line segmentation; we used the same hyperedge sampling procedure as that of SCC~\cite{chen2009spectral} for the experiments pertaining to motion segmentation. The reason for these two different types of settings are to assess the contribution of InH methods, rather than the sampling procedure.


Our first experiment pertains to segmenting $k$-lines in a 3D Euclidean space ($D=3, p=1, k=2,3,4$).
The $k$-lines all pass through the origin, and their directions, listed in Table~\ref{linedirection}, are such that the minimal angles between two lines are restricted to $30$ degree; $40$ points are sampled uniformly from the segment of each line lying in the unit ball so there are $40k$ points in total. Each point is independently corrupted by 3D mean-zero Gaussian noise with covariance matrix $\theta_n^2\mathbf{I}$. We determined the parameter $\theta$ through cross validation and uniformly at random picked $100\times k^2$ many triples. We computed the percentage of misclassified points based on $50$ independent tests; the misclassification rate is denoted by $e\%$ and the results are shown in Figure~\ref{klineperf}. The InH-partition only has $50\%$ of the misclassification errors of H-partition, provided that the noise is small ($\theta_n< 0.01$). To see why this may be the case, let us consider a triple of datapoints $\{v_i,v_j,v_k\}$ where $v_i$ and $v_j$ belong to the same cluster, while $v_k$ may belong to a different cluster. The line that goes through $v_i$ and $v_j$ is close to the true affine subspace when the noise is small and thus the distance from the third point $v_k$ to this line can serve as a precise indicator whether $v_k$ lies within the same true affine subspace. When the noise is high, the InH-partition also performs better when the number of classes is $k=2$, but starts to deteriorate in performance as $k$ increases. The reason behind this phenomena is as follows: Inhomogenous costs of a hyperedge provide more accurate information about the subspaces than the homogenous costs when at least two points of the hyperedge belong to the same line cluster. This is due to the definition of the deviation $d_e^{\text{inH}}$; but hyperedges of this type become less likely as $k$ increases.

\begin{table}
\centering
\caption{The directions of the $k$-lines.}
\vspace{0.1in}
\label{linedirection}
\begin{tabular}{c|c|c}
\hline
k =2 & k =3 & k =4    \\
\hline
\multirow{4}{*}{\begin{tabular}{c} (0.97,0.26,0.00) \\ (0.97,-0.26,0.00)\end{tabular}} & 
\multirow{4}{*}{\begin{tabular}{c} (0.95,0.30,0.00) \\ (0.95,-0.15,0.26) \\ (0.95,-0.15,-0.26) \end{tabular}} & 
\multirow{4}{*}{\begin{tabular}{c}  (0.93,0.37,0.00) \\ (0.93,0.00,0.37)  \\ (0.93,-0.37,0.00)  \\ (0.93,0.00,-0.37)  \end{tabular}}  \\
& &  \\
&  & \\
&  & \\
\hline
\end{tabular}
\end{table}

\begin{figure}[t]
\centering
\subfloat[]{
\includegraphics[trim={0.5cm 0cm 1.5cm 0.5cm},clip,width=.33\textwidth]{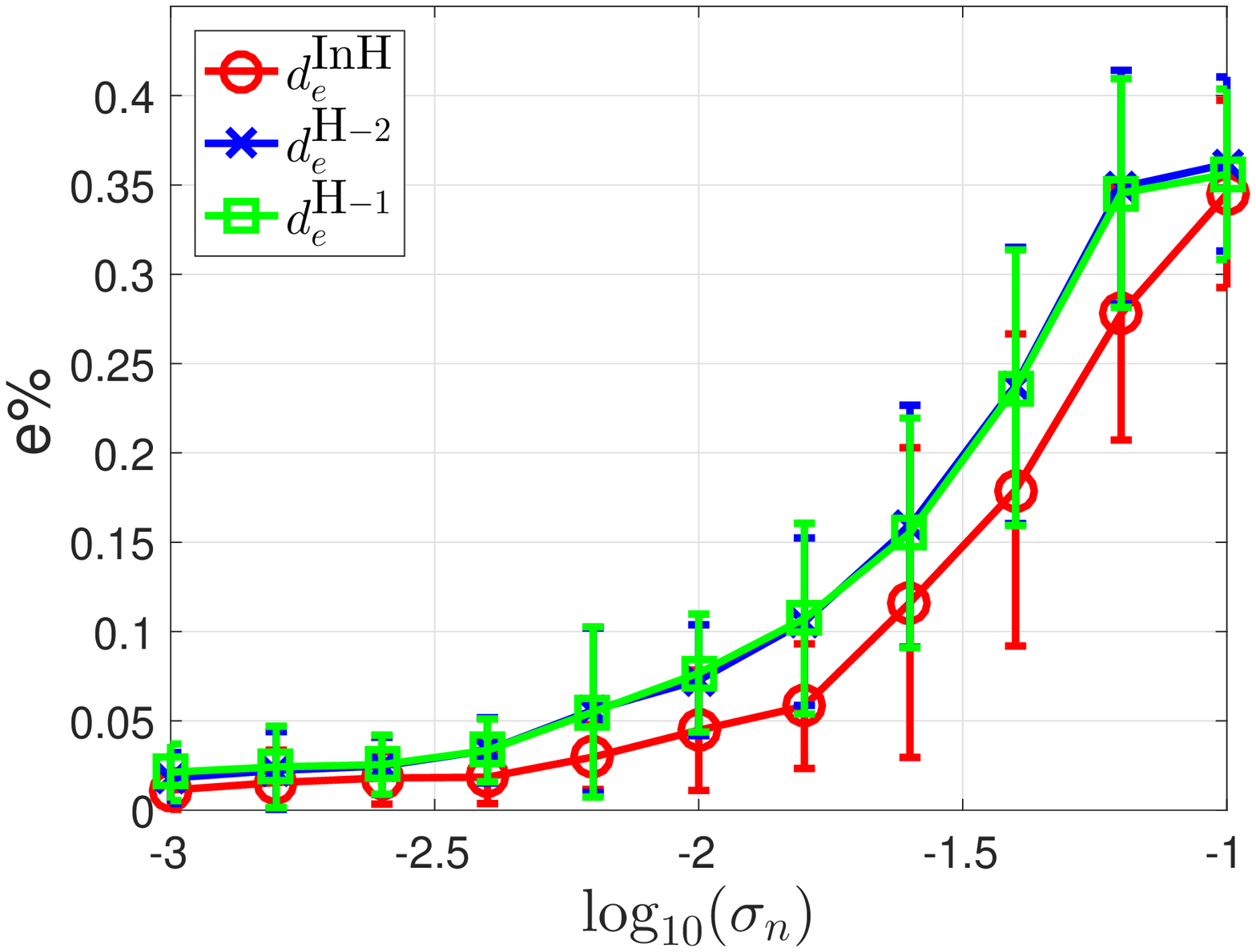}
}
\subfloat[]{
\includegraphics[trim={0.5cm 0cm 1.5cm 0.5cm},clip,width=.33\textwidth]{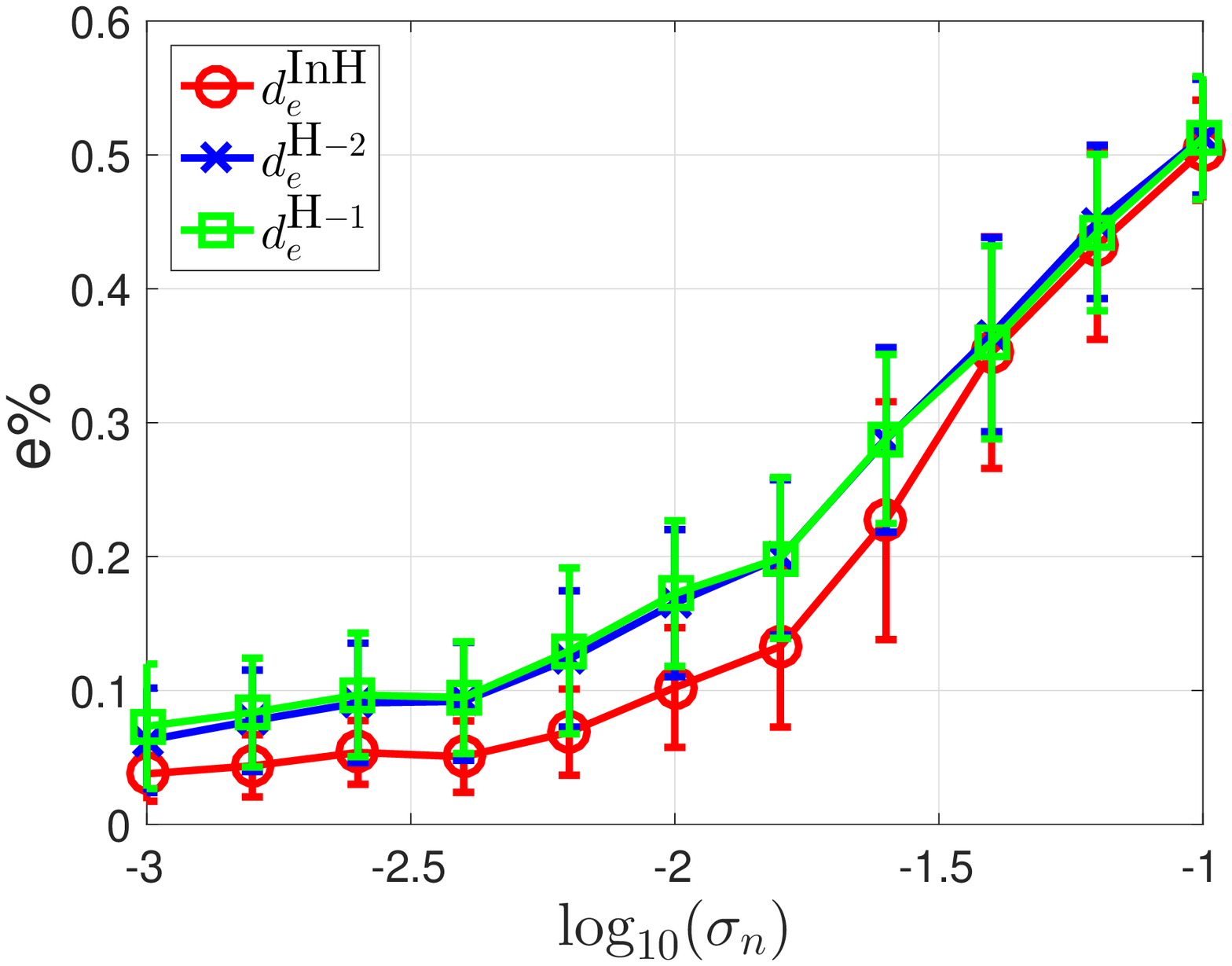}
}
\subfloat[]{
\includegraphics[trim={0.5cm 0cm 1.5cm 0.5cm},clip,width=.33\textwidth]{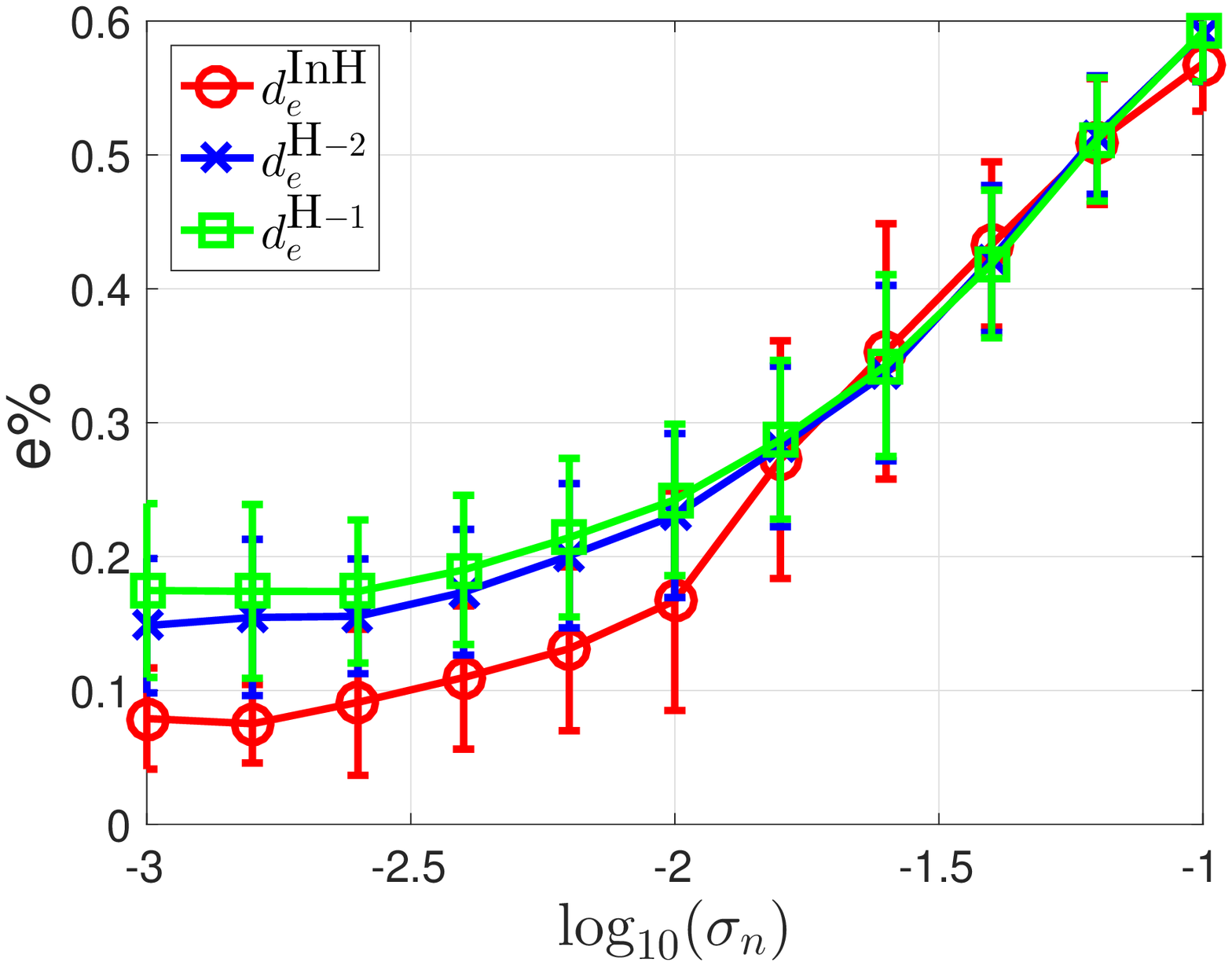}
}
\caption{Misclassification rate (mean and standard deviation) vs noise level: a) $k=2$; b) $k=3$; c) $k=4$.}
\label{klineperf}
\end{figure}

The second problem we investigated in the context of subspace clustering is motion segmentation. Motion segmentation, a widely used application in computer vision, is the task of clustering point trajectories extracted from a video of a scene according to different rigid-body motions. The problem can be reduced to a subspace clustering problem as all the trajectories associated with one motion lie in one specified 3D affine subspace ($p=3$)~\cite{costeira1998multibody}. We evaluate the performance of the InH-partition method over the well-known motion segmentation dataset, Hopkins155~\cite{tron2007benchmark}. This dataset consists of $155$ sequences of two and three motions from three categories of scenes: Checkerboard, traffic and articulated sequences. Our experiments show the InH-partition algorithm outperforms the benchmark algorithms based on the use of H-partitions over this dataset including spectral curvature clustering technique (SCC~\cite{chen2009spectral}). To make the comparison fair, we simply replaced the homogeneous distance \emph{polar curvature} in SCC with the inhomogeneous distance $d_e^{\text{InH}}$, the homogenous distances $d_e^{\text{H}-1}$ and $d_e^{\text{H}-2}$, and keep all other steps the same. 
We also evaluated the performance of some other methods, including Generalized PCA (GPCA)~\cite{vidal2005generalized}, Local Subspace Affinity~\cite{yan2006general}, Agglomerative Lossy Compression (ALC)~\cite{ma2007segmentation}, and Sparse Subspace Clustering (SSC)~\cite{elhamifar2009sparse}. The results based on the average over $50$ runs for each video are shown in Table~\ref{motionresult}.

As may be seen, InH-partition outperforms all methods except for SSC (not based on hypergraph clustering), which shows the superiority of replacing H-hyperedges with inhomogeneous ones. Although InH-partition fails to outperform SSC, it has significantly lower complexity and is much easier to use and implement in practice. In addition, some recent algorithms based on H-partitions may leverage the complex hyperedge-sampling steps for this application~\cite{purkait2016clustering}, and we believe that the InH-partition method can be further improved by changing the sampling procedure, and made more appropriate for inhomogeneous hypergraph clustering as opposed to SSC. This topic will be addressed elsewhere.

\begin{table}[H]
\scriptsize{
\centering 
\caption{Misclassification rates $e\%$ for the Hopkins 155 dataset. (MN: mean; MD: median)}
\vspace{0.1in}
\label{motionresult}
\begin{tabular}{|p{0.9cm}<{\centering}|p{0.32cm}<{\centering}p{0.32cm}<{\centering}|p{0.32cm}<{\centering}p{0.32cm}<{\centering}|p{0.32cm}<{\centering}p{0.32cm}<{\centering}|p{0.32cm}<{\centering}p{0.32cm}<{\centering}|p{0.40cm}<{\centering}p{0.40cm}<{\centering}|p{0.40cm}<{\centering}p{0.40cm}<{\centering}|p{0.40cm}<{\centering}p{0.40cm}<{\centering}|p{0.40cm}<{\centering}p{0.40cm}<{\centering}|}
\hline
		& \multicolumn{8}{c|}{Two Motions} & \multicolumn{8}{c|}{Three Motions} \\
\hline
Method	& \multicolumn{2}{c|}{Chck.(78)} & \multicolumn{2}{c|}{Trfc.(31)} & \multicolumn{2}{c|}{Artc.(11)} & \multicolumn{2}{c|}{All(120)} 
		& \multicolumn{2}{c|}{Chck.(26)} & \multicolumn{2}{c|}{Trfc.(7)} & \multicolumn{2}{c|}{Artc.(2)} & \multicolumn{2}{c|}{All(115)} \\
& MN & MD & MN & MD & MN & MD & MN & MD & MN & MD & MN & MD & MN & MD & MN & MD \\
GPCA~\cite{vidal2005generalized} & 6.09 & 1.03 &1.41 & 0.00 & 2.88 & 0.00 & 4.59 & 0.38 & 31.95 & 32.93 & 19.83 & 19.55 & 16.85 & 16.85 & 28.66 &28.26 \\
LSA~\cite{yan2006general} & 2.57 & 0.27 & 5.43 & 1.48 & 4.10 & 1.22 & 3.45 & 0.59 & 5.80 & 1.77 & 25.07 & 23.79 & 7.25 & 7.25 & 9.73&2.33 \\
ALC~\cite{ma2007segmentation} &  1.49 & 0.27 & 1.75 & 1.51 & 10.70 & 0.95 & 2.40 & 0.43 & 5.00 & 0.66 & 8.86 & 0.51 & 21.08 & 21.08 & 6.69 & 0.67 \\
SSC~\cite{elhamifar2009sparse} & 1.12 & 0.00 & 0.02 & 0.00 & 0.62 & 0.00 & 0.82 & 0.00 & 2.97 & 0.27 & 0.58 & 0.00 & 1.42 & 1.42 & 2.45 & 0.20 \\
SCC~\cite{chen2009spectral} & 1.77 & 0.00 & 0.63 & 0.14 & 4.02 & 2.13 & 1.68 & 0.07 & 6.23 & 1.70 & 1.11 & 1.40 & 5.41 & 5.41 & 5.16 & 1.58\\
H+$d_e^{H-1}$ & 12.27 & 5.06 & 14.91 & 9.94 & 12.85 & 3.66 & 12.92 & 6.01& 22.13 & 23.98 & 21.99 & 18.12 & 19.79 & 19.79 &  21.97 & 20.45\\
H+$d_e^{H-2}$ & 4.20 & 0.43 & 0.33 & 0.00 & 1.53 & 0.10 & 2.93 & 0.06 & 7.05 & 2.22 & 7.02 & 3.98 & 6.47 & 6.47 & 7.01 & 2.12\\
InH-par &  1.69 & 0.00 & 0.61 & 0.22 & 1.22 & 0.62 & 1.40 & 0.04 & 4.82 & 0.69 &  2.46 & 0.60 & 4.23 & 4.23  & 4.06 & 0.65 \\
\hline
\end{tabular}}
\end{table}

\newpage
\section{Supplementary Tables}
\begin{table}[h]
\centering
\caption{\small Inhomogeous cost functions $w_e^{(r)}(S)$ for $\delta(e)\in\{4,5,6\}$.}
\vspace{0.1in}
\label{omega}
\scriptsize
\begin{tabu}{|[2pt]c|p{0.5cm}<{\centering}p{1.2cm}<{\centering}p{1.2cm}<{\centering}p{1.2cm}<{\centering}p{1.2cm}<{\centering}p{1.2cm}<{\centering}p{1.2cm}<{\centering}p{1.2cm}<{\centering}p{0.5cm}<{\centering}|[2pt]}
\tabucline[2pt]{-}
\multicolumn{10}{|[2pt]c|[2pt]}{$\delta(e)=4,\,\beta^{(e)}=3/2$} \\
\tabucline[2pt]{-}
\multirow{2}{*}{r} & \multicolumn{9}{c|[2pt]}{$S$} \\
\cline{2-10}
& &$1$ & $2$ & $3$ & $4$ & $1,2$ & $1,3$ & $1,4$ & \\
\hline
1 &&0 & 1 & 1 & 1 & 1 & 1 & 1  & \\
 2 &&0 & 0 & 1 & 1 & 0 & 1 & 1  & \\
3 &&1 & 1 & 1 & 1 & 1 & 1 & 1  & \\
4 && 1 & 1 & 1 & 1 & 2 & 2 & 2  & \\
\tabucline[2pt]{-}
\end{tabu}
\begin{tabu}{|[2pt]c|p{0.45cm}<{\centering}p{0.46cm}<{\centering}p{0.46cm}<{\centering}p{0.46cm}<{\centering}p{0.46cm}<{\centering}p{0.46cm}<{\centering}p{0.46cm}<{\centering}p{0.46cm}<{\centering}p{0.46cm}<{\centering}p{0.46cm}<{\centering}p{0.46cm}<{\centering}p{0.46cm}<{\centering}p{0.46cm}<{\centering}p{0.46cm}<{\centering}p{0.45cm}<{\centering}|[2pt]}
\tabucline[2pt]{-}
\multicolumn{16}{|[2pt]c|[2pt]}{$\delta(e)=5,\,\beta^{(e)}=2$} \\
\tabucline[2pt]{-}
\multirow{2}{*}{r} & \multicolumn{15}{c|[2pt]}{$S$} \\
\cline{2-16}
& $1$ & $2$ & $3$ & $4$ & $5$ & $1,2$ & $1,3$ & $1,4$ & $1,5$ & $2,3$ & $2,4$  & $2,5$ & $3,4$ & $3,5$ & $4,5$\\
\hline
1 &0 & 1 & 1 & 1 & 1 & 1 & 1 & 1 & 1 & 1 & 1 & 1 & 1 & 1 & 1   \\
 2 &0 & 1 & 1 & 1 & 1 & 1 & 1 & 1 & 1 & 2 & 2 & 2 & 2 & 2 & 2  \\
3 &1 & 1 & 1 & 1 & 1 & 1 & 1 &1 & 1 & 1 & 1 & 1 & 1 & 1 & 1  \\
4 & 1 & 1 & 1 & 1&1 & 1 & 2 & 2 & 2 & 2 & 2 & 2 & 1 & 1 & 1  \\
5 & 1 & 1 & 1 & 1&1 & 0 & 2 & 2 & 2 & 2 & 2 & 2 & 1 & 1 & 1  \\
6 & 1 & 1 & 1 & 1&1 & 2 & 2 & 2 & 2 & 2 & 2 & 2 & 2 & 2 & 2  \\
\tabucline[2pt]{-}
\end{tabu}
\begin{tabu}{|[2pt]c|p{0.45cm}<{\centering}p{0.46cm}<{\centering}p{0.46cm}<{\centering}p{0.46cm}<{\centering}p{0.46cm}<{\centering}p{0.46cm}<{\centering}p{0.46cm}<{\centering}p{0.46cm}<{\centering}p{0.46cm}<{\centering}p{0.46cm}<{\centering}p{0.46cm}<{\centering}p{0.46cm}<{\centering}p{0.46cm}<{\centering}p{0.46cm}<{\centering}p{0.45cm}<{\centering}|[2pt]}
\tabucline[2pt]{-}
\multicolumn{16}{|[2pt]c|[2pt]}{$\delta(e)=6,\,\beta^{(e)}=4$} \\
\tabucline[2pt]{-}
\multirow{2}{*}{r} & \multicolumn{15}{c|[2pt]}{$S$} \\
\cline{2-16}
& $1$ & $2$ & $3$ & $4$ & $5$ & $6$  & $1,2$ & $1,3$ & $1,4$ & $1,5$ & $1,6$ & $2,3$ & $2,4$  & $2,5$ & $2,6$ \\
\hline
1 &0 & 1 & 1 & 1 & 1 & 1 & 1 & 1 & 1 & 1 & 1 & 1 & 1 & 1 & 1   \\
 2 &0 & 1 & 1 & 1 & 1 & 1 & 1 & 1 & 1& 1 & 1 & 2 & 2 & 2 & 2   \\
3 &1 & 1 & 1 & 1 & 1 & 1 & 1 &1 & 1 & 1 & 1 & 1 & 1 & 1 & 1  \\
4 & 1 & 1 & 1 & 1&1 & 1 & 2 & 2 & 2 & 2 & 2 & 2 & 2 & 2 & 2  \\
5 & 1 & 1 & 1 & 1 & 1 & 1&1 & 2 & 2 & 2 & 2 & 2 & 2 & 2 & 2 \\
6 & 1 & 1 & 1 & 1&1 & 1 & 0 & 2 & 2 & 2 & 2 & 2 & 2 & 2 & 2  \\
7 & 1 & 1 & 1 & 1&1 & 1 & 1 & 1 & 2 & 2 & 2 & 1 & 2 & 2 & 2  \\
8 & 1 & 1 & 1 & 1&1 & 1 & 2 & 2 & 2 & 2 & 2 & 2 & 2 & 2 & 2  \\
9 & 1 & 1 & 1 & 1&1 & 1 & 1 & 1 & 2 & 2 & 2 & 1 & 2 & 2 & 2  \\
\hline
\end{tabu}
\begin{tabu}{|[2pt]c|p{0.68cm}<{\centering}p{0.68cm}<{\centering}p{0.68cm}<{\centering}p{0.68cm}<{\centering}p{0.68cm}<{\centering}p{0.68cm}<{\centering}p{0.68cm}<{\centering}p{0.68cm}<{\centering}p{0.68cm}<{\centering}p{0.68cm}<{\centering}p{0.68cm}<{\centering}p{0.68cm}<{\centering}|[2pt]}
\multirow{2}{*}{r} &  \multicolumn{12}{c|[2pt]}{$S$} \\
\cline{2-13}
& $3,4$  & $3,5$ & $3,6$ & $4,5$ & $4,6$ & $5,6$  & $1,2,3$ & $1,2,4$ & $1,2,5$ & $1,2,6$ & $1,3,4$  & $1,3,5$  \\
\hline
1 &1 & 1 & 1 & 1 & 1 & 1 & 1 & 1 & 1 & 1 & 1 & 1   \\
2 & 2 & 2 & 2 & 2 & 2 & 2 & 2 & 2 & 2 & 2 & 2 & 2  \\
3 &1 & 1 & 1 & 1 & 1 & 1 & 1 & 1 & 1 & 1 & 1 & 1  \\
4 & 2 & 2 & 2 & 2 & 2 & 2 & 3 & 3 & 3 & 3 & 3 & 3  \\
5 & 1 & 1 & 1 & 1 & 1 & 1&1 & 1 & 1 & 1 & 2 & 2 \\
6 & 1 & 1 & 1 & 1 & 1 & 1&1 & 1 & 1 & 1 & 2 & 2  \\
7 & 2 & 2 & 2 & 1&1 & 1 & 1 & 2 & 2 & 2 & 2 & 2\\
8 & 2 & 2 & 2 & 2&2 & 2 & 1 & 3 & 3 & 3 & 3 & 3   \\
9 & 2 & 2 & 2 & 1&1 & 1 & 0 & 2 & 2 & 2 & 2 & 2  \\
\hline
\end{tabu}
\begin{tabu}{|[2pt]c|p{1.34cm}<{\centering}p{2cm}<{\centering}p{2cm}<{\centering}p{2cm}<{\centering}p{2cm}<{\centering}p{1.34cm}<{\centering}|[2pt]}
\multirow{2}{*}{r} & & \multicolumn{4}{c}{$S$}& \\
\cline{2-7}
& & $1,3,6$  & $1,4,5$ & $1,4,6$ & $1,5,6$ & \\
\hline
1& & 1 & 1 & 1 & 1 &  \\
2& & 2 & 2 & 2 & 2 & \\
3& & 1 & 1 & 1 & 1 &\\
4& & 3 & 3 & 3 & 3 & \\
5& & 2 & 2 & 2 & 2 &\\
6& & 2 & 2 & 2 & 2 & \\
7& & 2 & 2 & 2 & 2 &\\
8& & 3 & 3 & 3 & 3 &  \\
9& & 2 & 2 & 2 & 2 & \\
\tabucline[2pt]{-}
\end{tabu}
\end{table}

\newpage

\begin{table}[H]
\centering
\caption{Species in the Florida Bay foodweb with biological classification and assigned clusters. Cluster labels and colors correspond to the clusters shown in Figure~\ref{networkmotif}. For InH-partition, in the first-level clustering, the species Roots is the only singleton while in the second-level clustering, the species Kingfisher, Hawksbill Turtle and Manatee are singletons.}
\vspace{0.1in}
\label{foodwebclustable}
\footnotesize
\begin{tabular}{|c|c|c|c|}
\hline
Species 	 &	 Biological Classification 	 & Cluster (Ours) &  Cluster (Benson's~\cite{benson2016higher}) \\
\hline
 Roots	 &	producers (no predator)	 & Singleton 	 & 	 Singleton \\
$2\mu$m Spherical cya	 &	phytoplankton producers	 & Green	 & 	 Singleton \\
Synedococcus	 &	phytoplankton producers	 & Green	 & 	 Singleton \\
Oscillatoria	 &	phytoplankton producers	 & Green	 & 	 Singleton \\
Small Diatoms ($<20\mu$m)	 &	phytoplankton producers	 & Green	 & 	 Singleton \\
Big Diatoms ($>20\mu$m)	 &	phytoplankton producers	 & Green	 & 	 Singleton \\
Dinoflagellates	 &	phytoplankton producers	 & Green	 & 	 Singleton \\
Other Phytoplankton	 &	phytoplankton producers	 & Green	 & 	 Singleton \\
Free Bacteria	 &	producers	 & Green	 & 	 Green \\
Water Flagellates	 &	producers	 & Green	 & 	 Green \\
Water Cilitaes	 &	producers	 & Green	 & 	 Green \\
Kingfisher	 &	bird (no predator)	 & Singleton   & 	 Singleton \\
Hawksbill Turtle	 &	reptiles (no predator)	 & Singleton	 & 	 Singleton \\
Manatee	 &	mammal (no predator)	 & Singleton	 & 	 Singleton \\
Rays	 &	fish	 & Blue	 & 	 Singleton \\
Bonefish	 &	fish	 & Blue	 & 	 Singleton \\
Lizardfish	 &	fish	 & Blue	 & 	 Red \\
Catfish	 &	fish	 & Blue	 & 	 Blue \\
Eels	 &	fish	 & Blue	 & 	 Red \\
Brotalus	 &	fish	 & Blue	 & 	 Blue \\
Needlefish	 &	fish	 & Blue	 & 	 Yellow \\
Snook	 &	fish	 & Blue	 & 	 Singleton \\
Jacks	 &	fish	 & Blue	 & 	 Singleton \\
Pompano	 &	fish	 & Blue	 & 	 Singleton \\
Other Snapper	 &	fish	 & Blue	 & 	 Singleton \\
Gray Snapper	 &	fish	 & Blue	 & 	 Singleton \\
Grunt	 &	fish	 & Blue	 & 	 Singleton \\
Porgy	 &	fish	 & Blue	 & 	 Singleton \\
Scianids	 &	fish	 & Blue	 & 	 Singleton \\
Spotted Seatrout	 &	fish	 & Blue	 & 	 Singleton \\
Red Drum	 &	fish	 & Blue	 & 	 Singleton \\
Spadefish	 &	fish	 & Blue	 & 	 Singleton \\
Flatfish	 &	fish	 & Blue	 & 	 Blue \\
Filefish	 &	fish	 & Blue	 & 	 Singleton \\
Puffer	 &	fish	 & Blue	 & 	 Singleton \\
Other Pelagic fish	 &	fish	 & Blue	 & 	 Yellow \\
Small Herons \& Egrets	 &	bird	 & Blue	 & 	 Singleton \\
Ibis	 &	bird	 & Blue	 & 	 Singleton \\
Roseate Spoonbill	 &	bird	 & Blue	 & 	 Singleton \\
Herbivorous Ducks	 &	bird	 & Blue	 & 	 Singleton \\
Omnivorous Ducks	 &	bird	 & Blue	 & 	 Singleton \\
Gruiformes	 &	bird	 & Blue	 & 	 Singleton \\
Small Shorebird	 &	bird	 & Blue	 & 	 Singleton \\
Gulls \& Terns	 &	bird	 & Blue	 & 	 Singleton \\
Loggerhead Turtle	 &	reptiles (no predator)	 & Blue	 & 	 Singleton \\
Sharks	 &	fish (no predator)	 & Purple	 & 	 Singleton \\
Tarpon	 &	fish	 & Purple	 & 	 Singleton \\
Grouper	 &	fish (no predator)	 & Purple	 & 	 Singleton \\
Mackerel	 &	fish (no predator)	 & Purple	 & 	 Singleton \\
Barracuda	 &	fish	 & Purple	 & 	 Singleton \\
Loon	 &	bird (no predator)	 & Purple	 & 	 Singleton \\
Greeb	 &	bird (no predator)	 & Purple	 & 	 Singleton \\
Pelican	 &	bird	 & Purple	 & 	 Singleton \\
Comorant	 &	bird	 & Purple	 & 	 Singleton \\
Big Herons \& Egrets	 &	bird	 & Purple	 & 	 Singleton \\
Predatory Ducks	 &	bird (no predator)	 & Purple	 & 	 Singleton \\
Raptors	 &	bird (no predator)	 & Purple	 & 	 Singleton \\
Crocodiles	 &	reptiles (no predator)	 & Purple	 & 	 Singleton \\
SingleDolphin	 &	mammal (no predator)	 & Purple	 & 	 Singleton \\
\hline
\end{tabular}
\end{table}
\begin{table}[H]
\centering 
\footnotesize
\begin{tabular}{|c|c|c|c|}
\hline
Species 	 &	 Biological Classification 	 &  Cluster Labels  &  Cluster(Benson's~\cite{benson2016higher}) \\
\hline
Benthic microalgea	 &	algea producers	 & Yellow	 & 	 Blue \\
Thalassia	 &	seagrass producers	 & Yellow	 & 	 Blue \\
Halodule	 &	seagrass producers	 & Yellow	 & 	 Blue \\
Syringodium	 &	seagrass producers	 & Yellow	 & 	 Blue \\
Drift Algae	 &	algea  producers	 & Yellow	 & 	 Blue \\
Epiphytes	 &	algea producers	 & Yellow	 & 	 Blue \\
Acartia Tonsa	 &	zooplankton invertebrates	 & Yellow	 & 	 Green \\
Oithona nana	 &	zooplankton invertebrates	 & Yellow	 & 	 Green \\
Paracalanus	 &	zooplankton invertebrates	 & Yellow	 & 	 Green \\
Other Copepoda	 &	zooplankton invertebrates	 & Yellow	 & 	 Green \\
Meroplankton	 &	zooplankton invertebrates	 & Yellow	 & 	 Green \\
Other Zooplankton	 &	zooplankton invertebrates	 & Yellow	 & 	 Green \\
Benthic Flagellates	 &	invertebrates	 & Yellow	 & 	 Blue \\
Benthic Ciliates	 &	invertebrates	 & Yellow	 & 	 Blue \\
Meiofauna	 &	invertebrates	 & Yellow	 & 	 Blue \\
Sponges	 &	macro-invertebrates	 & Yellow	 & 	 Green \\
Bivalves	 &	macro-invertebrates	 & Yellow	 & 	 Blue \\
Detritivorous Gastropods	 &	macro-invertebrates	 & Yellow	 & 	 Blue \\
Epiphytic Gastropods	 &	macro-invertebrates	 & Yellow	 & 	 Singleton \\
Predatory Gastropods	 &	macro-invertebrates	 & Yellow	 & 	 Blue \\
Detritivorous Polychaetes	 &	macro-invertebrates	 & Yellow	 & 	 Blue \\
Predatory Polychaetes	 &	macro-invertebrates	 & Yellow	 & 	 Blue \\
Suspension Feeding Polych	 &	macro-invertebrates	 & Yellow	 & 	 Blue \\
Macrobenthos	 &	macro-invertebrates	 & Yellow	 & 	 Blue \\
Benthic Crustaceans	 &	macro-invertebrates	 & Yellow	 & 	 Blue \\
Detritivorous Amphipods	 &	macro-invertebrates	 & Yellow	 & 	 Blue \\
Herbivorous Amphipods	 &	macro-invertebrates	 & Yellow	 & 	 Blue \\
Isopods	 &	macro-invertebrates	 & Yellow	 & 	 Blue \\
Herbivorous Shrimp	 &	macro-invertebrates	 & Yellow	 & 	 Red \\
Predatory Shrimp	 &	macro-invertebrates	 & Yellow	 & 	 Blue \\
Pink Shrimp	 &	macro-invertebrates	 & Yellow	 & 	 Blue \\
Thor Floridanus	 &	macro-invertebrates	 & Yellow	 & 	 Singleton \\
Detritivorous Crabs	 &	macro-invertebrates	 & Yellow	 & 	 Red \\
Omnivorous Crabs	 &	macro-invertebrates	 & Yellow	 & 	 Blue \\
Green Turtle	 &	reptiles	 & Yellow	 & 	 Singleton \\
Coral	 &	macro-invertebrates	 & Red	 & 	 Singleton \\
Other Cnidaridae	 &	macro-invertebrates	 & Red	 & 	 Blue \\
Echinoderma	 &	macro-invertebrates	 & Red	 & 	 Blue \\
Lobster	 &	macro-invertebrates	 & Red	 & 	 Singleton \\
Predatory Crabs	 &	macro-invertebrates	 & Red	 & 	 Red \\
Callinectus sapidus	 &	macro-invertebrates	 & Red	 & 	 Red \\
Stone Crab	 &	macro-invertebrates	 & Red	 & 	 Singleton \\
Sardines	 &	fish	 & Red	 & 	 Yellow \\
Anchovy	 &	fish	 & Red	 & 	 Yellow \\
Bay Anchovy	 &	fish	 & Red	 & 	 Yellow \\
Toadfish	 &	fish	 & Red	 & 	 Blue \\
Halfbeaks	 &	fish	 & Red	 & 	 Yellow \\
Other Killifish	 &	fish	 & Red	 & 	 Singleton \\
Goldspotted killifish	 &	fish	 & Red	 & 	 Yellow \\
Rainwater killifish	 &	fish	 & Red	 & 	 Yellow \\
Sailfin Molly	 &	fish	 & Red	 & 	 Singleton \\
Silverside	 &	fish	 & Red	 & 	 Yellow \\
Other Horsefish	 &	fish	 & Red	 & 	 Singleton \\
Gulf Pipefish	 &	fish	 & Red	 & 	 Singleton \\
Dwarf Seahorse	 &	fish	 & Red	 & 	 Singleton \\
Mojarra	 &	fish	 & Red	 & 	 Singleton \\
Pinfish	 &	fish	 & Red	 & 	 Singleton \\
Parrotfish	 &	fish	 & Red	 & 	 Singleton \\
Mullet	 &	fish	 & Red	 & 	 Blue \\
Blennies	 &	fish	 & Red	 & 	 Blue \\
Code Goby	 &	fish	 & Red	 & 	 Red \\
Clown Goby	 &	fish	 & Red	 & 	 Red \\
Other Demersal Fish	 &	fish	 & Red	 & 	 Blue \\
\hline
\end{tabular}
\end{table}

\end{document}